\newtheorem{theorem}{Theorem}
\newtheorem{lemma}[theorem]{Lemma}
\newtheorem{definition}[theorem]{Definition}
\newtheorem*{rep@theorem}{\rep@title}
\newcommand{\newreptheorem}[2]{%
\newenvironment{rep#1}[1]{%
 \def\rep@title{#2 \ref{##1}}%
 \begin{rep@theorem}}%
 {\end{rep@theorem}}}
\newcommand{\dan}[2][]{\todo[linecolor=blue,backgroundcolor=blue!25,bordercolor=blue,#1]{{\bf Dan:} #2}}
\newcommand{\giorgi}[2][]{\todo[linecolor=green,backgroundcolor=green!25,bordercolor=green,#1]{{\bf Giorgi:} #2}}
\newcommand{\modl}{parameter\xspace}
\newcommand{\modls}{parameters\xspace}
\newcommand{\tg}{\widetilde{G}}
\newcommand{\fadd}{\texttt{fetch\&add}}
\newcommand{\norm}[1]{\left\|#1\right\|}
\renewcommand{\paragraph}[1]{\noindent\textbf{#1}}
\DeclareRobustCommand{\qed}{%
	\ifmmode % if math mode, assume display: omit penalty etc.
	\else \leavevmode\unskip\penalty9999 \hbox{}\nobreak\hfill
	\fi
	\quad\hbox{\qedsymbol}}
\newcommand{\proofsname}{Proof-Sketch}
\title{Elastic Consistency: A Practical Consistency Model for Distributed Stochastic Gradient Descent}
\begin{document}

\author{
	Giorgi Nadiradze \\
	IST Austria
	\and
	Ilia Markov \\
	IST Austria 
	\and
	Bapi Chatterjee \\ IST Austria
	\and
	Vyacheslav Kungurtsev \\
	Czech Technical University in Prague\\
	\and
	Dan Alistarh \\ 
	IST Austria
    %% examples of more authors
}
\date{}

%\hideLIPIcs

% \Copyright{Dan Alistarh,  Bapi Chatterjee and Vyacheslav Kungurtsev}

% \ccsdesc[500]{Mathematics of computing~Combinatorics}

\maketitle

\vspace{-3em}
\begin{abstract}
	%Machine learning has made tremendous progress in recent years.
%, with models matching or even surpassing humans on a series of specialized tasks. 
One key 
element behind the progress of machine learning in recent years has been the ability to train machine learning models in large-scale distributed shared-memory and message-passing 
environments. 
Most of these models are trained  employing variants of stochastic gradient descent (SGD) based optimization. 
In this paper, we introduce a general consistency condition covering communication-reduced and asynchronous 
distributed SGD implementations. 
Our framework, called elastic 
consistency, decouples the system-specific aspects of the implementation from the SGD convergence requirements, 
giving a general way to obtain 
convergence bounds for a wide variety of distributed SGD methods used in practice. 
Elastic consistency can be used to re-derive or improve several previous convergence bounds in message-passing and shared-memory settings, but also to analyze new models and distribution schemes. 
In particular, we propose and analyze a new synchronization-avoiding scheme for distributed SGD, and show that it can be used to efficiently train deep convolutional models for image classification.

% in training dense neural networks.

\end{abstract}

%%%%%%%%%%%%%%%%%%%%%%%%%%%%%%%%%%%%%%%%%%%%%%%%%%%%%%%%%%%%%%%%%%%%
% \vspace{-1.5em}
\section{Introduction}
% \vspace{-0.5em}
Machine learning models can match or surpass humans on specialized tasks such as image classification~\cite{AlexNet,he2016deep}, speech recognition~\cite{seide2014sgd1bit}, or complex games~\cite{AlphaGo}. 
One key tool behind this progress is the \emph{stochastic gradient descent (SGD)} family of methods~\cite{robbins1951stochastic}, which are by and large the method of choice for training large-scale machine learning models. 
Essentially, SGD can serve to minimize a $d$-dimensional function $f: \R^d \to \R$, assumed to be differentiable. We commonly assume that we are given access to (possibly noisy) gradients of this function, denoted by $\tg$. 
Sequential SGD will start at a randomly chosen point $\vec{x}_0$, say $0^d$, and converge towards a minimum of the function by iterating the following procedure: 
\begin{equation}    
\label{eq:sgd}
 \vec{x}_{t + 1} = \vec{x}_{t} - {\alpha} \tg(\vec{x}_t)
\end{equation}
\noindent where $\vec{x}_t$ is the current estimate of the optimum, also called the \emph{parameter} and ${\alpha}$ is the \emph{learning rate}. If the function is convex, this procedure is known to converge to the minimum of the function~\cite{Bubeck}, whereas in the non-convex case, it will converge towards a point of zero gradient~\cite{ghadimi-lan}. 
In \emph{supervised learning},  $f$  is usually the total error of a given \modl $\vec{x}$  on a given dataset $\mathcal{D}$. For each sample $s$ in $\mathcal{D}$, the classification error is encoded via the loss $\ell(s, \vec{x})$. 
Training minimizes the function $f(\vec{x}) = \frac{1}{m} \sum_{s \in \mathcal{D}} \ell(s, \vec{x}),$
where $m$ is the size of the dataset, and the gradient at a randomly chosen datapoint $\tg$ is an unbiased estimator of $\nabla f$. 

Due to the size of datasets, it is common to \emph{distribute} the optimization process across multiple processors.  
A standard way of parallelizing SGD is to process a \emph{batch} of samples  in parallel, dividing the computation of gradient updates among  processors. Assume for simplicity that each processor is allotted one sample, whose corresponding gradient it computes  with respect to the current \modl $\vec{x}_t$. Processors then \emph{sum} their stochastic gradients, and update their local \modls by the resulting sum, leading to the following global iteration:
\begin{equation}    
\label{eq:parallel-sgd}
    \vec{x}_{t + 1} = \vec{x}_{t} - \frac{\alpha}{p} \sum_{i = 1}^p \tg^i(\vec{x}_t), 
\end{equation}
\noindent where $\tg^i$ is the stochastic gradient obtained at the processor $i$ at the given step, and $p$ is the batch size, equal to the number of processors. 
Since this sum is the same at every processor, this procedure yields the same, perfectly consistent, \modl at each processor at the end of every parallel iteration. 
The average $(1 / p) \sum_{i = 1}^p \tg^i(\vec{x}_t)$ is still a stochastic gradient, but with \emph{lower variance} than gradients at single samples, which can lead to better convergence~\cite{Bubeck}. Since samples are now processed in parallel,  the number of samples processed per second should in theory be multiplied by $p$. 

However, in practice, maintaining perfect consistency of the \modl $\vec{x}_t$ can negate the benefits of parallelization. 
Keeping the \modls perfectly consistent has a \emph{communication cost}: since the size of gradient updates is \emph{linear} in the size of the \modl, the resulting communication may easily become a system bottleneck for large-scale models, which can have millions of parameters~\cite{AlexNet,QSGD}. 
Consistency also induces a \emph{synchronization cost}, since processors need to synchronize in a barrier-like fashion upon each iteration update, which can occur every few milliseconds. 
For this reason, there have been several proposals for relaxing the consistency requirements
of SGD-like iterations, under various system constraints. These proposals can be broadly categorized as follows: 

\begin{itemize}[nolistsep, nosep, leftmargin=4mm]
    \item \textbf{Asynchronous Methods:} Such implementations~\cite{hogwild, lian2015asynchronous} allow  processors to forgo the barrier-like synchronization step performed at each iteration, or even across parameter components, and move forward with computation without waiting for potentially slow straggler processors. 

    \item \textbf{Communication Compression:} These methods aim to reduce the bandwidth cost of exchanging the gradients. This usually entails performing (possibly lossy) compression of the gradients before transmission, followed by efficient encoding and reduction/summation, and decoding on the receiver end. 
    This can be either via bit-width reduction (quantization), e.g.~\cite{seide2014sgd1bit,QSGD}, or via structured sparsification of the updates~\cite{lin2017deep,aji2017sparse, TopK, stich2019error}. 
\end{itemize}

Additional approaches exist, for instance to reduce the \emph{frequency} of communication via large-batch methods or local steps, e.g.~\cite{goyal2017accurate, lin2018don, stich2018local}. 
Another axis controls parameter maintenance: 
\emph{centralized} methods such as the \emph{parameter server}~\cite{PS} maintain the parameter at a single entity, whereas \emph{decentralized} methods~\cite{lian2017can} have each processor maintain their own version of the model. 

The question of providing convergence bounds for distributed optimization goes back to the foundational work of Bertsekas and Tsitsiklis~\cite{BT}, and has recently risen to prominence~\cite{dean2012large,ho2013more,chilimbi2014project,hogwild, ho2013more, desa2015hogwild, lian2015asynchronous,  chaturapruek2015asynchronous, leblond2016asaga}. 
However, many of these proofs are often specialized to the algorithm and models, and do not generalize to different settings. 
It is therefore natural to ask: are there generic conditions covering all natural consistency relaxations for SGD, under which one can prove convergence? 
%More broadly, is there a general consistency condition which can cover both \emph{shared-memory} and \emph{message-passing} systems as well as  \emph{asynchronous}, \emph{communication-compression}, \emph{communication-reduction} and \emph{decentralized} methods? 

\paragraph{Contribution.} 
In this paper, we introduce a convergence criterion for SGD-based optimization called \emph{elastic consistency}, which is independent of the system model, but can be specialized to cover various distributed system models consistency relaxations. 

In a nutshell, elastic consistency says that, for SGD to converge, it is sufficient that the distance  between 
the \emph{view} of the parameter perceived by a processor, with respect to which the gradient is taken, and the ``true’’ view of the system, 
corresponding to all the updates to the parameter generated up to that point  
by all processors, be uniformly bounded across iterations, and decreasing proportionally to the learning rate. 
Intuitively, in this case, the perturbed iterates do not stray ``too far'' from eachother, and can still globally converge. 
To our knowledge, elastic consistency is satisfied in most settings where asynchronous or communication-reduced methods have been 
analyzed so far, although proving this property for some systems is not always immediate. 

Elastic consistency provides a unified analysis framework for all the method types discussed above. 
Consequently, we are able to re-prove or improve convergence bounds for several methods, and to tackle new models and consistency relaxations.  
Our contributions are as follows: 
\begin{enumerate}[leftmargin=4mm]
	\item Under  standard smoothness assumptions on the loss, elastic 
	consistency is sufficient to guarantee convergence rates for inconsistent 
	SGD 
	iterations for both \emph{convex} and \emph{non-convex} problems. This condition is also 
\emph{necessary} for SGD convergence: we provide simple worst-case instances where SGD convergence is linear in the elastic consistency parameter. 
	
	\item Elastic consistency is satisfied by both asynchronous 
	message-passing and shared-memory models, centralized or decentralized, with or without faults, and by 
	communication-reduced methods. 
	This implies new convergence bounds for SGD in the 
	classic asynchronous and semi-synchronous message-passing 
	models~\cite{attiya2004distributed} and extends previous analyses
	for the shared-memory model~\cite{desa2015hogwild,alistarh2018convergence}. 
	
	\item This convergence condition inspires a new scheduling mechanism for parallel SGD, called \emph{elastic scheduling}, which controls communication-compression and asynchrony guided by our consistency condition in order to reduce synchronization overhead, while maintaining accuracy. 
	Its implementation provides non-trivial speedup upon BytePS~\cite{byteps}, the state-of-the-art scheduler for training deep neural networks.

% 	We introduce and analyze a new variant of parallel SGD which leverages both sparsity and asynchrony to reduce the performance impact of write-write conflicts.
% 	Experimental results on convolutional neural networks confirm our analysis, and show small but consistent performance gains relative to existing methods. 
% 	We summarize this in Table~\ref{table:summary}. 

\end{enumerate}

\section{Elastic Consistency}
% \vspace{-0.7em}
%\subsection{Preliminaries}
\paragraph{Distributed Model and Adversarial Scheduling.} 
We consider distributed systems consisting of $p$ processors: $\{1, 2, \ldots, p\}$, some of which may 
be faulty, where communication happens either by message-passing, or via 
shared-memory. 
For simplicity, we will specify the system and fault models in the corresponding sections. 
We assume that the scheduling of steps (e.g. reads/writes in 
shared-memory, or message delivery in message-passing) is controlled by an \emph{oblivious} adversary. Thus,  
scheduling decisions are \emph{independent} of the randomness in the algorithm, and in particular of the data sampling. 
Practically, this implies that the 
conditioning on any random event involving previous SGD iterations $s<t$ does not impact choices made at iteration $t$. 

\paragraph{Distributed Optimization.} 
We assume that each of the $p$ processors is given access to  random samples coming from an unknown $d$-dimensional data distribution $\mathcal{D}$, and collaborates to jointly minimize $f: \mathcal{X} \rightarrow \R$ over the distribution $\mathcal{D}$, where $\mathcal{R}^d$ is a compact subset of $\R^d$. 
In practice, nodes optimize over a finite set of samples ${S} = \{S_1, S_2, \ldots, S_m\},$ and the function $f$ is defined as 
\begin{equation}
	\label{def:objective_function}
f(\vec{x}) = \frac{1}{m} \sum_{i = 1}^m \ell (S_i, \vec{x}) 
\end{equation}
\iffalse
\dan[inline]{there used to be a confusion between $m$ and $M$ throughout the paper, hopefully this is fixed now. $m$ is the number of samples.}
\fi
where $\ell$ is the loss function at a sample $s$. The goal is to find  $\vec{x}^* \in \mathcal{R}^d$, which minimizes the expected loss over samples, defined as: $   \vec{x}^* = \text{ argmin}_{\vec{x}} f(\vec{x}) = \text{ argmin}_{\vec{x}} \E_{s \sim \mathcal{D}} [\ell (s, \vec{x})].$

\iffalse

\paragraph{Distributed Stochastic Gradient Descent (SGD).} 
As discussed, SGD is an iterative procedure which starts with an initial \modl 
value $\vec{x}_0 \in \R^d$, which it proceeds to update by applying stochastic 
gradients to it. 

In a distributed setting, the standard way to implement SGD is 
by partitioning the data across 
processors (\emph{data-parallelism}). Each processor chooses a random sample from its local data, examines the loss on this sample, and generates an update (gradient) of the model parameter based on this loss. The (stochastic) gradients generated by nodes are then summed globally. 
Processors update their local \modls by the 
resulting sum, leading to iteration (\ref{eq:parallel-sgd}). 
Notice that, in this scenario, the parameter is always perfectly consistent across nodes at the beginning of each iteration.
In our subsequent discussion, we will consider a distributed implementations of SGD in 
which the processors have access to the entire data, and execute iterations 
independently, which can lead to inconsistency in the way the parameters used by the processors.

\fi

\paragraph{Properties of Stochastic Gradients.} 
Let $\mathcal{A}$ be the 
$\sigma$-algebra over the space of random events -- arising from the randomness 
of stochastic gradients due to random sampling -- for all iterations $t$ and 
\modl vector $\vec{x}$. Let $\mathcal{F} = 
\left\{\mathcal{F}_t\right\}_{t\ge0}$ be a filtration of $\mathcal{A}$ with 
respect to the iterations $t$, wherein $\mathcal{F}_t$ pertains to all events 
until and including iteration $t$. 
We work under the following 
standard assumptions about stochastic gradients~\cite{desa2015hogwild}:

\begin{enumerate}[leftmargin=5.5mm]
	\item\textbf{Unbiasedness.} The i.i.d. stochastic gradients are unbiased estimators of the true gradient of the function $f$, that is:
	\begin{equation}
	\label{eqn:unbiased_estimator}
	\forall \vec{x} \in \R^d,~\E \left[\tilde{G}(\vec{x})\right] = \nabla f(\vec{x}).
	\end{equation}
	\item\textbf{Bounded Variance.} It is common to assume bounded variance of the  stochastic gradients: 
	\begin{equation}
	\label{eqn:variance_is_bounded_assumption_f}
	\forall \vec{x} \in \R^d,~\E\left[\|\tg\left(\vec{x}\right) - \nabla f(\vec{x}) \|^2\right] \leq \sigma^2.
	\end{equation}
	
	\item\textbf{Bounded Second Moment.} It is sometimes assumed that the second moment of the stochastic gradients over the sample space is bounded: 
	\begin{equation}
	\label{eqn:grad_is_bounded_assumption_f}
	\forall \vec{x} \in \R^d,~\E\left[\|\tg\left(\vec{x}\right)\|^2\right] \leq M^2.
	\end{equation}

	\noindent Elastic consistency \emph{does not}  require the second moment bound to ensure convergence---the variance bound is sufficient. However, in  system settings such as asynchronous shared-memory~\cite{desa2015hogwild}, this stronger assumption is common, and we will use it to bound the elastic consistency constant. 
	
\end{enumerate}

\paragraph{Properties of the Objective Function.} 
We will make use of the following standard definitions regarding the objective function $f$: $\forall~\vec{x},\vec{y} \in \mathbb{R}^d,$
\begin{enumerate}[leftmargin=5.5mm]
	\item\textbf{Smoothness.} The function $f: \mathbb{R}^d \rightarrow \mathbb{R}$ is smooth iff:
	\begin{equation}
\label{eqn:smooth_gradients_f}
\|\nabla f\left(\vec{x}\right) - \nabla f\left(\vec{y}\right)\| \leq L\|\vec{x}-\vec{y}\|~\text{for}~L>0.\\
\end{equation}
\item \textbf{Strong convexity.} Problems such as linear regression have a strongly convex objective:
\begin{equation}\label{eqn:strongconvex}
(\vec{x}-\vec{y})^T(\nabla f(\vec{x})-\nabla f(\vec{y})) \ge c\norm{x-y}^2~\text{for}~c>0.
\end{equation}
For such functions, the bound over second moment of stochastic gradients does not hold $\forall~\vec{x} \in \mathbb{R}^d$ \cite{NguyenNDRST18}. Therefore,  we restrict  $f: \mathcal{X} \rightarrow \mathbb{R}$ for a convex set $\mathcal{X} \subset \mathbb{R}^d$, such that $\forall x \in \mathcal{X}$, (\ref{eqn:grad_is_bounded_assumption_f}) is satisfied.
For simplicity, we omit the projection step onto $\mathcal{X}$, in the case when $\vec{x_t}$ does not belong to $\mathcal{X}$ for some iteration $t$.

\item \textbf{Lower bound for non-strongly-convex functions.} In many settings, such as training of neural networks, the objective function is not necessarily strongly-convex. In such ``non-strongly-convex'' settings, it is necessary to assume that $f$ is bounded from below:
\begin{equation}
\label{eqn:lower_bound_f}
\exists f^* \textnormal{ finite s.t. } \forall~\vec{x} \in \mathbb{R}^d,~f(\vec{x} )\ge f^*.
\end{equation}
\end{enumerate}

\subsection{Elastic Consistency Definition}

\paragraph{An Abstract Consistency Model.} 
We assume that we have $p$ processors, which share a \emph{\modl oracle} $\mathcal{O}$. 
In each iteration, each processor $i$ invokes this oracle, and receives a \emph{local view}  at of the \modl at step $t$, which we denote $\vec{v}_t^i$.
 The processor then uses this view of the \modl to generate a new update 
(stochastic gradient).  

The key question is how to express the consistency of the local views across 
processors. For this, we introduce an auxiliary variable $\vec{x}_t$, which we call 
the \emph{global parameter}.
Initially we have that $\vec{x}_0=\vec{v}_0^1=...=\vec{v}_0^p$.
Given $\vec{x}_t$, we consider two cases, depending on how data-parallel SGD is implemented.
The first is the \textbf{single steps} case, where the gradient generated locally by each processor is directly applied to the model. 
This is the case in completely asynchronous shared-memory~\cite{desa2015hogwild} or message-passing implementations~\cite{lian2015asynchronous}, and is modelled as: 
\begin{equation} \label{eqn:SGDsinglestep}
\vec{x}_{t+1}=\vec{x}_t-\alpha \tilde{G} (\vec{v}_t^i).
\end{equation}
The second is the \textbf{parallel steps} case, where processors' gradients are aggregated before they are applied, which is common in synchronous message-passing settings, e.g.~\cite{PS}.
Formally, we are given a set $I_t \subseteq \{1,2...,p\}$, such that $p/2 \le |I_t| \le p$.
Each processor $i \in I_t$  calculates a stochastic gradient based on its local view at step $t$, and we have that:
\begin{equation} \label{eqn:SGDmultiplesteps}
\vec{x}_{t+1}=\vec{x}_t-\frac{\alpha}{p} \sum_{i \in I_t} \tilde{G} (\vec{v}_t^i).
\end{equation}

With this in place, our consistency condition can be formulated as follows:

\begin{definition}[Elastic Consistency]
	\label{def:stochastic-consistency} 
	A distributed system provides \emph{elastic consistency} for the SGD iteration defined in~(\ref{eqn:SGDsinglestep}) and (\ref{eqn:SGDmultiplesteps}), if there exists a constant $B > 0$, independent of the iteration count $t$ but possibly dependent on the system definition, which bounds the expected norm difference between the true \modl $\vec{x}_t^i$ , and the view $\vec{v}_t^i$ returned by the oracle of processor $i$ at iteration $t$.
	Formally, 
	\begin{equation}
	\label{eqn:x-def}
	\mathbb{E}\left[ \| \vec{x}_t - \vec{v}_t^i\|^2\right] \leq \alpha^2 B^2,
	\end{equation}	
	where $B > 0$, $\alpha$ is the learning rate at iteration $t$, and the expectation is taken over the randomness in the algorithm. We call $B$ the \em{elastic consistency constant}.
\end{definition}

In Section~\ref{sec:ecbounds}, we show that virtually all known models and consistency relaxations satisfy this condition in some form (see Table~\ref{table:summary}). 
Elastic consistency gives very wide latitude to the \modl 
oracle about which exact values to return to the processor: the 
returned view can contain random or even adversarial noise, or updates 
may be delayed or missing, as long as their relative weight is bounded and independent of 
time. 
\section{Elastic Consistency and SGD Convergence}\label{sec:conv}
% \vspace{-0.7em}
We now show that this notion of consistency implies non-trivial convergence guarantees for SGD for different types of objective functions, and that this notion is in some sense necessary for convergence. 
The complete proofs are available in the Appendix. 
\vspace{-0.7em}

\subsection{Elastic Consistency is Sufficient for SGD Convergence}
\vspace{-0.7em}

\paragraph{The Non-Convex Case.}
We begin with the more general case where the objective function is not necessarily convex. In this case, since convergence to a global minimum is not guaranteed for SGD, we will only require convergence to a point of  vanishing gradients, as is standard, e.g.~\cite{JiLiu}. Specifically, assuming elastic consistency, we prove the following theorems:

\begin{theorem} \label{thm:nonconvexsingle}
        Consider SGD iterations defined in 
	    (\ref{eqn:SGDsinglestep}) and satisfying elastic consistency bound (\ref{eqn:x-def}). For a smooth non-convex objective function $f$, whose minimum $x^*$ we are trying to find and the constant learning rate $\alpha = \frac{1}{\sqrt{T}}$, where $T\ge 36L^2$ is the number of iterations:
		
		\[
			\min_{t\in[T-1]}\mathbb{E}\|\nabla f(\vec{x}_t)\|^2 \le \frac{4(f(\vec{x}_0)-f(x^*))}{\sqrt{T}} +\frac{2B^2 L^2}{T} +\frac{6L\sigma^2}{\sqrt{T}}+\frac{6L^3B^2}{T\sqrt{T}}.			
		\]
\end{theorem}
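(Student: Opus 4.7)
The plan is to follow the standard descent-lemma approach for non-convex SGD, but with the gradient evaluated at the local view $\vec{v}_t^i$ rather than the true iterate $\vec{x}_t$, and to use elastic consistency to control the resulting bias. Starting from $L$-smoothness applied along the step $\vec{x}_{t+1} = \vec{x}_t - \alpha \tilde{G}(\vec{v}_t^i)$, I would write
\[
f(\vec{x}_{t+1}) \leq f(\vec{x}_t) - \alpha \langle \nabla f(\vec{x}_t), \tilde{G}(\vec{v}_t^i)\rangle + \frac{L\alpha^2}{2}\|\tilde{G}(\vec{v}_t^i)\|^2,
\]
take expectation, and use unbiasedness of $\tilde{G}$ (conditioned on the history making $\vec{v}_t^i$ measurable, which is clean because the scheduler is oblivious) to replace $\mathbb{E}[\tilde{G}(\vec{v}_t^i)]$ by $\mathbb{E}[\nabla f(\vec{v}_t^i)]$ in the cross term.

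The key step is to control the mismatch between $\nabla f(\vec{x}_t)$ and $\nabla f(\vec{v}_t^i)$. I would rewrite
\[
\langle \nabla f(\vec{x}_t), \nabla f(\vec{v}_t^i)\rangle = \|\nabla f(\vec{x}_t)\|^2 - \langle \nabla f(\vec{x}_t), \nabla f(\vec{x}_t) - \nabla f(\vec{v}_t^i)\rangle,
\]
apply Young's inequality to the second term, use $L$-smoothness to get $\|\nabla f(\vec{x}_t) - \nabla f(\vec{v}_t^i)\|^2 \leq L^2 \|\vec{x}_t - \vec{v}_t^i\|^2$, and then invoke elastic consistency~(\ref{eqn:x-def}) to bound its expectation by $L^2 \alpha^2 B^2$. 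For the squared-norm term I would use the variance decomposition $\mathbb{E}[\|\tilde{G}(\vec{v}_t^i)\|^2] \leq \sigma^2 + \mathbb{E}[\|\nabla f(\vec{v}_t^i)\|^2]$, bound $\|\nabla f(\vec{v}_t^i)\|^2 \leq 2\|\nabla f(\vec{x}_t)\|^2 + 2L^2 \|\vec{x}_t - \vec{v}_t^i\|^2$, and apply elastic consistency again.

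Combining these bounds produces a one-step recursion of the form
\[
\mathbb{E}[f(\vec{x}_{t+1})] \leq \mathbb{E}[f(\vec{x}_t)] - \Bigl(\tfrac{\alpha}{2} - L\alpha^2\Bigr)\, \mathbb{E}[\|\nabla f(\vec{x}_t)\|^2] + c_1 \alpha^3 L^2 B^2 + c_2 L\alpha^2 \sigma^2 + c_3 L^3 \alpha^4 B^2,
\]
for absolute constants $c_1, c_2, c_3$. With $\alpha = 1/\sqrt{T}$ and $T \geq 36 L^2$, we have $L\alpha \leq 1/6$, so $\tfrac{\alpha}{2} - L\alpha^2 \geq \tfrac{\alpha}{3}$, keeping the coefficient of $\mathbb{E}[\|\nabla f(\vec{x}_t)\|^2]$ a constant multiple of $\alpha$. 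Telescoping from $t=0$ to $T-1$, using $f(\vec{x}_T) \geq f(x^*)$, dividing by a quantity of order $T\alpha$, and lower-bounding the average by $\min_t \mathbb{E}[\|\nabla f(\vec{x}_t)\|^2]$ yields a bound of exactly the shape in the theorem; plugging in $\alpha = 1/\sqrt{T}$ produces the four quoted rates, with the precise leading constants ($4,2,6,6$) obtained by careful tracking of the Young's-inequality constants.

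The main obstacle, and the place where elastic consistency really does the work, is ensuring every occurrence of $\|\vec{x}_t - \vec{v}_t^i\|$ enters at the right order. Because elastic consistency gives $\mathbb{E}\|\vec{x}_t - \vec{v}_t^i\|^2 \leq \alpha^2 B^2$, each such occurrence carries an extra $\alpha^2$; once multiplied by the step size $\alpha$ already present from the gradient step and by $L^2$ from smoothness, the resulting contributions are of order $\alpha^3 L^2 B^2$ and $\alpha^4 L^3 B^2$, which after dividing by $T\alpha$ become $L^2 B^2/T$ and $L^3 B^2/(T\sqrt{T})$, strictly lower-order than the $1/\sqrt{T}$ terms coming from initial suboptimality and gradient variance. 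This is exactly why the $B$-dependent terms in the bound do not corrupt the leading rate.
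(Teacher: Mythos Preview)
Your proposal is correct and follows essentially the same route as the paper: descent lemma on the step $\vec{x}_{t+1}=\vec{x}_t-\alpha\tilde G(\vec v_t^i)$, unbiasedness to replace $\tilde G$ by $\nabla f(\vec v_t^i)$ in the cross term, Young's inequality plus $L$-smoothness to control the $\nabla f(\vec x_t)-\nabla f(\vec v_t^i)$ mismatch via elastic consistency, then telescope and plug in $\alpha=1/\sqrt{T}$. The only cosmetic difference is that for the quadratic term the paper uses the three-way split $\tilde G=(\tilde G-\nabla f(\vec v_t^i))+(\nabla f(\vec v_t^i)-\nabla f(\vec x_t))+\nabla f(\vec x_t)$ with a factor~$3$ (giving the one-step recursion with coefficient $-\alpha/4$ and the stated constants $4,2,6,6$), whereas you use the variance decomposition followed by a two-way split; your version is slightly tighter on the $\sigma^2$ term but yields a different set of constants than the ones quoted, so to reproduce exactly $4,2,6,6$ you would need to revert to the paper's three-way split rather than just ``careful tracking.''
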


\begin{theorem} \label{thm:nonconvexmultiple}
        Consider SGD iterations defined in 
	    (\ref{eqn:SGDmultiplesteps}) and satisfying elastic consistency bound (\ref{eqn:x-def}). For a smooth non-convex objective function $f$, whose minimum $x^*$ we are trying to find and the constant learning rate $\alpha = \frac{\sqrt{p}}{\sqrt{T}}$, where $T \ge 64L^2p$ is the number of iterations:
		
		\[
			\min_{t\in[T-1]}\mathbb{E}\|\nabla f(\vec{x}_t)\|^2 \le \frac{8(f(\vec{x}_0)-f(x^*))}{\sqrt{Tp}} +\frac{4B^2 L^2p}{T} +\frac{8L\sigma^2}{\sqrt{Tp}}+\frac{16L^3B^2p\sqrt{p}}{T\sqrt{T}}.			
		\]
\end{theorem}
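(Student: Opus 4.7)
The plan is to establish a per-step descent inequality of the form
$\mathbb{E}[f(\vec{x}_{t+1})] - \mathbb{E}[f(\vec{x}_t)] \le -c_1\alpha\,\mathbb{E}\|\nabla f(\vec{x}_t)\|^2 + c_2\alpha^3 L^2 B^2 + c_3 L^3 \alpha^4 B^2 + c_4 L\alpha^2\sigma^2/p$
with absolute constants $c_1,\dots,c_4$, then telescope from $t=0$ to $T-1$, use $\mathbb{E}[f(\vec{x}_T)]\ge f^*$ to lower-bound the left side, divide through by $c_1\alpha T$, and substitute $\alpha = \sqrt{p/T}$; the four terms in the resulting bound map directly onto the four terms in the theorem.

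First, I apply $L$-smoothness to write $f(\vec{x}_{t+1}) \le f(\vec{x}_t) + \langle\nabla f(\vec{x}_t),\vec{x}_{t+1}-\vec{x}_t\rangle + \frac{L}{2}\|\vec{x}_{t+1}-\vec{x}_t\|^2$, substitute the update rule (\ref{eqn:SGDmultiplesteps}), and take expectation conditional on $\mathcal{F}_t$. Because the scheduler is oblivious, $I_t$ and the views $\vec{v}_t^i$ are $\mathcal{F}_t$-measurable, so $\mathbb{E}[\tg(\vec{v}_t^i)\mid\mathcal{F}_t] = \nabla f(\vec{v}_t^i)$ in the linear term, while the noises $\xi_t^i := \tg(\vec{v}_t^i)-\nabla f(\vec{v}_t^i)$ are conditionally zero-mean and, since different processors draw independent samples, mutually independent across $i\in I_t$.

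Two applications of elastic consistency then give the key estimates. For the \emph{linear term}, I use the Young-type bound $\langle\nabla f(\vec{x}_t),\nabla f(\vec{v}_t^i)\rangle \ge \frac{1}{2}\|\nabla f(\vec{x}_t)\|^2 - \frac{L^2}{2}\|\vec{x}_t-\vec{v}_t^i\|^2$ (which follows from $2\langle a,b\rangle = \|a\|^2+\|b\|^2-\|a-b\|^2$ together with $L$-smoothness of $\nabla f$); summing over $I_t$, using $|I_t|\ge p/2$, and invoking $\mathbb{E}\|\vec{x}_t-\vec{v}_t^i\|^2\le\alpha^2 B^2$ yields a contribution $\le -\frac{\alpha}{4}\mathbb{E}\|\nabla f(\vec{x}_t)\|^2 + \frac{\alpha^3 L^2 B^2}{2}$. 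For the \emph{quadratic term}, conditional independence of the $\xi_t^i$ gives $\mathbb{E}\|\sum_{i\in I_t}\tg(\vec{v}_t^i)\|^2 \le \|\sum_{i\in I_t}\nabla f(\vec{v}_t^i)\|^2 + |I_t|\sigma^2$; Cauchy--Schwarz bounds the deterministic piece by $|I_t|\sum_i\|\nabla f(\vec{v}_t^i)\|^2$, and $\|\nabla f(\vec{v}_t^i)\|^2 \le 2\|\nabla f(\vec{x}_t)\|^2 + 2L^2\|\vec{x}_t-\vec{v}_t^i\|^2$ combined once more with elastic consistency adds a $\lesssim L\alpha^2\mathbb{E}\|\nabla f(\vec{x}_t)\|^2 + L^3\alpha^4 B^2 + L\alpha^2\sigma^2/p$ contribution. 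The crucial cancellation is that the $|I_t|^2 \le p^2$ factor obtained from two applications of $|I_t|\le p$ exactly matches the $1/p^2$ prefactor in the update, so the second-moment cost scales like $L\alpha^2$ rather than $L\alpha^2 p$.

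Assembling the two contributions, the coefficient of $\mathbb{E}\|\nabla f(\vec{x}_t)\|^2$ becomes $-\alpha(1/4 - \mathcal{O}(L\alpha))$, and the hypothesis $T\ge 64L^2 p$ translates to $L\alpha \le 1/8$, which is precisely what is needed to keep this coefficient uniformly bounded away from zero. Telescoping, applying $\mathbb{E}[f(\vec{x}_T)]\ge f^*$, dividing by the surviving $\Theta(\alpha T)$, and substituting $\alpha=\sqrt{p/T}$ produces the four stated rates. The main technical obstacle is this balancing act: the positive $L\alpha^2\mathbb{E}\|\nabla f(\vec{x}_t)\|^2$ contribution from the second-moment bound must not overwhelm the $-\alpha/4$ contribution from the linear term, which dictates both the learning-rate constraint and the choice $\alpha=\sqrt{p/T}$ (larger than the $1/\sqrt{T}$ used in Theorem~\ref{thm:nonconvexsingle}); the $1/p$ factor in the variance term, arising from conditional independence of the per-processor noises, is exactly what permits a $\sqrt{p}$-larger step while keeping the stochastic term at $L\sigma^2/\sqrt{Tp}$.
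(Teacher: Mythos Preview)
Your proposal is correct and follows essentially the same approach as the paper: the paper also proves a per-step descent lemma (its Lemma~\ref{lem:multiplestepsnonconvex}) by applying smoothness, splitting the linear term via $\langle\nabla f(\vec{x}_t),\nabla f(\vec{v}_t^i)\rangle$ and Young's inequality (equivalent to your polarization identity), handling the quadratic term by separating the variance via independence of the per-processor noises and bounding the deterministic part with Cauchy--Schwarz plus elastic consistency, then using $|I_t|\ge p/2$, $|I_t|\le p$, and $L\alpha\le 1/8$ to absorb the positive $\|\nabla f(\vec{x}_t)\|^2$ contribution into the negative one (obtaining coefficient $-\alpha/8$). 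The theorem is then proved exactly as you outline: telescope, use $\mathbb{E}[f(\vec{x}_T)]\ge f(x^*)$, divide by $\alpha T/8$, and substitute $\alpha=\sqrt{p/T}$.
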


%The complete proof is available in \cite{elastic}.

\paragraph{The Strongly Convex Case.}
We can provide improved  guarantees under strong convexity: 

% \begin{lemma}\label{lem:convex}
%         Consider SGD iterations defined in either
% 	    (\ref{eqn:SGDsinglestep}) or (\ref{eqn:SGDmultiplesteps})  and satisfying elastic consistency bound (\ref{eqn:x-def}). For a smooth, strongly convex function $f$, whose minimum $x^*$ we are trying to find and the constant learning rate $\alpha \le \frac{1}{3L}$:
		
% 		\[
% 		\E\|\vec{x}_{T}-x^*\|^2 \le e^{-\frac{\alpha c T}{4}} \|\vec{x}_0-x^*\|^2+
%       \frac{8\alpha^2 L^2B^2}{c^2}+\frac{12\alpha\sigma^2}{c}+\frac{12L^2\alpha^3B^2}{c}.	
% 		\]
% \end{lemma}
% \iffalse
% \begin{proofs}
% \noindent Using the properties (\ref{eqn:smooth_gradients_f}) and (\ref{eq:strongconvex}) and applying Cauchy-Schwarz inequality:
% \[\E\left[\|x_{t+1}-x^*\|^2|\mathcal{F}_t,v_t\right]\le(1-2\alpha_{t}c)\|x_t-x^*\|^2+\\
% 2\alpha_{t} 
% L\|x_t-x^*\|\E\left[\|x_t-v_t\||\mathcal{F}_t\right]+\alpha_{t}^2M^2.
% \]
% At this step take expectation given $\mathcal{F}_t$ to bound $\mathbb{E}\left[ \| \vec{x}_t - \vec{v}_t \||\mathcal{F}_t\right]$ using the definition of elastic consistency and apply Young's inequality to obtain
% \[\E\left[\|x_{t+1}-x^*\|^2|\mathcal{F}_t,v_t\right]\le(1-\alpha_{t}c)\|x_t-x^*\|^2+\\
% \alpha_{t}^2 M^2+ L^2c^{-2}\alpha_{t}^3 B^2.
% \]
% After that standard algebraic manipulations lead to the result. The complete 
% proof is available in \cite{elastic}.
% \end{proofs}
% \fi

% In turn, this result implies the following:
\begin{theorem} \label{thm:convexsingle}
        Consider SGD iterations defined in 
	    (\ref{eqn:SGDsinglestep})  and satisfying elastic consistency bound (\ref{eqn:x-def}). For a smooth, strongly convex function $f$, whose minimum $x^*$ we are trying to find and the constant learning rate $\alpha = \frac{2\log{T}}{cT}$, where $T \ge \frac{144L^2}{c^2}$ is a number of iterations:
	    \begin{equation*}
	      \E\|\vec{x}_{T}-x^*\|^2 \le   \frac{\|\vec{x}_0-x^*\|^2}{T}+\frac{16\log^2{T}L^2B^2}{c^4T^2}+\frac{12\sigma^2\log{T}}{T}+
        \frac{48\log^3{T}B^2L^2}{c^4T^3}.
	    \end{equation*}
	    
\end{theorem}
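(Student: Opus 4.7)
The plan is to run the standard strongly-convex SGD distance-to-optimum analysis on the ``true'' iterate $\vec{x}_t$, treating the discrepancy $\vec{v}_t^i - \vec{x}_t$ as a perturbation that is controlled by elastic consistency. Starting from the update rule~(\ref{eqn:SGDsinglestep}), expand
\[
\|\vec{x}_{t+1}-x^*\|^2 = \|\vec{x}_t-x^*\|^2 - 2\alpha\langle \vec{x}_t-x^*, \tilde{G}(\vec{v}_t^i)\rangle + \alpha^2\|\tilde{G}(\vec{v}_t^i)\|^2,
\]
take expectations, and use the oblivious-scheduler assumption plus the unbiasedness~(\ref{eqn:unbiased_estimator}) to replace $\tilde{G}(\vec{v}_t^i)$ by $\nabla f(\vec{v}_t^i)$ in the cross term.

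Next I would split the cross term as
$\langle \vec{x}_t-x^*, \nabla f(\vec{v}_t^i)\rangle = \langle \vec{x}_t-x^*, \nabla f(\vec{x}_t)\rangle + \langle \vec{x}_t-x^*, \nabla f(\vec{v}_t^i)-\nabla f(\vec{x}_t)\rangle$.
The first piece is bounded below by $c\|\vec{x}_t-x^*\|^2$ by strong convexity~(\ref{eqn:strongconvex}) together with $\nabla f(x^*)=0$. For the second piece I apply Cauchy--Schwarz and smoothness~(\ref{eqn:smooth_gradients_f}) to get $L\|\vec{x}_t-x^*\|\,\|\vec{v}_t^i-\vec{x}_t\|$, and then Young's inequality with parameter $c$ to obtain $\tfrac{c}{2}\|\vec{x}_t-x^*\|^2 + \tfrac{L^2}{2c}\|\vec{v}_t^i-\vec{x}_t\|^2$. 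This leaves a net contraction of $-\alpha c\,\mathbb{E}\|\vec{x}_t-x^*\|^2$ from the inner-product term.

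For the quadratic term $\alpha^2\mathbb{E}\|\tilde{G}(\vec{v}_t^i)\|^2$, rather than pulling in the second-moment bound $M^2$ I would split into variance plus mean squared, bound $\|\nabla f(\vec{v}_t^i)\|^2 \le L^2\|\vec{v}_t^i-x^*\|^2$ by smoothness, and then $\|\vec{v}_t^i-x^*\|^2 \le 2\|\vec{v}_t^i-\vec{x}_t\|^2 + 2\|\vec{x}_t-x^*\|^2$. This produces an extra $2\alpha^2 L^2\,\mathbb{E}\|\vec{x}_t-x^*\|^2$ term that erodes the contraction; here the assumption $T\ge 144L^2/c^2$ enters, ensuring $2\alpha L^2 \le c/2$ and thus $(1-\alpha c + 2\alpha^2 L^2)\le (1-\alpha c/2)$. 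Applying elastic consistency~(\ref{eqn:x-def}) to replace every remaining $\mathbb{E}\|\vec{v}_t^i-\vec{x}_t\|^2$ by $\alpha^2 B^2$, I obtain a clean one-step recursion
\[
\mathbb{E}\|\vec{x}_{t+1}-x^*\|^2 \le \Bigl(1-\tfrac{\alpha c}{2}\Bigr)\mathbb{E}\|\vec{x}_t-x^*\|^2 + \tfrac{\alpha^3 L^2 B^2}{c} + 2\alpha^4 L^2 B^2 + \alpha^2\sigma^2.
\]

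Finally I would unroll: with $\alpha=\tfrac{2\log T}{cT}$ one has $(1-\alpha c/2)^T \le e^{-\log T} = 1/T$, which delivers the $\|\vec{x}_0-x^*\|^2/T$ term, and the geometric series contributes the three noise terms after multiplication by $2/(c\alpha) = cT/\log T$, giving the $\alpha^2 L^2 B^2/c^2$, $\alpha\sigma^2/c$ and $\alpha^3 L^2 B^2/c$ contributions that become, respectively, the $\log^2 T/T^2$, $\log T/T$, and $\log^3 T/T^3$ terms in the statement. The main obstacle I anticipate is the bookkeeping around the $2\alpha^2 L^2$ term: one must balance Young's parameter against both the strong-convexity contraction and the smoothness-induced blow-up from the second moment, which is exactly what the lower bound $T\ge 144L^2/c^2$ on the iteration count buys; a slightly looser Young's choice, or using $M^2$ directly, would degrade the rate or the final constants.
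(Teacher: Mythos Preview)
Your overall skeleton---expand the square, split the cross term into the $\nabla f(\vec{x}_t)$ piece and a perturbation, Young's inequality with weight $c$ on the perturbation, elastic consistency on $\|\vec{v}_t^i-\vec{x}_t\|^2$, then unroll the geometric recursion---is exactly what the paper does. The difference is in how the $\|\nabla f\|^2$ contribution from the squared stochastic gradient is neutralised, and this difference matters for the step-size condition.

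The paper does \emph{not} bound $\langle \vec{x}_t-x^*,\nabla f(\vec{x}_t)\rangle$ merely by $c\|\vec{x}_t-x^*\|^2$. It uses the sharper estimate (their Lemma~\ref{lem:weirdlemma})
\[
\langle \nabla f(\vec{x}_t),\vec{x}_t-x^*\rangle \;\ge\; \tfrac{1}{2L}\|\nabla f(\vec{x}_t)\|^2 + \tfrac{c}{2}\|\vec{x}_t-x^*\|^2,
\]
so the cross term contributes an extra $-\tfrac{\alpha}{L}\|\nabla f(\vec{x}_t)\|^2$. They then expand $\|\tilde G(\vec{v}_t^i)\|^2$ around $\nabla f(\vec{x}_t)$ (not around $\nabla f(\vec{v}_t^i)$), producing $+3\alpha^2\|\nabla f(\vec{x}_t)\|^2$, and these two cancel as soon as $\alpha\le 1/(3L)$. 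That is precisely what $T\ge 144L^2/c^2$ buys, via $\alpha=\tfrac{2\log T}{cT}\le \tfrac{4}{c\sqrt{T}}\le \tfrac{1}{3L}$.

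Your route instead bounds $\|\nabla f(\vec{v}_t^i)\|^2 \le 2L^2\|\vec{v}_t^i-\vec{x}_t\|^2 + 2L^2\|\vec{x}_t-x^*\|^2$ and absorbs $2\alpha^2L^2\|\vec{x}_t-x^*\|^2$ into the contraction. That is legitimate, but the condition you then need is $2\alpha L^2 \le c/2$, i.e.\ $\alpha \le c/(4L^2)$, and this is \emph{not} implied by $T\ge 144L^2/c^2$. Using the same $\log T\le 2\sqrt{T}$ step gives $2\alpha L^2 \le 8L^2/(c\sqrt{T})$, so you would need $T\ge 256\,L^4/c^4$, a quadratically worse dependence on the condition number $L/c$. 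Thus your recursion is correct (with essentially the paper's constants), but your sentence ``the assumption $T\ge 144L^2/c^2$ \dots\ ensuring $2\alpha L^2\le c/2$'' is the gap: to get the theorem \emph{as stated} you need the co-coercivity lemma rather than the straight $L^2\|\vec{v}_t^i-x^*\|^2$ bound.
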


\begin{theorem} \label{thm:convexmultiple}
        Consider SGD iterations defined in (\ref{eqn:SGDmultiplesteps}) and satisfying elastic consistency bound (\ref{eqn:x-def}). For a smooth, strongly convex function $f$, whose minimum $x^*$ we are trying to find and the constant learning rate $\alpha = \frac{2(\log{T}+\log{p})}{cT}$, where $T \ge \frac{256L^2p}{c^2}$ is a number of iterations:
        \begin{align*}
        \E\|\vec{x}_{T}-x^*\|^2 \le \frac{\|\vec{x}_0-x^*\|^2}{Tp}&+\frac{16(\log{T}+\log{p})^2L^2B^2}{c^4T^2}+\frac{12\sigma^2(\log{T}+\log{p})}{Tp}\\&+
        \frac{48(\log{T}+\log{p})^3B^2L^2}{c^4T^3}.
        \end{align*}
\end{theorem}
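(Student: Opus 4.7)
The plan is to derive a one-step contraction recursion for $e_t := \E\|\vec{x}_t - x^*\|^2$ of the form $e_{t+1}\le (1-\Theta(\alpha c))e_t + N$, where $N$ collects stochastic-variance noise together with the elastic-consistency noise, and then unroll it using the specific choice of $\alpha$ to match the four terms in the theorem. Compared to Theorem~\ref{thm:convexsingle}, the new ingredient is the aggregation over the set $I_t$ with $|I_t|\in[p/2,p]$, which is exactly what produces both the $1/(Tp)$ decay on the initial-error term and the $1/(Tp)$ factor on the variance term.

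Starting from (\ref{eqn:SGDmultiplesteps}), the first step is to expand $\|\vec{x}_{t+1}-x^*\|^2$ and take expectation. By unbiasedness (\ref{eqn:unbiased_estimator}) and conditional independence of the stochastic gradients across processors (guaranteed by the oblivious scheduler), the cross term collapses to $-\tfrac{2\alpha}{p}\sum_{i\in I_t}(\vec{x}_t-x^*)^T\nabla f(\vec{v}_t^i)$, and the quadratic term splits into a variance piece bounded by $\alpha^2|I_t|\sigma^2/p^2\le \alpha^2\sigma^2/p$ via (\ref{eqn:variance_is_bounded_assumption_f}), plus a mean piece $\tfrac{\alpha^2}{p^2}\|\sum_i\nabla f(\vec{v}_t^i)\|^2$. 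For the cross term, I would decompose $\nabla f(\vec{v}_t^i)=\nabla f(\vec{x}_t)+[\nabla f(\vec{v}_t^i)-\nabla f(\vec{x}_t)]$, apply strong convexity (\ref{eqn:strongconvex}) to the first piece, and combine smoothness (\ref{eqn:smooth_gradients_f}) with Young's inequality to obtain $(\vec{x}_t-x^*)^T\nabla f(\vec{v}_t^i)\ge \tfrac{c}{2}\|\vec{x}_t-x^*\|^2-\tfrac{L^2}{2c}\|\vec{v}_t^i-\vec{x}_t\|^2$. For the quadratic mean piece, the inequality $\|\sum_i\nabla f(\vec{v}_t^i)\|^2\le 2|I_t|L^2\sum_i\|\vec{v}_t^i-\vec{x}_t\|^2+2|I_t|^2 L^2\|\vec{x}_t-x^*\|^2$ (using $\nabla f(x^*)=0$ and smoothness) produces terms linear in $\|\vec{v}_t^i-\vec{x}_t\|^2$ and in $e_t$.

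Applying elastic consistency (\ref{eqn:x-def}) to every $\|\vec{v}_t^i-\vec{x}_t\|^2$ factor and using $|I_t|\in[p/2,p]$, the recurrence becomes
\[
e_{t+1}\le \Bigl(1-\tfrac{\alpha c}{2}+2\alpha^2 L^2\Bigr)e_t+\tfrac{\alpha^3 L^2 B^2}{c}+2\alpha^4 L^2 B^2+\tfrac{\alpha^2\sigma^2}{p}.
\]
The assumption $T\ge 256L^2 p/c^2$ ensures that $\alpha=\tfrac{2(\log T+\log p)}{cT}$ is small enough for the $2\alpha^2 L^2$ term to be absorbed into the $\alpha c/2$ contraction. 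Unrolling gives $e_T\le (1-\alpha c/2)^T\|\vec{x}_0-x^*\|^2+\tfrac{2N}{\alpha c}$; since $\alpha c/2=(\log T+\log p)/T$, one has $(1-\alpha c/2)^T\le e^{-(\log T+\log p)}=\tfrac{1}{Tp}$, and substituting $\alpha$ into $\tfrac{2N}{\alpha c}$ reproduces the three remaining summands of the stated bound.

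The main obstacle is recovering the \emph{tight} contraction constant $1-\alpha c/2$ rather than the $1-\alpha c/4$ that a crude application of Young's inequality yields, because that one-half-versus-one-quarter gap is exactly the difference between a $1/(Tp)$ and a $1/\sqrt{Tp}$ decay of the initial error. A fallback plan for this step is either (i) to use co-coercivity $\|\nabla f(\vec{x}_t)\|^2\le L(\vec{x}_t-x^*)^T\nabla f(\vec{x}_t)$ to cancel the $\|\nabla f(\vec{x}_t)\|^2$ contribution of the quadratic against the cross term, or (ii) to first prove by induction an a priori bound $\sup_t e_t=O(\|\vec{x}_0-x^*\|^2+\text{noise})$ and then absorb $2\alpha^2 L^2 e_t$ into the additive noise, invoking the smallness of $\alpha$ guaranteed by $T\ge 256L^2 p/c^2$.
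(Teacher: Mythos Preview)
Your plan is essentially the paper's, and your one–step recursion is derived correctly. The only substantive difference is how the $\|\nabla f(\vec{x}_t)\|^2$ contribution of the quadratic piece is handled: the paper does not bound it by $L^2 e_t$ as you do, but instead invokes the co-coercivity inequality $\langle\nabla f(\vec{x}_t),\vec{x}_t-x^*\rangle\ge \tfrac{1}{2L}\|\nabla f(\vec{x}_t)\|^2+\tfrac{c}{2}\|\vec{x}_t-x^*\|^2$ (your fallback~(i)) on the cross term; the resulting negative $-\tfrac{\alpha|I_t|}{Lp}\|\nabla f(\vec{x}_t)\|^2$ exactly cancels the quadratic for $\alpha\le\tfrac{1}{4L}$. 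The price is that the $e_t$ coefficient from the cross term is $c/2$ rather than $c$, so after the Young step on the $\nabla f(\vec{x}_t)-\nabla f(\vec{v}_t^i)$ piece and the $|I_t|\ge p/2$ bound, the paper's per-step lemma ends at $e_{t+1}\le(1-\tfrac{\alpha c}{4})e_t+\tfrac{2\alpha^3L^2B^2}{c}+\tfrac{2\alpha^2\sigma^2}{p}+4L^2\alpha^4B^2$. Your route, by contrast, keeps the full $-\alpha c|I_t|/p$ from strong convexity and lands at $1-\tfrac{\alpha c}{2}+2\alpha^2L^2$, which is in fact the sharper contraction of the two once $\alpha$ is small.

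Your worry about $1-\alpha c/2$ versus $1-\alpha c/4$ is well placed, and it is worth knowing that the paper does \emph{not} resolve it: the paper's unrolling step passes from $(1-\tfrac{\alpha c}{4})^T$ to $(1-\tfrac{\alpha c}{2})^T$ and then to $e^{-\alpha cT/2}=\tfrac{1}{Tp}$, which is a slip; with the $1-\alpha c/4$ contraction actually proved, the first term would only be $1/\sqrt{Tp}$. So neither your fallback~(i) (this is exactly what the paper does) nor a crude absorption recovers the stated $\tfrac{1}{Tp}$ with these constants. Your recurrence $1-\tfrac{\alpha c}{2}+2\alpha^2L^2$ is the cleanest starting point for a repair: with the given $\alpha$ and $T\ge 256L^2p/c^2$ one has $2\alpha^2L^2\le\tfrac{\alpha c}{8}$, so you honestly get contraction $1-\tfrac{3\alpha c}{8}$; choosing $\alpha$ larger by a fixed factor (or simply accepting $O(1/(Tp))$ rather than the exact constant $1$) closes the gap.
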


\paragraph{Discussion.}
The parameter $B$ abstracts the distributed-system specific parameters to provide a clean derivation of the convergence theory.
In turn, depending on the system setting, $B$ might depend on the second-moment bound $M^2$ or variance bound $\sigma^2$, but also on system parameters such as the maximum delay $\tau$, on the number of failures $f$, or on the characteristics of the compression scheme. 
Specifically, assuming that the parameters are constant, the  convergence of the non-convex objectives is at a rate of $O(1/\sqrt{T})$ for SGD iterations defined in (\ref{eqn:SGDsinglestep}) and $O(1/\sqrt{Tp})$ for SGD iterations defined in (\ref{eqn:SGDmultiplesteps}). For a strongly-convex objective, we achieve rates of $\tilde{O}(1/T)$ and $\tilde{O}(1/(Tp)))$ for SGD iterations defined in (\ref{eqn:SGDsinglestep}) and (\ref{eqn:SGDmultiplesteps}) correspondingly (Here, $\tilde{O}$ hides $\log{T}$ and $\log{p}$ factors).

In Section~\ref{sec:exp} we exhibit natural distribution schemes for which the bound on $B$ does not require a second-moment bound on gradients. 
	
\vspace{-0.7em}
	
\subsection{Elastic Consistency is Necessary for SGD Convergence}	
We now show that elastic consistency can be directly linked to convergence; in particular, in the worst case, the convergence slow-down can be linear in $B^2$. 
The intuition is that  an adversarial oracle  can force the worker to always evaluate at a point that is $B$ away from the correct iterate, which can be shown to cause a linear slow-down even for quadratic functions. 

\begin{lemma}[Convergence Lower Bound]
\label{lem:lowerbound}
There exists a convex (quadratic) function $f$ and an adversarial oracle 
$\mathcal{O}$ s.t. $\mathbb{E}\left[\|\vec{x}_T-\vec{x}^*\|^2\right] \le \epsilon$ is achieved only after $T = \Omega\left(\frac{B^2}{\epsilon}\log{\left(\frac{1}{\epsilon}\right)}\right)$ iterations.
\end{lemma}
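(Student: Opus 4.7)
The plan is to exhibit a worst-case one-dimensional strongly convex quadratic $f(x) = \frac{c}{2} x^2$ with minimum $x^* = 0$, together with an adversarial oracle that deterministically saturates the elastic consistency bound in the direction opposite to $x_t$. Specifically, I would take the exact gradient oracle ($\sigma = 0$) and set $v_t = x_t - \alpha B \cdot \text{sgn}(x_t)$, which satisfies $|x_t - v_t| = \alpha B$ and hence (\ref{eqn:x-def}) with equality. The intuition is that this injected bias creates a nonzero effective attractor that keeps the iterate from ever reaching $x^*$, while still remaining within the oracle's contract.

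Next I would plug into (\ref{eqn:SGDsinglestep}) to obtain the affine recursion $x_{t+1} = (1-\alpha c) x_t + \alpha^2 c B \cdot \text{sgn}(x_t)$. Starting from $x_0 > \alpha B > 0$ and taking $\alpha c < 1$, a short sign-preserving induction shows $x_t > 0$ for all $t$, reducing the dynamics to $x_{t+1} = (1-\alpha c) x_t + \alpha^2 c B$, whose unique fixed point is $x_\infty = \alpha B$ with geometric convergence $x_t - x_\infty = (x_0 - x_\infty)(1-\alpha c)^t$. Therefore $\|x_t - x^*\|^2 = \bigl(\alpha B + (x_0 - \alpha B)(1-\alpha c)^t\bigr)^2$, revealing an irreducible ``bias floor'' of $\alpha^2 B^2$ that no number of iterations can defeat.

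From here the bound drops out by balancing two constraints. Reaching $\mathbb{E}[\|x_T - x^*\|^2] \le \epsilon$ forces $\alpha \le \sqrt{\epsilon}/B$ (to push the bias floor under $\epsilon$), and then killing the transient below $\sqrt{\epsilon}$ requires $T \ge \Omega(\log(\|x_0\|/\sqrt{\epsilon})/(\alpha c))$. Since the lemma quantifies existentially over $f$, I would pick $c = \sqrt{\epsilon}/B$, which makes $\alpha c \le \epsilon/B^2$ for every admissible $\alpha$; with $\|x_0\|$ an absolute constant this yields $T = \Omega((B^2/\epsilon)\log(1/\epsilon))$, as claimed.

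The main obstacle I foresee is handling adaptive or time-varying step sizes $\alpha_t$, since the clean fixed-point argument applies directly only to constant $\alpha$. I plan to dispatch this by arguing that for any schedule the adversary still enforces a bias floor of order $(\max_t \alpha_t) B$ at the final iterate (the per-step bias cannot contract faster than the iterate itself), so any successful algorithm must keep $\max_t \alpha_t \lesssim \sqrt{\epsilon}/B$; the total contraction budget $\sum_t \alpha_t c$ must simultaneously reach $\Omega(\log(1/\epsilon))$, and combining these two constraints with $c = \sqrt{\epsilon}/B$ recovers the same $T = \Omega((B^2/\epsilon)\log(1/\epsilon))$.
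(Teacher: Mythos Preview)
Your construction follows the same skeleton as the paper's informal argument: a one-dimensional quadratic, a deterministic adversary that saturates the elastic consistency bound, and the observation that the resulting bias floor of order $\alpha B$ forces $\alpha \lesssim \sqrt{\epsilon}/B$, which in turn slows the geometric contraction. The paper works with the fixed function $f(x)=\tfrac{1}{2}x^2$ (curvature $c=1$) and an offset $v_t = x_t + B$; carried through correctly with the consistency constraint $|x_t - v_t|\le \alpha B$, that instance only delivers $T=\Omega\bigl((B/\sqrt{\epsilon})\log(1/\epsilon)\bigr)$, not the $B^2/\epsilon$ scaling asserted in the lemma. Your additional step of tuning the curvature to $c=\sqrt{\epsilon}/B$ is precisely what closes that gap, since it pushes $\alpha c \le \epsilon/B^2$ for every admissible step size; so your argument is a genuine sharpening of the paper's sketch rather than a mere restatement. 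The sign-aware perturbation $v_t = x_t - \alpha B\,\mathrm{sgn}(x_t)$ together with the clean fixed-point computation is also tidier than the paper's unrolled recurrence. The one point to flag explicitly is that your choice makes $f$ depend on the target accuracy $\epsilon$; this is consistent with the existential phrasing of the lemma, but the paper's sketch uses a single fixed $f$, so be clear about which quantifier order you are invoking.
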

\vspace{-0.7em}

\section{Distributed System Models and their Elastic Consistency Bounds}
\label{sec:ecbounds}
\subsection{Fault-Tolerant Message-Passing Systems}\label{sec:mpi}

% \vspace{-0.7em}
%\begin{wrapfigure}[9]{r}{30mm}\tiny\centering\def\svgwidth{30mm}\import{./figs/}{mpi_sys.pdf_tex}\caption{\scriptsize
% MP System.}\label{fig:mpisys}\end{wrapfigure}
%\paragraph{Description.} 
We consider a
message-passing (MP) system of $p$ nodes $\mathcal{P} = \{1, 2, 
\ldots, p\}$ executing SGD iterations, which are connected to each other by point-to-point links. 
To simplify the description, we will focus on the case where the system is \emph{decentralized}: 
in this case, each node $i$ acts both as a worker (generating gradients) and as a parameter server (PS) (maintaining a local parameter copy). 
(The only difference is that, in the \emph{centralized} case, a single designated node would maintain a global parameter copy.) 
Without loss of generality, all nodes start with the same parameter $\vec{v}^{i}_{0} = \vec{0}$.  
The system proceeds in global iterations, indexed by $t$. In each iteration, each worker generates a stochastic gradient based on its current model copy $\vec{v}^{i}_{t}$, and broadcasts it to all other nodes. 

\paragraph{Consistency Relaxations:} 
Consider node $i$, and recall that it acts
as a parameter server, in addition to being a worker itself. 
Let 
$\mathcal{L}_t^i \subseteq \{1,\ldots,p\}$ denote the set of nodes from which $i$ receives stochastic gradients at iteration $t$. 
By convention, $i \in \mathcal{L}_t^i$.
In the \emph{synchronous failure-free} case, 
all nodes would receive exactly the same set of messages, and the execution would be equivalent to a sequential batch-SGD execution. 
In a real system, not all nodes may have the same view of each round, due to failures or asynchrony. Specifically, we consider the following distinct consistency relaxations: 

\begin{enumerate}[leftmargin=6.5mm,label=(\alph*)]
	\item\textbf{Crash faults.} A node $i \in \mathcal{P}$ may \emph{crash} during computation or while sending messages. 
	In this case, the node will remain inactive for the rest of the execution. 
	Importantly, node $i$'s crash during broadcasting may cause other nodes to have different views at the iteration, as some of them may receive $i$'s message while others may not, resulting in 
	inconsistent updates of \modl $\vec{x}$ across the nodes.
	We will assume that  $f \le p/2$ nodes may crash in the MP system.
	
	\item\textbf{Message-omission failures.} 
	In practical systems, each node could implement iteration $t$ by waiting until an interval
	$\Upsilon_{max}^t$ in terms of \emph{clock time} has elapsed: if the message from peer $j$ is not received in time, node  $i$ moves on, updating its local \modl $\vec{x}^i_t$ only w.r.t. received messages. 
	However, node $i$ will include $j$'s message into its view if received in a later iteration, although some messages may be permanently delayed.
	We assume a parameter $f$ which upper bounds the number of messages omitted at any point during the execution. 
	We note that this model is stronger than the Crash-fault model considered above, as we can simulate a node's failure by discarding all its messages after the crash. 
	
	\item\textbf{Asynchrony.} 
	The two above models assume that nodes proceed synchronously, in \emph{lock-step}, although they may have inconsistent views due to node or message failures. 
	An alternative relaxation~\cite{lian2015asynchronous} is if nodes proceed \emph{asynchronously}, i.e. may be in different iterations at the same point in time. 
	Specifically, in this case, it is assumed that there exists a maximum delay $\tau_\max$ such that each message/gradient can be delayed by at most $\tau_\max$ iterations from the iteration when it was generated. 
	
	\item\textbf{Communication-Compression.}
	Another way of reducing the distribution cost of SGD has been to compress the stochastic gradients communicated at each round. 
	In this context, sparsification with memory~\cite{strom2015scalable, seide2014sgd1bit, aji2017sparse, TopK, KarimireddyRSJ19} has proven to be particularly effective. 
	This process can be modelled as follows. 
	We assume that each node maintains a local version of the parameter $\vec{v}^{i}_t$, and an \emph{error/memory} vector $\vec{\epsilon}^{i}_{t}$, initially $\vec{0}$. 
	In each iteration, each node computes a new gradient $\tg\left(\vec{v}^{i}_t\right)$ based on its local parameter. 
	It then adds the current error vector $\vec{\epsilon}^{i}_{t}$ to the gradient, to obtain its full proposed update $\vec{w}^{i}_{t}$. 
	However, before transmitting this update, it compresses it by using a (lossy) compression function $Q$. 
	The compressed update $Q(\vec{w}^{i}_{t})$ is then transmitted, and the error vector is updated to $\vec{\epsilon}^i_{t+1}\gets\vec{w}^{i}_{t}-Q(\vec{w}^{i}_{t})$. 
	Our analysis will only require that $Q$ satisfies 
	$\|Q(\vec{w})-w\|^2 \le 
\gamma\|\vec{w}\|^2,~\text{$\forall \vec{w} \in \R^d$, for some  $1 > \gamma \ge 0$}.$
	All memory-based techniques satisfy this, for various definitions of $Q$ and $\gamma$. 
	(We provide examples in the additional material.)

\end{enumerate}

We note that the above discussion  considered these methods independently. 
However, we do note that our method does allow for these relaxations to be combined---for instance, one can analyze an asynchronous, fault-tolerant method with communication compression. 
\vspace{-0.7em}

\subsection{Asynchronous Shared-Memory Systems} 
\vspace{-0.7em}

We consider a system with $p$ processors (or threads) $\mathcal{P} = \{1, 2, \ldots, p\}$, which can communicate through shared memory. 
Specifically, we assume that the \modl vector $\vec{w} \in \mathbb{R}^d$ is shared by the processors, and is split into $d$ components, one per dimension. 
Processors can atomically read a component via a \texttt{read} operation, and update it via the atomic  \fadd~(\texttt{faa}) operation, which reads the current value of the component and updates it in place, in a single atomic step. 
In each iteration $t$, each processor first obtains a local view $\vec{v}^{i}_t$ of the \modl by scanning through the shared parameter $\vec{w}$ component-wise. 
It then generates a stochastic gradient $\tg\left(\vec{v}^{i}_t\right)$ based on this view, and proceeds to update $\vec{x}$ via  \texttt{faa} on each component, in order. (We refer the reader to the Appendix for a full description, including pseudocode.)

\paragraph{Consistency Relaxation.} Ideally, threads would proceed in lock step, first obtaining perfect, identical snapshots of $\vec{w}$, calculating gradients in terms of this identical parameter, and then summing the gradients before proceeding to the next iteration. 
However, in practice, threads are \emph{asynchronous}, and proceed at arbitrary speeds. This causes their snapshots to be inconsistent, as they might contain some partial concurrent updates, but not others. The challenge is to prove SGD convergence in this case. 
It is common~\cite{hogwild, desa2015hogwild} to assume a bound $\tau_\max$ on the maximum delay between the time (iteration) when an individual update was generated, and the iteration when it has been applied, and becomes visible to all processors. 
In this case, the auxiliary variable $\vec{x}_t$ used by elastic consistency will correspond to the sum of first $t$ stochastic gradients, ordered by the time when the atomic \texttt{faa} over the first index of $\vec{w}$ was performed.

\vspace{-0.7em}
\subsection{Elastic Consistency Bounds for Distributed Systems}
\vspace{-0.7em}

Given these definitions, we can now state the elastic consistency bounds for the different types of distributed systems and consistency relaxations. Please see the Appendix for detailed derivations.  

\begin{table}[ht]
\begin{center}
{ \footnotesize
\begin{tabular}{|>{\centering\arraybackslash}p{18mm}|>{\centering\arraybackslash}p{72mm}|>{\centering\arraybackslash}p{22mm}|>{\centering\arraybackslash}p{18mm}|}

\hline
\textbf{System}		& \textbf{Consistency Relaxation} &  Bound $B$ & 
\textbf{Novelty}		\\
\hline \hline
Shared-Memory			& $\tau_{max}$-Bounded Asynchrony  & $\sqrt{d} \tau_{max} M$ & Extends~\cite{ desa2015hogwild, alistarh2018convergence}\\
\hline
Message-Passing		& $\tau_{max}$-Bounded Asynchrony & $\frac{ (p-1)\tau_{max}M}{p}$ & Reproves~\cite{lian2015asynchronous} \\
\hline
Message-Passing				& $\tau_{max}$-Bounded Asynchrony  & $O(\frac{(p-1)\tau_{max}\sigma}{p})$ & New\\
 \hline
Message-Passing		& \emph{Distributed} Communication-Compression ~~~~~~~~~~~~~~~~~~~~~~~~~~~~ with Error Feedback	&	$\sqrt{\frac{(2-\gamma)\gamma}{(1-\gamma)^3}}M$ & Improves~\cite{TopK, StichCJ18, KarimireddyRSJ19} \\
\hline
%Message-passing			& Synchronous Model Averaging wit Local Updates  & ??? & 
%Re-derives
%\cite{local}\\
%\hline
Message-Passing				& Synchronous, $f$ {Crash} or {Message-drop} Faults  & ${Mf}/{p}$ & {New} \\ \hline
Message-Passing				& Synchronous, $f$ {Crash} or {Message-drop} Faults  & $O(\sigma f/p)$ & {New} \\ \hline
% Message-Passing				& Decentralized Synchronous, $f$ Byzantine faults  & \textbf{New} \\
% \hline
% Message-Passing				& Synchronous, message-drop faults & $\frac{\alpha^2 f^2 M^2}{p^2}$ & 
% \textbf{New} \\ \hline
Message-Passing				& Variance bounded \emph{Elastic Scheduler}
&  $O(\sigma)$ & 
{New} \\
\hline

% Shared-Memory				& $\tau$-Bounded Asynchronous 
% Sparsification  & 
% \textbf{New} \\
% \hline
\end{tabular}
}
\end{center}
\caption{Summary of elastic consistency bounds.}
\label{table:summary}
\end{table}

\paragraph{Implications.} Plugging the asynchronous shared-memory bound into Theorems~\ref{thm:nonconvexsingle} and~\ref{thm:convexsingle} implies convergence bounds in the smooth non-convex case, extending~\cite{desa2015hogwild, alistarh2018convergence}, which focus on the convex case, whereas the asynchronous message-passing bound implies similar bounds  to the best known in the non-convex case for this model~\cite{lian2015asynchronous}.  
For synchronous message-passing with communication-compression, our framework implies the first general bounds for the \emph{parallel, multi-node} case: references~\cite{StichCJ18, KarimireddyRSJ19} derive tight rates for such methods, but in the \emph{sequential} case, where there is a single node which applies the compressed gradient onto its model, whereas~\cite{TopK} considers the multi-node case, but requires an additional analytic assumption. 
Please see Section~\ref{sec:related} for additional discussion. 
In the crash-prone case, elastic consistency implies  convergence bounds for crash or message-omission faults. (Although the formula is the same, the failure definition and the derivations are different.) 
The framework also allows to combine consistency relaxations, i.e. consider communication-compression or asynchronous models with crashes. 

One relative weakness of the above results is that the bounds depend on the gradient second-moment bound. This is not due to elastic consistency itself, but due to the fact that we needed a bound on $M$ to bound the elastic consistency constant for these systems, which is consistent with previous work, e.g.~~\cite{desa2015hogwild, alistarh2018convergence, lian2015asynchronous}. 
Next, we remove this limitation by slightly altering the algorithms. 

\vspace{-1em}

\section{Practical Application: Elastic Scheduling} 
\vspace{-1em}

\label{sec:exp}

So far, we have used elastic consistency to derive bounds for existing models and methods. 
Next, we ask whether it can inspire \emph{new} distribution schemes. 
Our target application will be communication scheduling in the context of training deep neural networks (DNNs). 
More precisely, when performing distributed training of DNNs via back-propagation, it is common to schedule parts of the communication in parallel with the computation. For instance, assuming we train a three-layer network $A \rightarrow B \rightarrow C$, the gradient of the last layer $C$ will be ``ready'' to sync before layers $A, B$, and can be transmitted earlier. Several recent papers, e.g.~\cite{P3, byteps} investigate scheduling mechanisms for leveraging this type of communication-computation overlap. 
A common feature of these schedulers~\cite{P3, byteps} is  ensuring \emph{perfect consistency} at each processor: communication can be reordered w.r.t. computation only if it does not deviate from the sequential execution. 

Our analysis suggests that consistency can be relaxed as long as the consistency bound is \emph{small}. 
Specifically, we will allow  processes to start their next forward pass \emph{before} all layers are synchronized, as long as enough gradient norm has been received to ensure a small elastic consistency constant.  

\paragraph{The Elastic Scheduler.} 
We will present two schedulers, a \emph{norm-bounded} and a \emph{variance-bounded} one. 
Given $0 \leq \beta \leq 1$, the $\beta-$norm-bounded algorithm at a fixed processor $i$ is as follows. 
Assume that the processor is at iteration $t$, and has completed its backward pass over the network, obtaining a local gradient $\tg (\vec{v}_t^i)$ (computed over the local view $\vec{v}^t_i$).
Normally, the processor would need to wait for all parameters to be synchronized, i.e. to receive all the other processors' gradients.  
However, the \emph{elastic scheduling rule} will allow the processor to start its next forward pass \emph{before} this point, on the inconsistent view,
as long as the norm of the received update is at least a $\beta$-fraction of \emph{its own gradient} at the step. 
In this case, the processor speculatively goes forward with its forward-backward step. For both versions, the processor cannot speculate ahead by more than $1$ step. 
In this case, the elastic consistency constant $B$ is upper bounded by $O(M)$. 

The \emph{variance-bounded} version is slightly cleverer: if a processor finishes its forward-backward pass ``early'' and does not receive all the other processors' gradients within a small timeout, it will proceed to replace the missing gradients \emph{with its own}, and speculatively performs the forward-backward pass based on this inconsistent view. 
Critically, the processor will ``correct'' the gradient step retroactively, once it has received the full gradient in the next step. 
The consistency bound in this case becomes $3\sigma$. The proof is provided in Appendix (See section \ref{sec:elasticscheduling}).

More generally, variance-bounded scheduler also inspires a way of improving the elastic consistency bounds for crash and message-drop faults and asynchrony with delay $\tau_{max}$ : instead of proceeding without the dropped messages, each node can \emph{replace the corresponding missing gradient with its own}. This will allow
us to replace the second moment bound $M$ with variance bound $O(\sigma)$.

% Interestingly, its proof does not require a second-moment gradient bound. 
% \begin{lemma}[Elastic Scheduling]
%     The elastic consistency constant of the above scheduling mechanism is upper bounded by $B =  (p - 1) M.$
% \end{lemma}

\begin{figure}[ht]
    \centering
    \begin{minipage}[b]{0.45\linewidth}
        \centering
        \includegraphics[width=7cm]{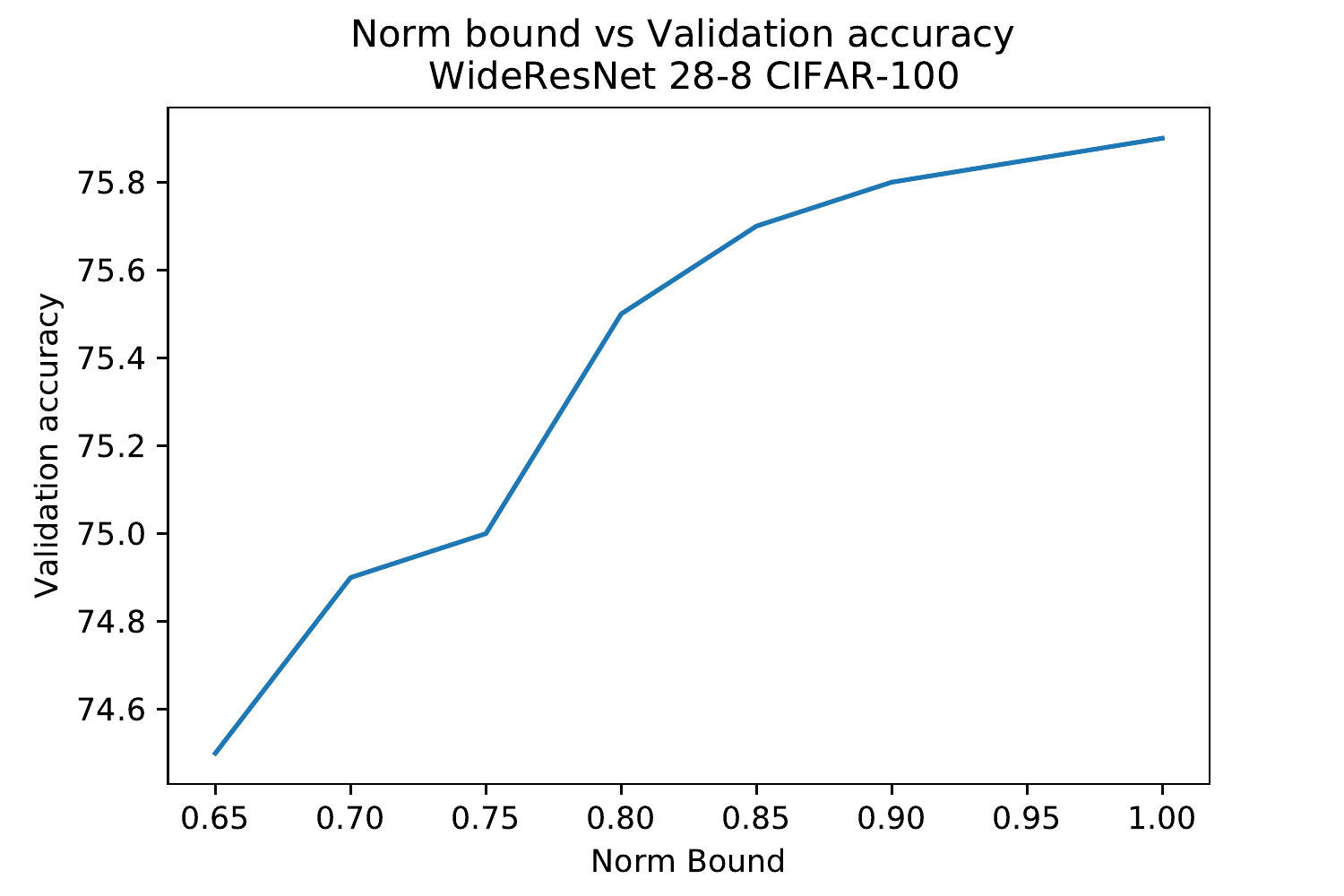}
        % \caption{Variance for least-squares.} 
        \label{fig:reg-variance}
    \end{minipage}
    \quad
    \begin{minipage}[b]{0.45\linewidth}
        \centering
        \includegraphics[width=7cm]{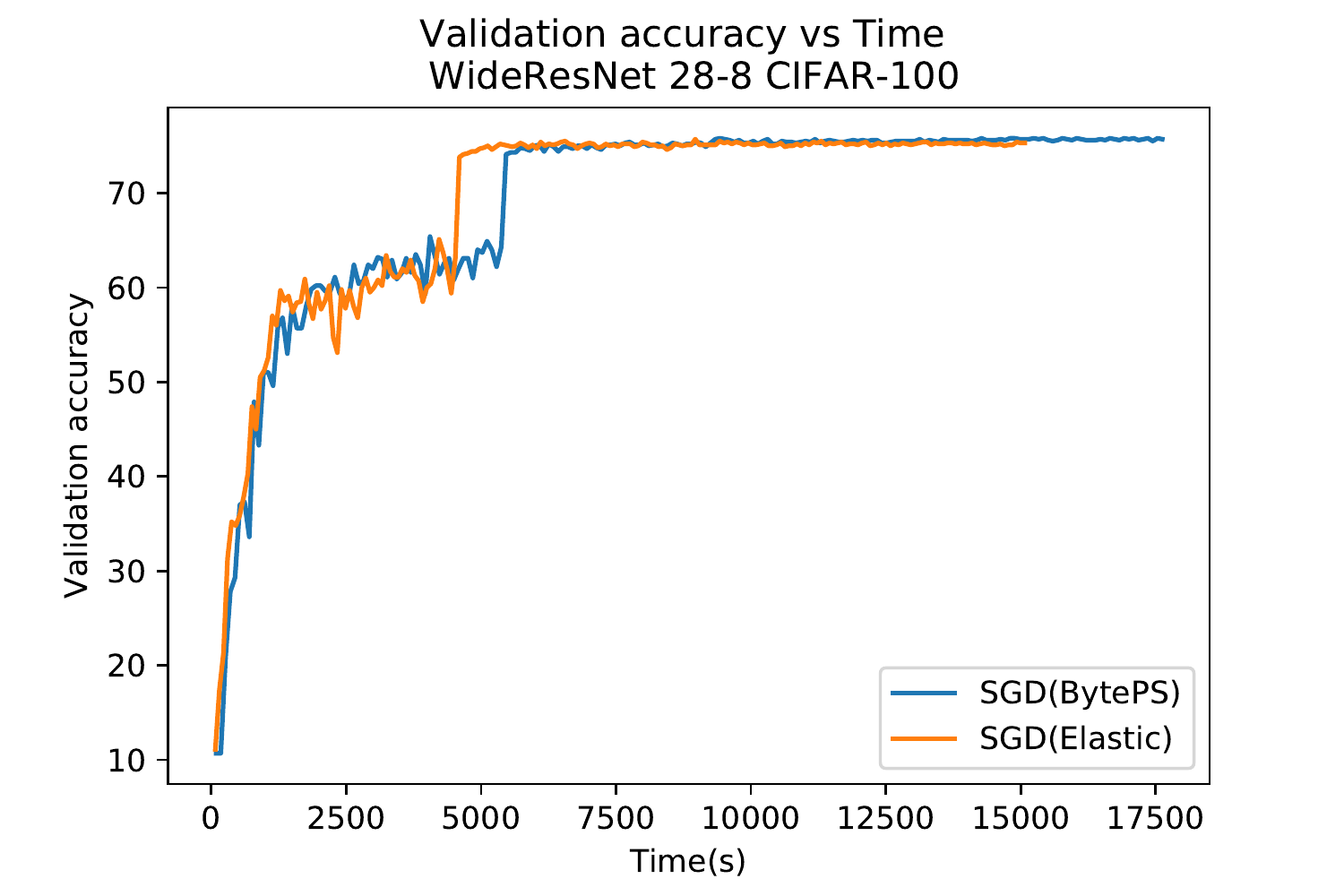}
        % \caption{Convergence for least-squares.}
        \label{fig:reg-conv}
    \end{minipage}
    \caption{Elastic bound-v-accuracy (\textbf{left}) and accuracy-v-time (\textbf{right}) for WRN28x8 on CIFAR-100.}
    \label{fig:exp}
\end{figure}

\paragraph{Implementation.} 
We implement the elastic scheduler on top of the Horovod distributed training framework~\cite{sergeev2018horovod}, which allows us to interface with both Pytorch~\cite{PyTorch} and Tensorflow~\cite{TF}. 
Our implementation is  decentralized---as opposed to BytePS~\cite{byteps}, which has a Parameter Server architecture---but  the performance of our perfectly-consistent implementation is identical to the original BytePS. 
We conduct our experiments in Pytorch, testing convergence and speedup for residual networks~\cite{he2016deep} applied to image classification tasks on the CIFAR-10/100 datasets~\cite{krizhevsky2009learning}. 
Hyperparameter values are standard, given in the Appendix. Experiments are performed on two AWS EC2 P3.2xlarge instances, each with a V100 GPU, and averaged over 3 trials.  

\paragraph{Experiments.} 
We first examine the impact of the elastic consistency bound on accuracy. 
For this, we execute the norm-bounded variant with different values of $\beta \in [0, 1]$, and examine the top validation accuracy.  
Figure~\ref{fig:exp} (left) shows the strong  correlation between these two measures for WideResNet28x8~\cite{zagoruyko2016wide} on CIFAR-100, confirming our analysis. 
Next, we examine the potential for speedup of the elastic scheduler. 
As frequent re-computation of the L2 norm is expensive, we implement a relaxed variant which tracks the ratio of parameters received (L0 norm). 
The results for $\beta = 0.8$ are given in Figure~\ref{fig:exp} (right), showing a $\sim 20\%$ speedup versus the (highly performant) baseline implementation of BytePS, without accuracy loss. 
The Appendix contains a full experimental report, with the variance-bounded version and additional models and datasets.

% \vspace{-0.7em}
\section{Related Work and Discussion}
% \vspace{-0.7em}
%\paragraph{Related Work.}

\label{sec:related}
Distributed machine learning has recently gained significant practical adoption,  e.g.~\cite{dean2012large,ho2013more,chilimbi2014project, zhang2015deep, xing2015petuum, P3, byteps}. 
Consequently, there has been significant work on introducing and analyzing distributed relaxations of SGD~\cite{hogwild, ho2013more, desa2015hogwild, lian2015asynchronous,  chaturapruek2015asynchronous, leblond2016asaga,alistarh2018convergence,  wang2018cooperative, woodworth2018graph, KarimireddyRSJ19, stich2019error, lu2020mixml}. Due to space constraints, we cover in detail only work that is technically close to ours. 

Specifically, De Sa et al. \cite{desa2015hogwild} were the first to consider a unified 
analysis framework for asynchonous and communication-compressed iterations.
Relative to it, our framework improves in three respects: (i) it does not require 
stringent gradient sparsity assumptions; 
(ii) it is also able to analyze the case where the updates are not unbiased 
estimators of the gradient, which allows extensions to error-feedback communication-reduction; 
and (3) it also tackles convergence for general non-convex 
objectives. 
Reference~\cite{lian2015asynchronous} presented the first general analysis of asynchronous non-convex SGD , without  communication-reduction. 
Qiao et al.~\cite{DBLP:conf/icml/QiaoAZX19} model 
 asynchrony and communication reduction as
perturbations of the SGD iteration, and introduce a metric called ``rework cost,'' which can be subsumed into the elastic consistency bound.

Karimireddy et al.~\cite{KarimireddyRSJ19}
 analyze communication-compression with error feedback, and present a general notion of $\delta$-compressor to model communication-reduced consistency relaxations; later, the framework was extended to include \emph{asynchronous} iterations~\cite{stich2019error}. 
 Every method satisfying the $\delta$-compressor property is elastically-consistent, although the converse is not true. 
 Relative to this work, our framework generalizes in one important practical aspect, as it allows the analysis in \emph{distributed} settings:~\cite{KarimireddyRSJ19, stich2019error} assume that the iterations are performed at a single processor, which may compress gradients or view inconsistent information only with respect to \emph{its own} earlier iterations. 
 This extension is non-trivial; tackling this more realistic setting previously required additional analytic assumptions~\cite{TopK}. 
 
We have proposed a new and fairly general framework for analyzing inconsistent SGD iterations. 
Its main advantages are \emph{generality}---as we have shown, elastic consistency is satisfied by virtually all known distributed models and by many consistency relaxations---and \emph{simplicity}: as illustrated by stochastic scheduling, the method can reduce a complex analysis to simply bounding the elastic consistency constant. 
Inspired by this technical condition, we introduce new and efficient scheduling mechanisms. 
More generally, we believe that elastic consistency  could inspire new distributed variants, and be used to derive convergence in a streamlined, more modular manner. 
One key line of extension which we plan to pursue in future work is to study whether elastic consistency can be extended to other first-order distributed optimization methods, such as stochastic coordinate descent, or zeroth- or second-order optimization methods.

\bibliographystyle{plain}

%\onecolumn
%\normalfont
%\appendix

\appendix
\tableofcontents

\section{Detailed Proofs}
\subsection{Complete Proof of Convergence in the Non-Convex Case}\label{proof:full-non-con}

\begin{lemma} \label{lem:singlestepnonconvex}
Consider SGD iterations defined in 
	(\ref{eqn:SGDsinglestep}) and satisfying the elastic consistency bound (\ref{eqn:x-def}). For a smooth non-convex objective function $f$ and the constant learning rate $\alpha \le \frac{1}{6L}$.
	We have that:
	\begin{equation}
	    \E[f(\vec{x}_{t+1})] \le  \E[f(\vec{x}_t)]-\frac{\alpha}{4} \E\|\nabla f(\vec{x}_t)\|^2+\frac{\alpha^3B^2 L^2}{2} +\frac{3L\alpha^2\sigma^2}{2}+\frac{3L^3\alpha^4B^2}{2}.  
	\end{equation}
\end{lemma}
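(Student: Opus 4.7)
My plan is to use the standard $L$-smoothness descent inequality and then control both the inner-product and squared-norm terms that appear, by decomposing the stochastic gradient at the stale view $\vec{v}_t^i$ into contributions from the true gradient at $\vec{x}_t$, the ``staleness'' bias $\nabla f(\vec{v}_t^i)-\nabla f(\vec{x}_t)$, and the sampling noise $\tilde{G}(\vec{v}_t^i)-\nabla f(\vec{v}_t^i)$. Specifically, $L$-smoothness (\ref{eqn:smooth_gradients_f}) applied to the update $\vec{x}_{t+1}-\vec{x}_t=-\alpha\tilde{G}(\vec{v}_t^i)$ gives
\[
f(\vec{x}_{t+1})\le f(\vec{x}_t)-\alpha\langle \nabla f(\vec{x}_t),\tilde{G}(\vec{v}_t^i)\rangle+\tfrac{L\alpha^2}{2}\|\tilde{G}(\vec{v}_t^i)\|^2.
\]
I then take expectations and handle the two right-hand terms separately.

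For the cross term, because the oblivious scheduler makes $\vec{v}_t^i$ measurable with respect to the past and the step-$t$ sample is independent of this past, the tower rule combined with unbiasedness (\ref{eqn:unbiased_estimator}) yields $\mathbb{E}\langle\nabla f(\vec{x}_t),\tilde{G}(\vec{v}_t^i)\rangle=\mathbb{E}\langle\nabla f(\vec{x}_t),\nabla f(\vec{v}_t^i)\rangle$. Writing $\nabla f(\vec{v}_t^i)=\nabla f(\vec{x}_t)+(\nabla f(\vec{v}_t^i)-\nabla f(\vec{x}_t))$ and applying Young's inequality with parameter $1$ gives a lower bound of $\tfrac{1}{2}\mathbb{E}\|\nabla f(\vec{x}_t)\|^2-\tfrac{1}{2}\mathbb{E}\|\nabla f(\vec{v}_t^i)-\nabla f(\vec{x}_t)\|^2$. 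Smoothness bounds the gradient gap by $L^2\|\vec{v}_t^i-\vec{x}_t\|^2$, and the elastic consistency bound (\ref{eqn:x-def}) then controls this in expectation by $\alpha^2 B^2$, producing the $\tfrac{\alpha^3 B^2 L^2}{2}$ term in the final inequality.

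For the squared-norm term, I will use the three-way decomposition
\[
\tilde{G}(\vec{v}_t^i)=\nabla f(\vec{x}_t)+\bigl(\nabla f(\vec{v}_t^i)-\nabla f(\vec{x}_t)\bigr)+\bigl(\tilde{G}(\vec{v}_t^i)-\nabla f(\vec{v}_t^i)\bigr),
\]
and the inequality $\|a+b+c\|^2\le 3(\|a\|^2+\|b\|^2+\|c\|^2)$. Bounding the bias term by $L^2\alpha^2 B^2$ via smoothness plus (\ref{eqn:x-def}) and the noise term by $\sigma^2$ via (\ref{eqn:variance_is_bounded_assumption_f}) yields $\mathbb{E}\|\tilde{G}(\vec{v}_t^i)\|^2\le 3\mathbb{E}\|\nabla f(\vec{x}_t)\|^2+3\sigma^2+3L^2\alpha^2 B^2$. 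The factor of $3$ is precisely what is needed to match the $3L\alpha^2\sigma^2/2$ and $3L^3\alpha^4 B^2/2$ constants in the statement.

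Combining everything leaves a gradient-norm coefficient of $-\alpha\bigl(\tfrac{1}{2}-\tfrac{3L\alpha}{2}\bigr)$; the step-size restriction $\alpha\le\tfrac{1}{6L}$ forces $\tfrac{3L\alpha}{2}\le\tfrac{1}{4}$, so this coefficient is at most $-\alpha/4$, which closes the proof. There is no genuine obstacle here---the main subtlety is choosing the \emph{three-way} (rather than two-way) decomposition of $\tilde{G}(\vec{v}_t^i)$, since an $\|a+b\|^2\le 2\|a\|^2+2\|b\|^2$ split would give constants $1$ and $1$ instead of the $3/2$ and $3/2$ that appear in the claimed bound, and would also require a tighter learning-rate condition to absorb the gradient term.
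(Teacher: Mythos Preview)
Your proof is correct and follows essentially the same route as the paper: descent lemma, unbiasedness to reduce the cross term to $\langle\nabla f(\vec{x}_t),\nabla f(\vec{v}_t^i)\rangle$, Young's inequality to extract $\tfrac{1}{2}\|\nabla f(\vec{x}_t)\|^2-\tfrac{1}{2}\|\nabla f(\vec{v}_t^i)-\nabla f(\vec{x}_t)\|^2$, the three-way split with the factor-$3$ bound on the squared gradient, smoothness plus (\ref{eqn:x-def}) and (\ref{eqn:variance_is_bounded_assumption_f}) to control the pieces, and finally $\alpha\le 1/(6L)$ to absorb the $\tfrac{3L\alpha^2}{2}\|\nabla f(\vec{x}_t)\|^2$ term. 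The paper's write-up conditions explicitly on $(\vec{x}_t,\vec{v}_t^i)$ before taking the outer expectation, but this is exactly the tower-rule step you describe.
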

\begin{proof}
We condition on $\vec{x}_t$ and $\vec{v}_t^i$ and calculate expectation with respect to the randomness of stochastic gradient (We call this $\E_s$).
By descent lemma we get that :
\begin{align*}
\E_s[f(\vec{x}_{t+1})] &\le f(\vec{x}_t)-\alpha \E_s \langle \tilde G(\vec{v}_t^i), \nabla f(\vec{x}_t)  \rangle+\frac{L^2\alpha^2}{2} \E_s \|\tilde G(\vec{v}_t^i)\|^2 \\& \overset{(\ref{eqn:unbiased_estimator})}{=} f(\vec{x}_t)-\alpha \|\nabla f(\vec{x}_t)\|^2+\alpha \langle \nabla f(\vec{x}_t) -  \nabla f(\vec{v}_t^i), \nabla f(\vec{x}_t) \rangle \\&\quad\quad\quad\quad+\frac{L\alpha^2}{2} \E_s\|\tilde G(v_t^i)- \nabla f(\vec{v}_t^i)+\nabla f(\vec{v}_t^i)-\nabla f(\vec{x}_t)+\nabla f(\vec{x}_t)\|^2 \\ &\overset{Cauchy-Schwarz}{\le}
f(\vec{x}_t)-\alpha \|\nabla f(\vec{x}_t)\|^2+\alpha \langle \nabla f(\vec{x}_t) -  \nabla f(\vec{v}_t^i), \nabla f(\vec{x}_t) \rangle \\&\quad\quad\quad\quad+\frac{3L\alpha^2}{2}\Big(
\E_s\|\tilde G(\vec{v}_t^i)- \nabla f(\vec{v}_t^i)\|^2+\|\nabla f(\vec{v}_t^i)-\nabla f(\vec{x}_t)\|^2+\|\nabla f(\vec{x}_t)\|^2\Big) \\ &\overset{Young}{\le}
f(\vec{x}_t)-\alpha \|\nabla f(\vec{x}_t)\|^2+\frac{\alpha}{2} \|\nabla f(\vec{x}_t) -  \nabla f(\vec{v}_t^i)\|^2 +\frac{\alpha}{2} \|\nabla f(\vec{x}_t)\|^2 \\&\quad\quad\quad\quad+\frac{3L\alpha^2}{2}\Big(
\E_s\|\tilde G(\vec{v}_t^i)- \nabla f(\vec{v}_t^i)\|^2+\|\nabla f(\vec{v}_t^i)-\nabla f(\vec{x}_t)\|^2+\|\nabla f(\vec{x}_t)\|^2\Big) \\&\overset{(\ref{eqn:variance_is_bounded_assumption_f})}{\le} 
f(\vec{x}_t)-\frac{\alpha}{2} \|\nabla f(\vec{x}_t)\|^2+\frac{\alpha}{2} \|\nabla f(\vec{x}_t) -  \nabla f(\vec{v}_t^i)\|^2 \\&\quad\quad\quad\quad+\frac{3L\alpha^2}{2}\Big(
\sigma^2+\|\nabla f(\vec{v}_t^i)-\nabla f(\vec{x}_t)\|^2+\|\nabla f(\vec{x}_t)\|^2\Big)
\\&\overset{(\ref{eqn:smooth_gradients_f})}{\le} f(\vec{x}_t)-\frac{\alpha}{2} \|\nabla f(\vec{x}_t)\|^2+\frac{\alpha L^2}{2} \|\vec{x}_t -  \vec{v}_t^i\|^2 +\frac{3L\alpha^2}{2}\Big(
\sigma^2+L^2\|\vec{v}_t^i-\vec{x}_t\|^2+\|\nabla f(\vec{x}_t)\|^2\Big)
\end{align*}
Next, we use elastic consistency bound (\ref{eqn:x-def}) and the fact that $\alpha \le \frac{1}{6L}$ in the above inequality:
\begin{align*}
    \E[f(\vec{x}_{t+1})]&=\E[\E[f(\vec{x}_{t+1})|\vec{x}_t,\vec{v}_t^i]] \\&\le 
    \E[f(\vec{x}_t)]-\frac{\alpha}{4} \E\|\nabla f(\vec{x}_t)\|^2+\frac{\alpha L^2}{2} \E\|\vec{x}_t -  \vec{v}_t^i\|^2 +\frac{3L\alpha^2}{2}\Big(
\sigma^2+L^2\E\|\vec{v}_t^i-\vec{x}_t\|^2\Big) \\ &\le
\E[f(\vec{x}_t)]-\frac{\alpha}{4} \E\|\nabla f(\vec{x}_t)\|^2+\frac{\alpha^3B^2 L^2}{2} +\frac{3L\alpha^2\sigma^2}{2}+\frac{3L^3\alpha^4B^2}{2}.
\end{align*}
\end{proof}

\begin{reptheorem} {thm:nonconvexsingle}
        Consider SGD iterations defined in 
	    (\ref{eqn:SGDsinglestep}) and satisfying elastic consistency bound (\ref{eqn:x-def}). For a smooth non-convex objective function $f$, whose minimum $x^*$ we are trying to find and the constant learning rate $\alpha = \frac{1}{\sqrt{T}}$, where $T\ge 36L^2$ is the number of iterations:
		
		\[
			\min_{t\in[T-1]}\mathbb{E}\|\nabla f(\vec{x}_t)\|^2 \le \frac{4(f(\vec{x}_0)-f(x^*))}{\sqrt{T}} +\frac{2B^2 L^2}{T} +\frac{6L\sigma^2}{\sqrt{T}}+\frac{6L^3B^2}{T\sqrt{T}}.			
		\]
\end{reptheorem}
\begin{proof}
   Observe that since $\alpha = 1/\sqrt{T} \le 1/6L$, by Lemma \ref{lem:singlestepnonconvex} 
   we have that :

   \begin{equation*}
    \E[f(\vec{x}_{t+1})] \le \E[f(\vec{x}_t)]-\frac{\alpha}{4} \E\|\nabla f(\vec{x}_t)\|^2+\frac{\alpha^3B^2 L^2}{2} +\frac{3L\alpha^2\sigma^2}{2}+\frac{3L^3\alpha^4B^2}{2}.
    \end{equation*}
    By summing the above inequality for $t=0,1,..,T-1$ we get that:
    \begin{equation*}
    \sum_{t=0}^{T-1} \E[f(\vec{x}_{t+1})] \le \sum_{t=0}^{T-1} \Big ( \E[f(\vec{x}_t)]-\frac{\alpha}{4} \E\|\nabla f(\vec{x}_t)\|^2+\frac{\alpha^3B^2 L^2}{2} +\frac{3L\alpha^2\sigma^2}{2}+\frac{3L^3\alpha^4B^2}{2} \Big).
    \end{equation*}
    Which can be rewritten as:
    \begin{equation*}
    \sum_{t=0}^{T-1} \frac{\alpha}{4} \E\|\nabla f(\vec{x}_t)\|^2 \le f(\vec{x}_0)-\E[f(\vec{x}_{T})] +\frac{\alpha^3B^2 L^2T}{2} +\frac{3L\alpha^2\sigma^2T}{2}+\frac{3L^3\alpha^4B^2T}{2}.
    \end{equation*}
	Next, we divide by $\frac{\alpha T}{4}$ and use the fact that $\E[f(\vec{x}_{T})]\ge f(x^*)$:
	\begin{equation*}
    \sum_{t=0}^{T-1} \frac{1}{T} \E\|\nabla f(\vec{x}_t)\|^2 \le \frac{4(f(\vec{x}_0)-f(x^*))}{\alpha T} +2\alpha^2B^2 L^2 +6L\alpha\sigma^2+6 L^3\alpha^3B^2.
    \end{equation*}
    
    By plugging in the value of $\alpha$ we get:
    \begin{align*}
			\min_{t\in[T-1]}\mathbb{E}\|\nabla f(\vec{x}_t)\|^2 &\le \sum_{t=0}^{T-1} \frac{1}{T} \E\|\nabla f(\vec{x}_t)\|^2 \le \frac{4(f(\vec{x}_0)-f(x^*))}{\alpha T} +2\alpha^2B^2 L^2 +6L\alpha\sigma^2+6L^3\alpha^3B^2 \\ &\le \frac{4(f(\vec{x}_0)-f(x^*))}{\sqrt{T}} +\frac{2B^2 L^2}{T} +\frac{6L\sigma^2}{\sqrt{T}}+\frac{6L^3B^2}{T\sqrt{T}}.
    \end{align*}

\end{proof}

\begin{lemma} \label{lem:multiplestepsnonconvex}
Consider SGD iterations defined in 
	(\ref{eqn:SGDmultiplesteps}) and satisfying elastic consistency bound (\ref{eqn:x-def}). For a smooth non-convex objective function $f$ and the constant learning rate $\alpha \le \frac{1}{8L}$.
	We have that:
	\begin{equation}
	    \E[f(\vec{x}_{t+1})] \le \E[f(\vec{x}_t)]-\frac{\alpha}{8} \E\|\nabla f(\vec{x}_t)\|^2+\frac{\alpha^3B^2 L^2}{2} +\frac{L\alpha^2\sigma^2}{p}+2L^3\alpha^4B^2.  
	\end{equation}
\end{lemma}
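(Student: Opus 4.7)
The plan is to mimic the single-step proof (Lemma \ref{lem:singlestepnonconvex}) closely, but exploit independence of the stochastic gradients across the workers in $I_t$ to get a $\sigma^2/p$ variance term (instead of $\sigma^2$), and handle the fact that $|I_t|$ may lie strictly between $p/2$ and $p$.

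I would begin from the descent lemma applied to $\vec{x}_{t+1}=\vec{x}_t-\alpha g_t$, where $g_t:=\frac{1}{p}\sum_{i\in I_t}\tilde G(\vec{v}_t^i)$, condition on $\vec{x}_t$ and the views $\{\vec{v}_t^i\}_{i\in I_t}$, and take expectation $\E_s$ over the stochastic-gradient randomness. Unbiasedness (\ref{eqn:unbiased_estimator}) gives $\E_s[g_t]=\bar G_t:=\frac{1}{p}\sum_{i\in I_t}\nabla f(\vec{v}_t^i)$, while mutual independence of the $\tilde G(\vec{v}_t^i)$ given the views and the bounded-variance assumption (\ref{eqn:variance_is_bounded_assumption_f}) yield
\[
\E_s\|g_t-\bar G_t\|^2=\frac{1}{p^2}\sum_{i\in I_t}\E_s\|\tilde G(\vec{v}_t^i)-\nabla f(\vec{v}_t^i)\|^2\le \frac{|I_t|\sigma^2}{p^2}\le \frac{\sigma^2}{p},
\]
which is where the $\sigma^2/p$ factor in the conclusion ultimately comes from.

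Next, for the linear term I would write $\langle\bar G_t,\nabla f(\vec x_t)\rangle=\frac{|I_t|}{p}\|\nabla f(\vec x_t)\|^2+\langle \bar G_t-\tfrac{|I_t|}{p}\nabla f(\vec x_t),\nabla f(\vec x_t)\rangle$, use $|I_t|/p\ge 1/2$ on the first piece, and Young's inequality on the second (with a small coefficient, e.g. $\tfrac14$) combined with Jensen/Cauchy--Schwarz in the form $\|\tfrac1p\sum_{i\in I_t}(\nabla f(\vec v_t^i)-\nabla f(\vec x_t))\|^2\le \tfrac{1}{p}\sum_{i\in I_t}\|\nabla f(\vec v_t^i)-\nabla f(\vec x_t)\|^2$, followed by smoothness (\ref{eqn:smooth_gradients_f}). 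The quadratic term $\|g_t\|^2$ splits via $\E_s\|g_t\|^2=\E_s\|g_t-\bar G_t\|^2+\|\bar G_t\|^2$; the first piece is already handled, and the second I would bound by $2\|\nabla f(\vec x_t)\|^2+\tfrac{2L^2}{p}\sum_{i\in I_t}\|\vec v_t^i-\vec x_t\|^2$ by the same add-and-subtract trick.

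Taking the outer expectation and invoking elastic consistency (\ref{eqn:x-def}) gives $\E\|\vec v_t^i-\vec x_t\|^2\le \alpha^2 B^2$ for each $i$, so the $\tfrac{1}{p}\sum_{i\in I_t}$ averages are $\le \alpha^2 B^2$ as well. Collecting terms produces an inequality of the form
\[
\E f(\vec x_{t+1})\le \E f(\vec x_t)-\alpha\Bigl(\tfrac12-\tfrac14-cL\alpha\Bigr)\E\|\nabla f(\vec x_t)\|^2+\tfrac{\alpha^3 B^2L^2}{2}+\tfrac{L\alpha^2\sigma^2}{p}+c'L^3\alpha^4B^2,
\]
and imposing $\alpha\le 1/(8L)$ makes the coefficient of $\E\|\nabla f(\vec x_t)\|^2$ at most $-\alpha/8$, matching the statement. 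The main obstacle I anticipate is bookkeeping: choosing the Young's-inequality constants so that the $\|\nabla f(\vec x_t)\|^2$ coefficient lands exactly at $1/8$ under $\alpha\le 1/(8L)$, while simultaneously keeping the $B^2$ prefactors tight enough to yield $\frac{\alpha^3B^2L^2}{2}+2L^3\alpha^4B^2$ rather than something larger; careful handling of the $|I_t|/p\in[\tfrac12,1]$ factor (so that it does not leak a spurious constant into any of the residual terms) is the delicate step.
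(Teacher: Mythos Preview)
Your proposal is correct and follows essentially the same route as the paper: descent lemma conditioned on $\vec{x}_t$ and the views, independence across $i\in I_t$ to get the $\sigma^2/p$ variance term, add-and-subtract $\nabla f(\vec{x}_t)$ in the linear and quadratic terms, Young's inequality on the cross term, smoothness, elastic consistency, and finally the bounds $p/2\le |I_t|\le p$ together with $\alpha\le 1/(8L)$ to land the $-\alpha/8$ coefficient. The only cosmetic difference is that you use the exact variance decomposition $\E_s\|g_t\|^2=\E_s\|g_t-\bar G_t\|^2+\|\bar G_t\|^2$ whereas the paper splits via Cauchy--Schwarz (picking up an extra factor of~2), but this does not change the argument or the final constants.
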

\begin{proof}
We condition on $\vec{x}_t$ and $\{\vec{v}_t^i|i \in I_t\}$ and calculate expectation with respect to the randomness of stochastic gradient(We call this $\E_s$).
By descent lemma we get that :
\begin{align*}
\E_s[f(\vec{x}_{t+1})] &\le f(\vec{x}_t)-\sum_{i \in I_t} \frac{\alpha}{p} \E_s \langle  \tilde G(\vec{v}_t^i), \nabla f(\vec{x}_t)  \rangle+\frac{L^2\alpha^2}{2p^2} \E_s \|\sum_{i\in I_t}\tilde G(\vec{v}_t^i)\|^2 \\& \overset{(\ref{eqn:unbiased_estimator})}{=} f(\vec{x}_t)-\frac{\alpha|I_t|}{p} \|\nabla f(\vec{x}_t)\|^2+\frac{\alpha}{p} \sum_{i \in I_t} \langle \nabla f(\vec{x}_t) -  \nabla f(\vec{v}_t^i), \nabla f(\vec{x}_t) \rangle \\&\quad\quad\quad\quad+\frac{L\alpha^2}{2p^2} \E_s\Bigg\|\sum_{i \in I_t} \Big(\tilde G(v_t^i)- \nabla f(\vec{v}_t^i)+\nabla f(\vec{v}_t^i)-\nabla f(\vec{x}_t)+\nabla f(\vec{x}_t)\Big)\Bigg\|^2 \\ &\overset{Cauchy-Schwarz}{\le}
f(\vec{x}_t)-\frac{\alpha|I_t|}{p} \|\nabla f(\vec{x}_t)\|^2+\frac{\alpha}{p} \sum_{i \in I_t} \langle \nabla f(\vec{x}_t) -  \nabla f(\vec{v}_t^i), \nabla f(\vec{x}_t) \rangle \\&\quad\quad\quad\quad+\frac{L\alpha^2}{p^2} \E_s\Bigg\|\sum_{i \in I_t} \Big(\tilde G(v_t^i)- \nabla f(\vec{v}_t^i)\Big)\Bigg\|^2 
\\&\quad\quad\quad\quad+\frac{L\alpha^2}{p^2} \E_s\Bigg\|\sum_{i \in I_t} \Big(\nabla f(\vec{v}_t^i)-\nabla f(\vec{x}_t)+\nabla f(\vec{x}_t)\Big)\Bigg\|^2
\end{align*}
Observe that $\E_s\Bigg \|\sum_{i \in I_t} \Big(\tilde G(v_t^i)- \nabla f(\vec{v}_t^i)\Big)\Bigg\|^2$ is the same 
as a variance of random variable $\sum_{i \in I_t} \tilde G(v_t^i)$. Recall that we are conditioning on $\vec{x}_t$ and $\{\vec{v}_t^i|i \in I_t\}$, hence $\tilde G(v_t^i)- \nabla f(\vec{v}_t^i)$ are independent random variables
(because agents in $I_t$ compute stochastic gradients independently).
This means that because of the variance bound (\ref{eqn:variance_is_bounded_assumption_f}):
\begin{equation*}
\E_s\Bigg \|\sum_{i \in I_t} \Big(\tilde G(v_t^i)- \nabla f(\vec{v}_t^i)\Big)\Bigg\|^2 \le 
|I_t|\sigma^2.
\end{equation*}
By combining the two inequalities above we get:
\begin{align*}
\E_s[f(\vec{x}_{t+1})] &\overset{Cauchy-Schwarz}{\le} f(\vec{x}_t)-\frac{\alpha|I_t|}{p} \|\nabla f(\vec{x}_t)\|^2+\frac{\alpha}{p} \sum_{i \in I_t} \langle \nabla f(\vec{x}_t) -  \nabla f(\vec{v}_t^i), \nabla f(\vec{x}_t) \rangle \\&\quad\quad\quad\quad+\frac{2L\alpha^2|I_t|}{p^2}\sum_{i \in I_t} \Big(
\|\nabla f(\vec{v}_t^i)-\nabla f(\vec{x}_t)\|^2+\|\nabla f(\vec{x}_t)\|^2\Big)+\frac{L\alpha^2|I_t|\sigma^2}{p^2} \\ &\overset{Young}{\le}
f(\vec{x}_t)-\frac{\alpha|I_t|}{p} \|\nabla f(\vec{x}_t)\|^2+\sum_{i \in I_t} \frac{\alpha}{2p} \|\nabla f(\vec{x}_t) -  \nabla f(\vec{v}_t^i)\|^2 +\frac{\alpha|I_t|}{2p} \|\nabla f(\vec{x}_t)\|^2 
\\&\quad\quad\quad\quad+\frac{2L\alpha^2|I_t|}{p^2}\sum_{i \in I_t} \Big(
\|\nabla f(\vec{v}_t^i)-\nabla f(\vec{x}_t)\|^2+\|\nabla f(\vec{x}_t)\|^2\Big)+\frac{L\alpha^2|I_t|\sigma^2}{p^2}
\\&\overset{(\ref{eqn:smooth_gradients_f})}{\le} f(\vec{x}_t)-\frac{\alpha|I_t|}{2p} \|\nabla f(\vec{x}_t)\|^2+\frac{\alpha L^2}{2p} \sum_{i \in I_t} \|\vec{x}_t -  \vec{v}_t^i\|^2 
\\&\quad\quad\quad\quad+\frac{2L\alpha^2|I_t|}{p^2}\sum_{i \in I_t} \Big(
L\|\vec{v}_t^i-\vec{x}_t)\|^2+\|\nabla f(\vec{x}_t)\|^2\Big)+\frac{L\alpha^2|I_t|\sigma^2}{p^2}.
\end{align*}
Next, we use elastic consistency bound in the above inequality:
\begin{align*}
    \E[f(&\vec{x}_{t+1})]=\E[\E[f(\vec{x}_{t+1})|\vec{x}_t,\{\vec{v}_t^i|i \in I_t\}]] \\&\le 
    \E[f(\vec{x}_t)]-\frac{\alpha|I_t|}{2p} \E\|\nabla f(\vec{x}_t)\|^2
    +\frac{\alpha L^2}{2p}\sum_{i \in I_t} \E\|\vec{x}_t -  \vec{v}_t^i\|^2 
    \\&\quad\quad\quad\quad+\frac{2L\alpha^2|I_t|}{p^2}\sum_{i \in I_t} \Big(
\|\nabla f(\vec{v}_t^i)-\nabla f(\vec{x}_t)\|^2+\|\nabla f(\vec{x}_t)\|^2\Big)+\frac{L\alpha^2|I_t|\sigma^2}{p^2}\\ &\le
\E[f(\vec{x}_t)]-\frac{\alpha|I_t|}{2p} \E\|\nabla f(\vec{x}_t)\|^2+\frac{\alpha^3B^2 L^2|I_t|}{2p} +\frac{L\alpha^2\sigma^2|I_t|}{p^2}+\frac{2L^3\alpha^4B^2|I_t|^2}{p^2}
\\&\quad\quad\quad\quad\quad+
\frac{2L\alpha^2\E\|\nabla f(\vec{x}_t)\|^2|I_t|^2}{p^2}
.
\end{align*}
To finish the proof recall that $p/2 \le |I_t| \le p$ and $\alpha \le \frac{1}{8L}$, which if used in the inequality above gives us:
\begin{align*}
\E[f(&\vec{x}_{t+1})] \le \E[f(\vec{x}_t)]-\frac{\alpha|I_t|}{2p} \E\|\nabla f(\vec{x}_t)\|^2+\frac{\alpha^3B^2 L^2}{2} +\frac{L\alpha^2\sigma^2}{p}+2L^3\alpha^4B^2
\\&\quad\quad\quad\quad\quad+
\frac{\alpha\E\|\nabla f(\vec{x}_t)\|^2|I_t|}{4p} \\ &\le 
\E[f(\vec{x}_t)]-\frac{\alpha|I_t|}{4p} \E\|\nabla f(\vec{x}_t)\|^2+\frac{\alpha^3B^2 L^2}{2} +\frac{L\alpha^2\sigma^2}{p}+2L^3\alpha^4B^2 \\ &\le \E[f(\vec{x}_t)]-\frac{\alpha}{8} \E\|\nabla f(\vec{x}_t)\|^2+\frac{\alpha^3B^2 L^2}{2} +\frac{L\alpha^2\sigma^2}{p}+2L^3\alpha^4B^2.
\end{align*}
\end{proof}

\begin{reptheorem} {thm:nonconvexmultiple}
        Consider SGD iterations defined in 
	    (\ref{eqn:SGDmultiplesteps}) and satisfying elastic consistency bound (\ref{eqn:x-def}). For a smooth non-convex objective function $f$, whose minimum $x^*$ we are trying to find and the constant learning rate $\alpha = \frac{\sqrt{p}}{\sqrt{T}}$, where $T \ge 64L^2p$ is the number of iterations:
		
		\[
			\min_{t\in[T-1]}\mathbb{E}\|\nabla f(\vec{x}_t)\|^2 \le \frac{8(f(\vec{x}_0)-f(x^*))}{\sqrt{Tp}} +\frac{4B^2 L^2p}{T} +\frac{8L\sigma^2}{\sqrt{Tp}}+\frac{16L^3B^2p\sqrt{p}}{T\sqrt{T}}.			
		\]
\end{reptheorem}
\begin{proof}
   Observe that since $\alpha=\sqrt{p}/\sqrt{T}\le 1/8L$, by Lemma \ref{lem:multiplestepsnonconvex} 
   we have that :

   \begin{equation*}
    \E[f(\vec{x}_{t+1})] \le \E[f(\vec{x}_t)]-\frac{\alpha}{8} \E\|\nabla f(\vec{x}_t)\|^2+\frac{\alpha^3B^2 L^2}{2} +\frac{L\alpha^2\sigma^2}{p}+2L^3\alpha^4B^2.
    \end{equation*}
    By summing the above inequality for $t=0,1,..,T-1$ we get that:
    \begin{equation*}
    \sum_{t=0}^{T-1} \E[f(\vec{x}_{t+1})] \le \sum_{t=0}^{T-1} \Big ( \E[f(\vec{x}_t)]-\frac{\alpha}{8} \E\|\nabla f(\vec{x}_t)\|^2+\frac{\alpha^3B^2 L^2}{2} +\frac{L\alpha^2\sigma^2}{p}+2L^3\alpha^4B^2\Big).
    \end{equation*}
    Which can be rewritten as:
    \begin{equation*}
    \sum_{t=0}^{T-1} \frac{\alpha}{8} \E\|\nabla f(\vec{x}_t)\|^2 \le f(\vec{x}_0)-\E[f(\vec{x}_{T})] +\frac{\alpha^3B^2 L^2T}{2} +\frac{L\alpha^2\sigma^2T}{p}+2L^3\alpha^4B^2T.
    \end{equation*}
	Next, we divide by $\frac{\alpha T}{8}$ and use the fact that $\E[f(\vec{x}_{T})]\ge f(x^*)$:
	\begin{equation*}
    \sum_{t=0}^{T-1} \frac{1}{T} \E\|\nabla f(\vec{x}_t)\|^2 \le \frac{8(f(\vec{x}_0)-f(x^*))}{\alpha T} +4\alpha^2B^2 L^2+\frac{8L\alpha\sigma^2}{p}+16L^3\alpha^3B^2.
    \end{equation*}

By plugging in the value of $\alpha$ we get:
\begin{align*}
			\min_{t\in[T-1]}\mathbb{E}\|\nabla f(\vec{x}_t)\|^2 &\le 
			\sum_{t=0}^{T-1} \frac{1}{T} \E\|\nabla f(\vec{x}_t)\|^2 \le
			\frac{8(f(\vec{x}_0)-f(x^*))}{\alpha T} +4\alpha^2B^2 L^2 +\frac{8L\alpha\sigma^2}{p}+16L^3\alpha^3B^2 \\ &\le \frac{8(f(\vec{x}_0)-f(x^*))}{\sqrt{Tp}} +\frac{4B^2 L^2p}{T} +\frac{8L\sigma^2}{\sqrt{Tp}}+\frac{16L^3B^2p\sqrt{p}}{T\sqrt{T}}.
\end{align*}
\end{proof}

\subsection{Complete Proof of Convergence in the Strongly Convex Case}\label{proof:full-con}
First, we show the proof of the Lemma which is known to be useful for the analysis of convergence of $SGD$ in the strongly convex case.
\begin{lemma} \label{lem:weirdlemma}
Let $f$ be a $L$-smooth, $c$-strongly convex function, whose minimum $x^*$ we are trying to find.
For any vector $\vec{x}$, we have:

\begin{equation}
\langle \nabla f(\vec{x}), \vec{x}-x^* \rangle \ge \frac{1}{2L} \|\nabla f(\vec{x})\|^2+\frac{c}{2} \|\vec{x}-x^*\|^2.
\end{equation}
\end{lemma}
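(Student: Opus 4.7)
The plan is to combine two standard one-line consequences of smoothness and strong convexity, namely a quadratic upper bound on $f(\vec{x}) - f(x^*)$ from strong convexity, and a quadratic lower bound on $f(\vec{x}) - f(x^*)$ from smoothness. Once both are in hand, the inequality follows by a single substitution.

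First, I would use $L$-smoothness of $f$ in the form $f(\vec{y}) \le f(\vec{x}) + \langle \nabla f(\vec{x}), \vec{y}-\vec{x}\rangle + \tfrac{L}{2}\|\vec{y}-\vec{x}\|^2$, and minimize the right-hand side over $\vec{y} \in \mathbb{R}^d$. The minimizer is $\vec{y} = \vec{x} - \tfrac{1}{L}\nabla f(\vec{x})$, and the minimum value of the upper bound is $f(\vec{x}) - \tfrac{1}{2L}\|\nabla f(\vec{x})\|^2$. Since the true minimum $f(x^*)$ is no larger than this value, I obtain the ``gradient norm vs.\ suboptimality'' bound
\[
f(\vec{x}) - f(x^*) \;\ge\; \tfrac{1}{2L}\|\nabla f(\vec{x})\|^2.
\]

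Next, I would apply the strong convexity inequality (\ref{eqn:strongconvex}) in its functional form, which follows by integrating the monotonicity condition: $f(x^*) \ge f(\vec{x}) + \langle \nabla f(\vec{x}), x^*-\vec{x}\rangle + \tfrac{c}{2}\|\vec{x}-x^*\|^2$. Rearranging yields
\[
\langle \nabla f(\vec{x}), \vec{x}-x^*\rangle \;\ge\; f(\vec{x}) - f(x^*) + \tfrac{c}{2}\|\vec{x}-x^*\|^2.
\]
Substituting the smoothness bound from the previous paragraph into the $f(\vec{x}) - f(x^*)$ term on the right-hand side gives the claim.

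I do not expect any real obstacle here: the only subtlety is that the functional form of strong convexity used above is slightly stronger than the monotone gradient statement (\ref{eqn:strongconvex}) quoted in the paper, but the two are equivalent for differentiable $f$ (one direction is integration along the segment from $\vec{x}$ to $x^*$). If one prefers to stay closer to (\ref{eqn:strongconvex}), an alternative is to apply co-coercivity to the convex, $(L-c)$-smooth auxiliary function $h(\vec{y}) := f(\vec{y}) - \tfrac{c}{2}\|\vec{y}\|^2$ at the points $\vec{x}$ and $x^*$ (using $\nabla f(x^*)=0$), which yields the same two-term lower bound after expansion; this route avoids invoking anything beyond what (\ref{eqn:smooth_gradients_f}) and (\ref{eqn:strongconvex}) directly give.
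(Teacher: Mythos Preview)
Your proposal is correct and is essentially the same argument as the paper's: both use smoothness at the point $\vec{y}=\vec{x}-\tfrac{1}{L}\nabla f(\vec{x})$ together with $f(x^*)\le f(\vec{y})$ to extract the $\tfrac{1}{2L}\|\nabla f(\vec{x})\|^2$ term, and both use the functional form of strong convexity to extract the $\tfrac{c}{2}\|\vec{x}-x^*\|^2$ term; the paper merely chains the two steps into a single string of inequalities rather than stating them separately. Your remark that the functional form of strong convexity is being used while only the monotone-gradient form (\ref{eqn:strongconvex}) is stated applies equally to the paper's own proof.
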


\begin{proof}
\begin{align*}
0 &\le f(\vec{x}-\frac{1}{L} \nabla f(\vec{x}))-f(x^*)=f(\vec{x}-\frac{1}{L} \nabla f(\vec{x}))-f(\vec{x})+f(\vec{x})-f(x^*) \\&
\overset{(\ref{eqn:smooth_gradients_f})}{\le} \langle \nabla f(\vec{x}), -\frac{1}{L} \nabla f(\vec{x}) \rangle+
\frac{1}{2L} \|\nabla f(\vec{x}) \|^2+f(\vec{x})-f(x^*) \\&
\overset{(\ref{eqn:strongconvex})}{\le}-\frac{1}{2L} \|\nabla f(\vec{x}) \|^2 + \langle \nabla f(\vec{x}), \vec{x}-x^* \rangle-\frac{c}{2} \|\vec{x}-x^* \|^2.
\end{align*}
The proof of the lemma follows after rearranging the terms in the above inequality.
\end{proof}

\begin{lemma} \label{lem:singlestepconvex}
Consider SGD iterations defined in 
	(\ref{eqn:SGDsinglestep}) and satisfying elastic consistency bound (\ref{eqn:x-def}). For a smooth, strongly convex objective function $f$ and the constant learning rate $\alpha \le \frac{1}{3L}$.
	We have that:
	\begin{equation}
    \E\|\vec{x}_{t+1}-x^*\|^2 \le
(1-\frac{\alpha c}{2})\E\|\vec{x}_t-x^*\|^2
+\frac{2\alpha^3B^2 L^2}{c} +3\alpha^2\sigma^2+3\alpha^4L^2B^2.
\end{equation}
	
\end{lemma}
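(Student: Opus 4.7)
The plan is to mirror the structure of Lemma \ref{lem:singlestepnonconvex}, but work directly with the squared distance to the optimum rather than the function value, since strong convexity gives contraction in this quantity. I would begin by conditioning on $\vec{x}_t$ and $\vec{v}_t^i$ and expanding
$$\|\vec{x}_{t+1}-x^*\|^2 = \|\vec{x}_t-x^*\|^2 - 2\alpha\langle \tilde{G}(\vec{v}_t^i), \vec{x}_t-x^*\rangle + \alpha^2\|\tilde{G}(\vec{v}_t^i)\|^2,$$
then taking the inner expectation over the stochastic gradient, so the cross term becomes $-2\alpha\langle \nabla f(\vec{v}_t^i), \vec{x}_t-x^*\rangle$ by unbiasedness (\ref{eqn:unbiased_estimator}).

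The first key step is to split the gradient at $\vec{v}_t^i$ into the gradient at $\vec{x}_t$ plus an error term. For the $\nabla f(\vec{x}_t)$ part, Lemma \ref{lem:weirdlemma} gives
$-2\alpha\langle \nabla f(\vec{x}_t), \vec{x}_t-x^*\rangle \le -\frac{\alpha}{L}\|\nabla f(\vec{x}_t)\|^2 - \alpha c\|\vec{x}_t-x^*\|^2$,
providing both the contraction factor and a negative $\|\nabla f(\vec{x}_t)\|^2$ term that will be needed to absorb later quadratic terms. For the residual $-2\alpha\langle \nabla f(\vec{v}_t^i)-\nabla f(\vec{x}_t), \vec{x}_t-x^*\rangle$, I would apply Young's inequality with parameter $c/2$, yielding $\tfrac{\alpha c}{2}\|\vec{x}_t-x^*\|^2 + \tfrac{2\alpha}{c}\|\nabla f(\vec{v}_t^i)-\nabla f(\vec{x}_t)\|^2$, and then bound the gradient difference via smoothness (\ref{eqn:smooth_gradients_f}) by $L^2\|\vec{v}_t^i-\vec{x}_t\|^2$. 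Together this leaves a contraction coefficient of $(1-\alpha c/2)$ at the cost of a $\tfrac{2\alpha L^2}{c}\|\vec{v}_t^i-\vec{x}_t\|^2$ perturbation.

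For the quadratic term $\alpha^2\E_s\|\tilde{G}(\vec{v}_t^i)\|^2$, I would use the same three-way Cauchy–Schwarz decomposition as in Lemma \ref{lem:singlestepnonconvex}, writing $\tilde{G}(\vec{v}_t^i) = (\tilde{G}(\vec{v}_t^i)-\nabla f(\vec{v}_t^i)) + (\nabla f(\vec{v}_t^i)-\nabla f(\vec{x}_t)) + \nabla f(\vec{x}_t)$, then applying (\ref{eqn:variance_is_bounded_assumption_f}) and (\ref{eqn:smooth_gradients_f}) to obtain
$\alpha^2\E_s\|\tilde{G}(\vec{v}_t^i)\|^2 \le 3\alpha^2\sigma^2 + 3\alpha^2 L^2\|\vec{v}_t^i-\vec{x}_t\|^2 + 3\alpha^2\|\nabla f(\vec{x}_t)\|^2.$

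The final step is to combine the two bounds and cancel the $\|\nabla f(\vec{x}_t)\|^2$ contributions: the condition $\alpha\le 1/(3L)$ guarantees $3\alpha^2 \le \alpha/L$, so the $-\tfrac{\alpha}{L}\|\nabla f(\vec{x}_t)\|^2$ from the contraction absorbs the $+3\alpha^2\|\nabla f(\vec{x}_t)\|^2$ from the quadratic expansion. Taking outer expectation and applying elastic consistency (\ref{eqn:x-def}) to bound $\E\|\vec{v}_t^i-\vec{x}_t\|^2 \le \alpha^2 B^2$ on the two residual perturbation terms gives exactly $\tfrac{2\alpha^3 B^2 L^2}{c} + 3\alpha^4 L^2 B^2$, yielding the claimed recursion. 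The main subtlety — and the reason this differs from the non-convex proof — is choosing the Young's parameter as $c/2$ so that the strong convexity contraction survives with coefficient $(1-\alpha c/2)$; a more careless split would either kill the contraction or inflate the error constant beyond $2/c$.
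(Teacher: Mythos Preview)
Your proposal is correct and follows essentially the same approach as the paper's proof: both condition on $\vec{x}_t,\vec{v}_t^i$, expand the squared distance, split the cross term into a $\nabla f(\vec{x}_t)$ piece handled by Lemma~\ref{lem:weirdlemma} and a residual handled by Young's inequality with parameter $c/2$, decompose the quadratic term three ways via Cauchy--Schwarz, use $\alpha\le 1/(3L)$ to absorb $3\alpha^2\|\nabla f(\vec{x}_t)\|^2$ into $-\tfrac{\alpha}{L}\|\nabla f(\vec{x}_t)\|^2$, and finish with the elastic consistency bound. The only difference is the order in which the steps are presented.
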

\begin{proof}
We condition on $\vec{x}_t$ and $\vec{v}_t^i$ and calculate expectation with respect to the randomness of stochastic gradient (We call this $\E_s$).
We get that :
\begin{align*}
\E_s\|\vec{x}_{t+1}-x^*\|^2 &= \E_s\|\vec{x}_t-\alpha \tilde G(\vec{v}_t^i)-x^*\|^2=\|\vec{x}_t-x^*\|^2-2\alpha \E_s \langle \tilde G(\vec{v}_t^i), \vec{x_t}-x^*  \rangle+\alpha^2 \E_s \|\tilde G(\vec{v}_t^i)\|^2 \\& \overset{(\ref{eqn:unbiased_estimator})}{=} \|\vec{x}_t-x^*\|^2-2\alpha \langle \nabla f(\vec{x}_t), \vec{x}_t-x^* \rangle+2\alpha \langle \nabla f(\vec{x}_t) -  \nabla f(\vec{v}_t^i), \vec{x}_t-x^* \rangle \\&\quad\quad\quad\quad+\alpha^2 \E_s\|\tilde G(v_t^i)- \nabla f(\vec{v}_t^i)+\nabla f(\vec{v}_t^i)-\nabla f(\vec{x}_t)+\nabla f(\vec{x}_t)\|^2 \\ &\overset{Cauchy-Schwarz}{\le}
\|\vec{x}_t-x^*\|^2-2\alpha \langle \nabla f(\vec{x}_t), \vec{x}_t-x^* \rangle+2\alpha \langle \nabla f(\vec{x}_t) -  \nabla f(\vec{v}_t^i), \vec{x}_t-x^* \rangle \\&\quad\quad\quad\quad+3\alpha^2\Big(
\E_s\|\tilde G(\vec{v}_t^i)- \nabla f(\vec{v}_t^i)\|^2+\|\nabla f(\vec{v}_t^i)-\nabla f(\vec{x}_t)\|^2+\|\nabla f(\vec{x}_t)\|^2\Big) 
\\&\overset{\text{Lemma \ref{lem:weirdlemma} }}{\le}
\|\vec{x}_t-x^*\|^2-\frac{\alpha}{L} \| \nabla f(\vec{x}_t) \|^2-\alpha c \|\vec{x}_t-x^*\|^2+2\alpha \langle \nabla f(\vec{x}_t) -  \nabla f(\vec{v}_t^i), \vec{x}_t-x^* \rangle \\&\quad\quad\quad\quad+3\alpha^2\Big(
\E_s\|\tilde G(\vec{v}_t^i)- \nabla f(\vec{v}_t^i)\|^2+\|\nabla f(\vec{v}_t^i)-\nabla f(\vec{x}_t)\|^2+\|\nabla f(\vec{x}_t)\|^2\Big) 
\\ &\overset{Young}{\le}
\|\vec{x}_t-x^*\|^2-\frac{\alpha}{L} \| \nabla f(\vec{x}_t) \|^2-\alpha c \|\vec{x}_t-x^*\|^2
\\&\quad\quad\quad\quad
+\frac{2\alpha}{c} \|\nabla f(\vec{x}_t) -  \nabla f(\vec{v}_t^i)\|^2+\frac{c\alpha}{2} \|\vec{x}_t-x^*\|^2  \\&\quad\quad\quad\quad+3\alpha^2\Big(
\E_s\|\tilde G(\vec{v}_t^i)- \nabla f(\vec{v}_t^i)\|^2+\|\nabla f(\vec{v}_t^i)-\nabla f(\vec{x}_t)\|^2+\|\nabla f(\vec{x}_t)\|^2\Big) 
\\&\overset{(\ref{eqn:variance_is_bounded_assumption_f})}{\le}
(1-\frac{\alpha c}{2})\|\vec{x}_t-x^*\|^2-\frac{\alpha}{L} \| \nabla f(\vec{x}_t) \|^2
+\frac{2\alpha}{c} \|\nabla f(\vec{x}_t) -  \nabla f(\vec{v}_t^i)\|^2 \\&\quad\quad\quad\quad+3\alpha^2\Big(
\sigma^2+\|\nabla f(\vec{v}_t^i)-\nabla f(\vec{x}_t)\|^2+\|\nabla f(\vec{x}_t)\|^2\Big) 
\\&\overset{(\ref{eqn:smooth_gradients_f})}{\le}
(1-\frac{\alpha c}{2})\|\vec{x}_t-x^*\|^2-\frac{\alpha}{L} \| \nabla f(\vec{x}_t) \|^2
+\frac{2\alpha L^2}{c} \|\vec{x}_t -  \vec{v}_t^i\|^2 \\&\quad\quad\quad\quad+3\alpha^2\Big(
\sigma^2+L^2\|\vec{v}_t^i-\vec{x}_t\|^2+\|\nabla f(\vec{x}_t)\|^2\Big) \\ &\le
(1-\frac{\alpha c}{2})\|\vec{x}_t-x^*\|^2
+\frac{2\alpha L^2}{c} \|\vec{x}_t -  \vec{v}_t^i\|^2 +3\alpha^2\Big(
\sigma^2+L^2\|\vec{v}_t^i-\vec{x}_t\|^2\Big).
\end{align*}
Where, in the last inequality, we used $\alpha \le \frac{1}{3L}$.
Next, we use elastic consistency bound :
\begin{align*}
    \E\|\vec{x}_{t+1}-x^*\|^2&=\E[\E[\|\vec{x}_{t+1}-x^*\|^2|\vec{x}_t,\vec{v}_t^i]] \\&\le 
    (1-\frac{\alpha c}{2})\E\|\vec{x}_t-x^*\|^2
+\frac{2\alpha L^2}{c} \E\|\vec{x}_t -  \vec{v}_t^i\|^2 +3\alpha^2\Big(
\sigma^2+L^2\E\|\vec{v}_t^i-\vec{x}_t\|^2\Big) \\ &\le
(1-\frac{\alpha c}{2})\E\|\vec{x}_t-x^*\|^2
+\frac{2\alpha^3B^2 L^2}{c} +3\alpha^2\sigma^2+3\alpha^4L^2B^2.
\end{align*}
\end{proof}

\begin{reptheorem} {thm:convexsingle}
        Consider SGD iterations defined in 
	    (\ref{eqn:SGDsinglestep})  and satisfying elastic consistency bound (\ref{eqn:x-def}). For a smooth, strongly convex function $f$, whose minimum $x^*$ we are trying to find and the constant learning rate $\alpha = \frac{2\log{T}}{cT}$, where $T \ge \frac{144L^2}{c^2}$ is a number of iterations:
	    \begin{equation*}
	      \E\|\vec{x}_{T}-x^*\|^2 \le   \frac{\|\vec{x}_0-x^*\|^2}{T}+\frac{16\log^2{T}L^2B^2}{c^4T^2}+\frac{12\sigma^2\log{T}}{T}+
        \frac{48\log^3{T}B^2L^2}{c^4T^3}.
	    \end{equation*}
\end{reptheorem}
\begin{proof}
   Observe that since $\alpha \le \frac{2\log{T}}{Tc} \le \frac{4}{\sqrt{T}c} \le 1/3L$,
   By Lemma \ref{lem:singlestepconvex},
   we have that:

   \begin{equation*}
	   \E\|\vec{x}_{t+1}-x^*\|^2\le
(1-\frac{\alpha c}{2})\E\|\vec{x}_t-x^*\|^2
+\frac{2\alpha^3 L^2B^2}{c}
+3\alpha^2\sigma^2+3L^2\alpha^4B^2.
    \end{equation*}
   By using induction, it is easy to show that:
   \begin{align*}
       \E\|\vec{x}_{T}-x^*\|^2 &\le (1-\frac{\alpha c}{2})^{T} \|\vec{x}_0-x^*\|^2+
       \sum_{t=0}^{T-1}(1-\frac{\alpha c}{2})^t\Big(\frac{2\alpha^3 L^2B^2}{c}+3\alpha^2\sigma^2+3L^2\alpha^4B^2 \Big)
        \\ &\le
        (1-\frac{\alpha c}{2})^{T} \|\vec{x}_0-x^*\|^2+
       \sum_{t=0}^{\infty}(1-\frac{\alpha c}{2})^t\Big(\frac{2\alpha^3 L^2B^2}{c}+3\alpha^2\sigma^2+3L^2\alpha^4B^2 \Big) \\ &\le
        e^{-\frac{\alpha cT}{2}} \|\vec{x}_0-x^*\|^2+
       \frac{4\alpha^2 L^2B^2}{c^2}+\frac{6\alpha\sigma^2}{c}+\frac{6L^2\alpha^3B^2}{c} \\ &=
        \frac{\|\vec{x}_0-x^*\|^2}{T}+\frac{16\log^2{T}L^2B^2}{c^4T^2}+\frac{12\sigma^2\log{T}}{T}+
        \frac{48\log^3{T}B^2L^2}{c^4T^3}.
    \end{align*}
   
\end{proof}

\begin{lemma} \label{lem:multiplestepsconvex}
Consider SGD iterations defined in 
	(\ref{eqn:SGDmultiplesteps}) and satisfying elastic consistency bound (\ref{eqn:x-def}). For a smooth, strongly  convex objective function $f$ and the constant learning rate $\alpha \le \frac{1}{4L}$.
	We have that:
	\begin{equation}
 \E\|\vec{x}_{t+1}-x^*\|^2\le
(1-\frac{\alpha c}{4})\E\|\vec{x}_t-x^*\|^2
+\frac{2\alpha^3 L^2B^2}{c}
+\frac{2\alpha^2\sigma^2}{p}+4L^2\alpha^4B^2.
\end{equation}
\end{lemma}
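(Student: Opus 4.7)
The plan is to mirror the proof of Lemma \ref{lem:singlestepconvex} (the single-step strongly convex bound) while borrowing the variance-handling trick from Lemma \ref{lem:multiplestepsnonconvex} to deal with the sum over $i \in I_t$. First, I would condition on $\vec{x}_t$ and on the collection of views $\{\vec{v}_t^i : i \in I_t\}$, denote the resulting conditional expectation by $\E_s$, and expand
\[
\E_s\|\vec{x}_{t+1}-x^*\|^2 = \|\vec{x}_t-x^*\|^2 - \frac{2\alpha}{p}\sum_{i \in I_t}\E_s\langle \tilde G(\vec{v}_t^i),\vec{x}_t-x^*\rangle + \frac{\alpha^2}{p^2}\E_s\Bigl\|\sum_{i \in I_t}\tilde G(\vec{v}_t^i)\Bigr\|^2.
\]
Unbiasedness (\ref{eqn:unbiased_estimator}) turns the cross term into $\langle \nabla f(\vec{v}_t^i),\vec{x}_t-x^*\rangle$, which I then split as $\langle \nabla f(\vec{x}_t),\vec{x}_t-x^*\rangle + \langle \nabla f(\vec{v}_t^i)-\nabla f(\vec{x}_t),\vec{x}_t-x^*\rangle$.

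For the squared-norm term, conditioning on the views makes the gradients $\tilde G(\vec{v}_t^i)-\nabla f(\vec{v}_t^i)$ independent zero-mean random variables across $i \in I_t$, so the variance bound (\ref{eqn:variance_is_bounded_assumption_f}) gives $\E_s\|\sum_{i \in I_t}(\tilde G(\vec{v}_t^i)-\nabla f(\vec{v}_t^i))\|^2 \le |I_t|\sigma^2$; the remaining mean contribution is handled by Cauchy–Schwarz, giving a bound of the form $2|I_t|\sum_{i}\bigl(\|\nabla f(\vec{v}_t^i)-\nabla f(\vec{x}_t)\|^2+\|\nabla f(\vec{x}_t)\|^2\bigr)$, analogous to the non-convex multi-step case.

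Next, I would apply Lemma \ref{lem:weirdlemma} to $-2\alpha\langle \nabla f(\vec{x}_t),\vec{x}_t-x^*\rangle$ to extract both $-\alpha c\|\vec{x}_t-x^*\|^2$ and $-(\alpha/L)\|\nabla f(\vec{x}_t)\|^2$, and then use Young's inequality on the remaining cross terms $\langle \nabla f(\vec{x}_t)-\nabla f(\vec{v}_t^i),\vec{x}_t-x^*\rangle$ with weight tuned so the bad piece becomes $(\alpha c/4)\|\vec{x}_t-x^*\|^2$ (which absorbs into the contraction), leaving a $(2\alpha/c)\|\nabla f(\vec{x}_t)-\nabla f(\vec{v}_t^i)\|^2$ term that smoothness (\ref{eqn:smooth_gradients_f}) converts to $(2\alpha L^2/c)\|\vec{x}_t-\vec{v}_t^i\|^2$. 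Taking the outer expectation and applying elastic consistency (\ref{eqn:x-def}) turns each $\|\vec{x}_t-\vec{v}_t^i\|^2$ into $\alpha^2 B^2$.

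The main obstacle will be bookkeeping the $|I_t|/p$ factors so that the final coefficient on $\E\|\vec{x}_t-x^*\|^2$ becomes exactly $(1-\alpha c/4)$, rather than $(1-\alpha c/2)$ as in the single-step case: because $|I_t|/p$ can be as small as $1/2$, the effective contraction from the strong-convexity step is only $\alpha c|I_t|/(2p) \ge \alpha c/4$, while the Young-inequality residual contributes the other $\alpha c/4$. The constraint $\alpha \le 1/(4L)$ will be used exactly to absorb the $|I_t|^2\alpha^2\|\nabla f(\vec{x}_t)\|^2/p^2$ term into the $(\alpha/L)\|\nabla f(\vec{x}_t)\|^2$ pulled out of Lemma \ref{lem:weirdlemma}; note that $|I_t|^2/p^2 \le 1$ tightens this further. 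The $\sigma^2$ term picks up a $1/p$ because $|I_t|\sigma^2/p^2 \le \sigma^2/p$ (here we use $|I_t|\le p$), and the remaining two error terms of orders $\alpha^3 L^2 B^2/c$ and $L^2\alpha^4 B^2$ arise directly from the Young and smoothness/consistency chain, with the constant $4$ in front of $L^2\alpha^4 B^2$ accounting for the factor $(|I_t|/p)^2 \le 1$ combined with the non-tight Cauchy–Schwarz factor.
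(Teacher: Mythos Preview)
Your proposal is correct and follows essentially the same route as the paper's proof: condition on $\vec{x}_t$ and the views, expand the square, separate the variance and mean parts of $\|\sum_{i\in I_t}\tilde G(\vec{v}_t^i)\|^2$ using independence, apply Lemma~\ref{lem:weirdlemma} to the main inner product, handle the cross terms via Young's inequality, convert gradient differences to view differences via smoothness, and finish with elastic consistency and $p/2\le |I_t|\le p$. One small wording slip: the Young residual on $\|\vec{x}_t-x^*\|^2$ is $\frac{c\alpha|I_t|}{2p}$ (not a flat $\alpha c/4$), which combines with the $-\frac{\alpha c|I_t|}{p}$ from Lemma~\ref{lem:weirdlemma} to give $1-\frac{\alpha c|I_t|}{2p}\le 1-\frac{\alpha c}{4}$; your later paragraph describes this correctly, so just keep the bookkeeping consistent when you write it out.
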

\begin{proof}
We condition on $\vec{x}_t$ and $\{\vec{v}_t^i|i \in I_t\}$ and calculate expectation with respect to the randomness of stochastic gradient(We call this $\E_s$).
By descent lemma we get that :
\begin{align*}
\E_s\|\vec{x}_{t+1}-x^*\|^2 &= \E_s\|\vec{x}_t-\frac{\alpha}{p} \sum_{i \in I_t}\tilde G(\vec{v}_t^i)-x^*\|^2 \\&=\|\vec{x}_t-x^*\|^2-\frac{2\alpha}{p} \sum_{i \in I_t} \E_s \langle \tilde G(\vec{v}_t^i), \vec{x_t}-x^*  \rangle+\frac{\alpha^2}{p^2} \E_s \|\sum_{i \in I_t}\tilde G(\vec{v}_t^i)\|^2 \\& \overset{(\ref{eqn:unbiased_estimator})}{=} \|\vec{x}_t-x^*\|^2-\frac{2\alpha|I_t|}{p} \langle \nabla f(\vec{x}_t), \vec{x}_t-x^* \rangle+\frac{2\alpha}{p} \sum_{i \in I_t} \langle \nabla f(\vec{x}_t) -  \nabla f(\vec{v}_t^i), \vec{x}_t-x^* \rangle \\&\quad\quad\quad\quad+\frac{\alpha^2}{p^2}\E_s\|\sum_{i \in I_t}\Big(\tilde G(v_t^i)- \nabla f(\vec{v}_t^i)+\nabla f(\vec{v}_t^i)-\nabla f(\vec{x}_t)+\nabla f(\vec{x}_t)\Big)\|^2 \\ &\overset{Cauchy-Schwarz}{\le}
\|\vec{x}_t-x^*\|^2-\frac{2\alpha|I_t|}{p} \langle \nabla f(\vec{x}_t), \vec{x}_t-x^* \rangle
\\&\quad\quad\quad\quad+\frac{2\alpha^2}{p^2}\E_s\|\sum_{i \in I_t}\Big(\tilde G(v_t^i)- \nabla f(\vec{v}_t^i)\Big)\|^2
\\&\quad\quad\quad\quad+\frac{2\alpha^2}{p^2}\E_s\|\sum_{i \in I_t}\Big(\nabla f(\vec{v}_t^i)-\nabla f(\vec{x}_t)+\nabla f(\vec{x}_t)\Big)\|^2
.
\end{align*}
Observe that $\E_s\Bigg \|\sum_{i \in I_t} \Big(\tilde G(v_t^i)- \nabla f(\vec{v}_t^i)\Big)\Bigg\|^2$ is the same 
as a variance of random variable $\sum_{i \in I_t} \tilde G(v_t^i)$. Recall that we are conditioning on $\vec{x}_t$ and $\{\vec{v}_t^i|i \in I_t\}$, hence $\tilde G(v_t^i)- \nabla f(\vec{v}_t^i)$ are independent random variables
(because agents in $I_t$ compute stochastic gradients independently).
This means that because of the variance bound (\ref{eqn:variance_is_bounded_assumption_f}):
\begin{equation*}
\E_s\Bigg \|\sum_{i \in I_t} \Big(\tilde G(v_t^i)- \nabla f(\vec{v}_t^i)\Big)\Bigg\|^2 \le 
|I_t|\sigma^2.
\end{equation*}
Also by Cauchy-Schwarz inequality we get that 
\begin{align*}
\E_s\|\sum_{i \in I_t}\Big(\nabla f(\vec{v}_t^i)-\nabla f(\vec{x}_t)+\nabla f(\vec{x}_t)\Big)\|^2&=
\|\sum_{i \in I_t}\Big(\nabla f(\vec{v}_t^i)-\nabla f(\vec{x}_t)+\nabla f(\vec{x}_t)\Big)\|^2 \\&\le
2|I_t| \sum_{i \in I_t}\Big(\|\nabla f(\vec{v}_t^i)-\nabla f(\vec{x}_t)\|^2+\|\nabla f(\vec{x}_t)\|^2\Big).
\end{align*}
By combining the above three inequalities above we get:

\begin{align*}
\E_s\|\vec{x}_{t+1}-x^*\|^2&\overset{\text{Lemma \ref{lem:weirdlemma} }}{\le}
\|\vec{x}_t-x^*\|^2-\frac{\alpha|I_t|}{Lp} \| \nabla f(\vec{x}_t) \|^2-\frac{\alpha c |I_t|}{p} 
\\&\quad\quad\quad\quad+\frac{2\alpha}{p} \sum_{i \in I_t} \langle \nabla f(\vec{x}_t) -  \nabla f(\vec{v}_t^i), \vec{x}_t-x^* \rangle 
 \\&\quad\quad\quad\quad+\frac{4\alpha^2|I_t|}{p^2}\sum_{i \in I_t}\Big(
\|\nabla f(\vec{v}_t^i)-\nabla f(\vec{x}_t)\|^2+\|\nabla f(\vec{x}_t)\|^2\Big)+\frac{2\alpha^2\sigma^2|I_t|}{p^2}
\\ &\overset{Young}{\le}
\|\vec{x}_t-x^*\|^2-\frac{\alpha|I_t|}{Lp} \| \nabla f(\vec{x}_t) \|^2-\frac{\alpha c |I_t|}{p} 
\\&\quad\quad\quad\quad
+\frac{2\alpha}{cp} \sum_{i \in |I_t}\|\nabla f(\vec{x}_t) -  \nabla f(\vec{v}_t^i)\|^2+\frac{c\alpha |I_t|}{2p} \|\vec{x}_t-x^*\|^2  \\&\quad\quad\quad\quad+\frac{4\alpha^2|I_t|}{p^2}\sum_{i \in I_t}\Big(
\|\nabla f(\vec{v}_t^i)-\nabla f(\vec{x}_t)\|^2+\|\nabla f(\vec{x}_t)\|^2\Big)+\frac{2\alpha^2\sigma^2|I_t|}{p^2}
\\&\overset{(\ref{eqn:smooth_gradients_f})}{\le}
(1-\frac{\alpha c |I_t|}{2p})\|\vec{x}_t-x^*\|^2-\frac{\alpha |I_t|}{Lp} \| \nabla f(\vec{x}_t) \|^2
+\frac{2\alpha L^2}{cp} \sum_{i \in I_t}\|\vec{x}_t -  \vec{v}_t^i\|^2 \\&\quad\quad\quad\quad+
\frac{4\alpha^2|I_t|}{p^2}\sum_{i \in I_t} \Big(
L^2\|\vec{v}_t^i-\vec{x}_t\|^2+\|\nabla f(\vec{x}_t)\|^2\Big) +\frac{2\alpha^2\sigma^2|I_t|}{p^2} \\ &\le
(1-\frac{\alpha c|I_t|}{2p})\|\vec{x}_t-x^*\|^2
+\frac{2\alpha L^2}{cp} \sum_{i \in I_t} \|\vec{x}_t -  \vec{v}_t^i\|^2 +\frac{4\alpha^2|I_t|}{p^2} \sum_{i \in I_t} \Big(
L^2\|\vec{v}_t^i-\vec{x}_t\|^2\Big)\\&\quad\quad\quad\quad+\frac{2\alpha^2\sigma^2}{p}.
\end{align*}
Where, in the last inequality, we used $\alpha \le \frac{1}{4L}$ and $|I_t| \le p$.
Next, we use elastic consistency bound and $p/2 \le |I_t| \le p$:
\begin{align*}
    \E\|\vec{x}_{t+1}&-x^*\|^2=\E[\E[\|\vec{x}_{t+1}-x^*\|^2|\vec{x}_t,\{\vec{v}_t^i|i \in I_t\}]] \\&\le 
    (1-\frac{\alpha c|I_t|}{2p})\E\|\vec{x}_t-x^*\|^2
+\frac{2\alpha L^2}{cp} \sum_{i \in I_t} \E\|\vec{x}_t -  \vec{v}_t^i\|^2
+\frac{4\alpha^2|I_t|L^2}{p^2} \sum_{i \in I_t} \E\|\vec{v}_t^i-\vec{x}_t\|^2+\frac{2\alpha^2\sigma^2}{p} \\ &\le
(1-\frac{\alpha c}{4})\E\|\vec{x}_t-x^*\|^2
+\frac{2\alpha^3 L^2B^2}{c}
+\frac{2\alpha^2\sigma^2}{p}+4L^2\alpha^4B^2.
\end{align*}
\end{proof}

\begin{reptheorem} {thm:convexmultiple}
        Consider SGD iterations defined in (\ref{eqn:SGDmultiplesteps})  and satisfying elastic consistency bound (\ref{eqn:x-def}). For a smooth, strongly convex function $f$, whose minimum $x^*$ we are trying to find and the constant learning rate $\alpha = \frac{2(\log{T}+\log{p})}{cT}$, where $T \ge \frac{256L^2p}{c^2}$ is a number of iterations:
        \begin{align*}
        \E\|\vec{x}_{T}-x^*\|^2 \le \frac{\|\vec{x}_0-x^*\|^2}{Tp}&+\frac{16(\log{T}+\log{p})^2L^2B^2}{c^4T^2}+\frac{12\sigma^2(\log{T}+\log{p})}{Tp}\\&+
        \frac{48(\log{T}+\log{p})^3B^2L^2}{c^4T^3}.
        \end{align*}
\end{reptheorem}
\begin{proof}
   Observe that since $\alpha \le \frac{2\log{Tp}{p}}{Tc} \le \frac{4\sqrt{p}}{\sqrt{T}c} \le 1/3L$,
   By Lemma \ref{lem:multiplestepsconvex},
   we have that:

   \begin{equation*}
	   \E\|\vec{x}_{t+1}-x^*\|^2\le
(1-\frac{\alpha c}{4})\E\|\vec{x}_t-x^*\|^2
+\frac{2\alpha^3 L^2B^2}{c}
+\frac{2\alpha^2\sigma^2}{p}+4L^2\alpha^4B^2.
    \end{equation*}
   By using induction, it is easy to show that:
   \begin{align*}
       \E&\|\vec{x}_{T}-x^*\|^2 \le (1-\frac{\alpha c}{4})^{T} \|\vec{x}_0-x^*\|^2+
       \sum_{t=0}^{T-1}(1-\frac{\alpha c}{4})^t\Big(\frac{2\alpha^3 L^2B^2}{c}+\frac{2\alpha^2\sigma^2}{p}+4L^2\alpha^4B^2 \Big)
        \\ &\le
        (1-\frac{\alpha c}{2})^{T} \|\vec{x}_0-x^*\|^2+
       \sum_{t=0}^{\infty}(1-\frac{\alpha c}{4})^t \Big(\frac{2\alpha^3 L^2B^2}{c}+\frac{2\alpha^2\sigma^2}{p}+4L^2\alpha^4B^2 \Big)\\ &\le
        e^{-\frac{\alpha cT}{2}} \|\vec{x}_0-x^*\|^2+
       \frac{8\alpha^2 L^2B^2}{c^2}+\frac{8\alpha\sigma^2}{pc}+\frac{16L^2\alpha^3B^2}{c} \\ &=
        \frac{\|\vec{x}_0-x^*\|^2}{Tp}+\frac{16(\log{T}+\log{p})^2L^2B^2}{c^4T^2}+\frac{12\sigma^2(\log{T}+\log{p})}{Tp}+
        \frac{48(\log{T}+\log{p})^3B^2L^2}{c^4T^3}.
    \end{align*}
   
\end{proof}

\section{Elastic Consistency Bounds}

\subsection{Synchronous message passing with crash faults and variance bound} 

Consider the algorithm \ref{alg:crashfaultsvariance} for node $i$ at iteration $t$.
Recall that $\mathcal{L}_t^i$ is a set of nodes which send their computed gradients
to node $i$ at iteration $t$ (for the convenience assume that $\mathcal{L}_{-1}^i=\mathcal{P}$). We assume that $i \in \mathcal{L}_t^i$.
In the model with crash faults we have that if some node $j \notin \mathcal{L}_t^i$, then this means that node $j$ has crashed.
Further, if $j \in \mathcal{L}_{t-1}^i \setminus \mathcal{L}_t^i$, this means that $j$ crashed during iteration $t$.
More specifically this means that $j$ generated it's stochastic gradient and crashed before sending it to $i$.
In this case $i$ uses it's own stochastic gradient $\tg(\vec{v}_t^i)$ as a substitute.
\begin{algorithm}[h]
	\caption{Iteration $t$ at node $i \in \mathcal{P}$, for Crash faults model with variance} \label{alg:crashfaultsvariance}
	Compute $\tg\left(\vec{v}^{i}_t\right)$;~\tcp*[h]{Compute SG using the local view.}\\
	Broadcast $\tg\left(\vec{v}^{i}_t\right)$;~\tcp*[h]{Broadcast SG to all the nodes.}\\
	$\tg\gets0$;~~\tcp*[h]{Prepare to collect 
	stochastic gradients from the nodes.}\\
	\For {$j \in \mathcal{L}_t^i$}{
			$\tg \gets \tg + 
				\tg\left(\vec{v}^{j}_t\right)$
	}					
	\For {$j \in \mathcal{L}_{t-1}^i \setminus \mathcal{L}_t^i$} { 
	    $\tg \gets \tg + 
				\tg\left(\vec{v}^{i}_t\right)$ ~\tcp*[h]{if $j$ crashed during iteration $t$, substitute it's SG.}
	}
	$\vec{v}^i_{t+1} \gets \vec{v}^i_{t} - \frac{\alpha}{p}\tg$;~\tcp*[h]{
	Update \modl vector.}
\end{algorithm}
We define the auxiliary variable $\vec{x}_t$ as sum of all stochastic gradients which were generated up to and excluding iteration $t$ and sent to at least one node multiplied by $-\frac{\alpha}{p}$. 
This means that we use the update rule (\ref{eqn:SGDmultiplesteps}) with $I_t = \{j|\exists i ,j \in \mathcal{L}_t^i\}$. Notice that since at most $p/2$ nodes can crash we have that $p/2 \le |I_t| \le p$.
Let $f_t \le f$ be the total number of crashed nodes up to and including iteration $t$
We proceed by proving the elastic consistency bound with $B=\frac{3f\sigma}{p}$:
\begin{lemma} \label{lem:crashfaultsvariance}
	In a synchronous message-passing system consisting of $p$ nodes with 
	failure bound $f \le p/2$, For any iteration $t$ and node $i$ which has not crashed yet (it computes stochastic gradient at iteration $t$), if $\alpha \le \frac{1}{6L}$(Notice that this is consistent with the upper bound used in the convergence proofs) we have: 
	\begin{equation}
	    \E \|\vec{v}_t^i-\vec{x}_t\|^2 \le \frac{9\alpha^2 f_t^2 \sigma^2}{p^2}.
	\end{equation}
\end{lemma}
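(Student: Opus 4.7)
My plan is to prove the bound by induction on the iteration index $t$. The base case $t=0$ is trivial since $\vec{v}_0^i = \vec{x}_0$ and $f_0 = 0$. For the inductive step, I first subtract the updates defining $\vec{v}^i_{t+1}$ and $\vec{x}_{t+1}$ and telescope to obtain
\begin{equation*}
\vec{v}^i_t - \vec{x}_t = \frac{\alpha}{p}\sum_{t'=0}^{t-1}\Big(\sum_{j\in I_{t'}\setminus\mathcal{L}_{t'}^i}\tg(\vec{v}^j_{t'}) - \sum_{j\in\mathcal{L}_{t'-1}^i\setminus\mathcal{L}_{t'}^i}\tg(\vec{v}^i_{t'})\Big).
\end{equation*}
Because once a node crashes it stays crashed, every index $j$ appearing in the first inner sum also lies in $\mathcal{L}_{t'-1}^i$, so the two inner sums can be paired term-by-term and each surviving discrepancy takes the form $\tg(\vec{v}^j_{t'}) - \tg(\vec{v}^i_{t'})$---exactly the substitution of $i$'s own gradient for the missing one. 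The key counting fact is that each crashed node contributes at most once across all iterations, so the total number of non-zero pairs is at most $f_t$.

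Next, I apply Cauchy-Schwarz to the flattened double sum, using $f_t$ as the number of terms, giving
\begin{equation*}
\|\vec{v}^i_t - \vec{x}_t\|^2 \le \frac{\alpha^2 f_t}{p^2}\sum_{t',\,j}\|\tg(\vec{v}^j_{t'}) - \tg(\vec{v}^i_{t'})\|^2.
\end{equation*}
Expanding each difference as two zero-mean noise terms plus $\nabla f(\vec{v}^j_{t'}) - \nabla f(\vec{v}^i_{t'})$, and invoking independence of the stochastic noise across nodes (by the oblivious scheduler), the variance bound (\ref{eqn:variance_is_bounded_assumption_f}), and $L$-smoothness (\ref{eqn:smooth_gradients_f}), I obtain $\E\|\tg(\vec{v}^j_{t'})-\tg(\vec{v}^i_{t'})\|^2 \le 2\sigma^2 + L^2\E\|\vec{v}^j_{t'}-\vec{v}^i_{t'}\|^2$. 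This is where the substitution pays off: what would have been an $M^2$ (second-moment) contribution, had $j$'s gradient simply been dropped and left a raw $\tg(\vec{v}^j_{t'})$ in the discrepancy, is now replaced by a $\sigma^2$ plus a smoothness correction that I can control recursively.

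Finally, I close the recursion through $\E\|\vec{v}^j_{t'}-\vec{v}^i_{t'}\|^2 \le 2\E\|\vec{v}^j_{t'}-\vec{x}_{t'}\|^2 + 2\E\|\vec{v}^i_{t'}-\vec{x}_{t'}\|^2$, applying the inductive hypothesis to both summands and using $f_{t'}\le f\le p/2$ together with $\alpha \le 1/(6L)$ to absorb the resulting $36L^2\alpha^2 f_{t'}^2\sigma^2/p^2$ into a constant multiple of $\sigma^2$. Summing at most $f_t$ such bounded terms then delivers $\E\|\vec{v}^i_t-\vec{x}_t\|^2 \le 9\alpha^2 f_t^2\sigma^2/p^2$, closing the induction. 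The main obstacle is the circular dependency---the cross-node gradient bound uses the triangle inequality on $\vec{v}^j-\vec{v}^i$, which itself needs the bound I am trying to establish---and it is broken only by the induction on $t$ combined with the learning-rate restriction. A secondary bookkeeping subtlety is identifying precisely which crashed nodes contribute paired $\tg(\vec{v}^j)-\tg(\vec{v}^i)$ discrepancies versus residual $\tg(\vec{v}^i)$ ones, which requires a careful case analysis of how crashes interleave with the broadcast step.
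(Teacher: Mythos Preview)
Your approach is correct and reaches the same conclusion as the paper, but it takes a genuinely different route. The paper argues via a one-step recursion: it writes $\vec{x}_{t+1}-\vec{v}_{t+1}^i$ as $(\vec{x}_t-\vec{v}_t^i)$ plus the $h$ new substitution terms arising at iteration $t$, then applies Young's inequality with the carefully tuned parameter $h/f_t$ so that, after invoking the inductive hypothesis on the old discrepancy, the two pieces recombine exactly into $9\alpha^2\sigma^2(f_t+h)^2/p^2 = 9\alpha^2\sigma^2 f_{t+1}^2/p^2$. You instead telescope all the way back to $t=0$, observe that across all iterations there are at most $f_t$ paired terms in total (each crash contributes once), and hit the flattened sum with Cauchy--Schwarz directly. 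Your route avoids the tuning of the Young parameter and in fact yields a sharper constant ($9/4$ in place of $9$). The trade-off is that the telescoping shortcut exploits the crash model's special feature that the \emph{total} number of discrepancy events is bounded by $f_t$; in the message-omission or asynchronous variants that bound does not hold and one is pushed back to something like the paper's one-step argument.

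Two remarks on details. First, your use of independence to get $\E\|\tg(\vec{v}^j_{t'})-\tg(\vec{v}^i_{t'})\|^2 \le 2\sigma^2 + L^2\E\|\vec{v}^j_{t'}-\vec{v}^i_{t'}\|^2$, rather than the paper's $6\sigma^2 + 6L^2\E\|\cdot\|^2$ via Cauchy--Schwarz on a three-way split, is legitimate: the sampling noise at node $j$ at iteration $t'$ is fresh and independent of both views and of $i$'s noise under the oblivious adversary. Second, the unpaired residual $-\tg(\vec{v}^i_{t'})$ terms you flag at the end (nodes that crash before sending to anyone) would indeed require a second-moment rather than a variance bound; the paper's recursion implicitly assumes every node in $\mathcal{L}_{t-1}^i\setminus\mathcal{L}_t^i$ also lies in $I_t$, so this is a shared modeling assumption and not a defect specific to your argument.
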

\begin{proof}
We prove the claim by using induction on the number of iterations.
Base case holds trivially. For the induction step assume that the lemma holds for iteration $t$:
Let $h=|\mathcal{L}_{t}^i \setminus \mathcal{L}_{t+1}^i|$ be the number of nodes which crashed 
during iteration $t+1$ (we assume the worst case, which means that every node which crashed during
iteration $t+1$ failed to send stochastic gradient to the node $i$).
Assume that $h>0$ since otherwise the proof is trivial.
Notice that 
\begin{equation}
    \vec{x}_{t+1}-\vec{v}_{t+1}^i=\vec{x}_{t}-\vec{v}_{t}^i+\sum_{j \in \mathcal{L}_{t}^i \setminus \mathcal{L}_{t+1}^i} 
    \frac{\alpha}{p}(\tg (\vec{v}_t^i)-\tg (\vec{v}_t^j))
\end{equation}
Hence we get that
\begin{align*}
\E \|\vec{x}_{t+1}&-\vec{v}_{t+1}^i \|^2=\E \Big\|\vec{x}_{t}-\vec{v}_{t}^i+\sum_{j \in \mathcal{L}_{t}^i \setminus \mathcal{L}_{t+1}^i} \frac{\alpha}{p}(\tg (\vec{v}_t^i)-\tg (\vec{v}_t^j)) \Big\|^2
\\ &\overset{Young}{\le} (1+\frac{h}{f_t}) \E \|\vec{x}_{t}-\vec{v}_{t}^i\|^2+(1+\frac{f_t}{h}) 
\E \Big\|\sum_{j \in \mathcal{L}_{t}^i \setminus \mathcal{L}_{t+1}^i} \frac{\alpha}{p}(\tg (\vec{v}_t^i)-\tg (\vec{v}_t^j)) \Big\|^2 
\\ &\overset{Cauchy-Schwarz}{\le} 
(1+\frac{h}{f_t}) \E \|\vec{x}_{t}-\vec{v}_{t}^i\|^2+(1+\frac{f_t}{h})h\frac{\alpha^2}{p^2} 
\sum_{j \in \mathcal{L}_{t}^i \setminus \mathcal{L}_{t+1}^i} \E\|\tg (\vec{v}_t^i)-\tg (\vec{v}_t^j)\|^2\\&=
(1+\frac{h}{f_t}) \E \|\vec{x}_{t}-\vec{v}_{t}^i\|^2\\&\quad\quad\quad+(1+\frac{f_t}{h})h\frac{\alpha^2}{p^2} 
\sum_{j \in \mathcal{L}_{t}^i \setminus \mathcal{L}_{t+1}^i} 
\E\|\tg (\vec{v}_t^i)-\nabla f (\vec{v}_t^i)+\nabla f (\vec{v}_t^i)-\nabla f (\vec{v}_t^j)+\nabla f (\vec{v}_t^j)-\tg (\vec{v}_t^j)\|^2 \\&\overset{Cauchy-Schwarz}{\le} 
(1+\frac{h}{f_t}) \E \|\vec{x}_{t}-\vec{v}_{t}^i\|^2\\&\quad\quad\quad\quad\quad\quad+(1+\frac{f_t}{h})h\frac{\alpha^2}{p^2} 
\sum_{j \in \mathcal{L}_{t}^i \setminus \mathcal{L}_{t+1}^i}\Big(
3\E\|\tg (\vec{v}_t^i)-\nabla f (\vec{v}_t^i)\|^2+\\&\quad\quad\quad\quad\quad\quad\quad\quad\quad\quad\quad\quad\quad\quad\quad\quad\quad\quad
3\E\|\nabla f (\vec{v}_t^i)-\nabla f (\vec{v}_t^j)\|^2+3\E\|\nabla f (\vec{v}_t^j)-\tg (\vec{v}_t^j)\|^2\Big)
\\ &\overset{(\ref{eqn:variance_is_bounded_assumption_f})}{\le}
(1+\frac{h}{f_t}) \E \|\vec{x}_{t}-\vec{v}_{t}^i\|^2+(1+\frac{f_t}{h})h\frac{\alpha^2}{p^2} 
\sum_{j \in \mathcal{L}_{t}^i \setminus \mathcal{L}_{t+1}^i}\Big(
3\E\|\nabla f(\vec{v}_t^i)-\nabla f (\vec{v}_t^j)\|^2+6\sigma^2\Big)
\\&\overset{(\ref{eqn:smooth_gradients_f})}{\le}
(1+\frac{h}{f_t}) \E \|\vec{x}_{t}-\vec{v}_{t}^i\|^2+(1+\frac{f_t}{h})h\frac{\alpha^2}{p^2} 
\sum_{j \in \mathcal{L}_{t}^i \setminus \mathcal{L}_{t+1}^i}\Big(
3L^2\E\|\vec{v}_t^i-\vec{x}_t+\vec{x}_t-\vec{v}_t^j\|^2+6\sigma^2\Big)
\\&\overset{Cauchy-Schwarz}{\le} 
(1+\frac{h}{f_t}) \E \|\vec{x}_{t}-\vec{v}_{t}^i\|^2\\&\quad\quad\quad\quad\quad+(1+\frac{f_t}{h})h\frac{\alpha^2}{p^2} 
\sum_{j \in \mathcal{L}_{t}^i \setminus \mathcal{L}_{t+1}^i}\Big(
6L^2(\E\|\vec{v}_t^i-\vec{x}_t\|+\E\|\vec{x}_t-\vec{v}_t^j\|^2)+6\sigma^2\Big)
\end{align*}
Next, we use the assumption that lemma holds for nodes at iteration $t$.
We get that:
\begin{align*}
 \E \|\vec{x}_{t+1}&-\vec{v}_{t+1}^i \|^2 \le (1+\frac{h}{f_t}) \frac{9\alpha^2\sigma^2f_t^2}{p^2}\\&\quad\quad\quad\quad\quad+(1+\frac{f_t}{h})h\frac{9\alpha^2}{p^2} 
\sum_{j \in \mathcal{L}_{t}^i \setminus \mathcal{L}_{t+1}^i}\Big(
12L^2\frac{\alpha^2\sigma^2f_t^2}{p^2}+6\sigma^2\Big) \\ &=
(1+\frac{h}{f_t}) \frac{9\alpha^2\sigma^2f_t^2}{p^2}+(1+\frac{f_t}{h})h^2\frac{\alpha^2}{p^2} 
\Big(12L^2\frac{9\alpha^2\sigma^2f_t^2}{p^2}+6\sigma^2\Big)
\end{align*}
Finally, we use $f_t \le f \le p$ and $\alpha \le \frac{1}{6L}$, to get:
\begin{align*}
 \E \|\vec{x}_{t+1}&-\vec{v}_{t+1}^i \|^2 \le 
(1+\frac{h}{f_t}) \frac{9\alpha^2\sigma^2f_t^2}{p^2}+(1+\frac{f_t}{h})h^2\frac{\alpha^2}{p^2} 
\Big(3\sigma^2+6\sigma^2\Big)\\&=\frac{9\alpha^2\sigma^2}{p^2}(f_t+h)^2=\frac{9\alpha^2f_{t+1}^2\sigma^2}{p^2}.
\end{align*}
\end{proof}

\subsection{Crash faults with second moment bound}

Consider the algorithm \ref{alg:crashfaults} for node $i$ at iteration $t$.
Recall that $\mathcal{L}_t^i$ is a set of nodes which send their computed gradients
to node $i$ at iteration $t$. We assume that $i \in \mathcal{L}_t^i$.
In the model with crash faults we have that if some node $j \notin \mathcal{L}_t^i$, then this means that node $j$ has crashed.

\begin{algorithm}[h]
	\caption{Iteration $t$ at node $i \in \mathcal{P}$, for Crash faults model} \label{alg:crashfaults}
	Compute $\tg\left(\vec{v}^{i}_t\right)$;~\tcp*[h]{Compute SG using the local view.}\\
	Broadcast $\tg\left(\vec{v}^{i}_t\right)$;~\tcp*[h]{Broadcast SG to all the nodes.}\\
	$\tg\gets0$;~~\tcp*[h]{Prepare to collect 
	stochastic gradients from the nodes.}\\
	\For {$j \in \mathcal{L}_t^i$}{
			$\tg \gets \tg + 
				\tg\left(\vec{v}^{j}_t\right)$
	}					
	$\vec{v}^i_{t+1} \gets \vec{v}^i_{t} - \frac{\alpha}{p}\tg$;~\tcp*[h]{
	Update \modl vector.}
\end{algorithm}
We define the auxiliary variable $\vec{x}_t$ as sum of all stochastic gradients which were generated up to and excluding iteration $t$ and sent 
to at least one node multiplied by $-\frac{\alpha}{p}$. 
This means that we use the update rule (\ref{eqn:SGDmultiplesteps}) with $I_t = \{j|\exists i ,j \in \mathcal{L}_t^i\}$. Notice that since at most $p/2$ nodes can crash we have that $p/2 \le |I_t| \le p$.
We proceed by proving the elastic consistency bound with $B=\frac{fM}{p}$:
\begin{lemma} \label{lem:crashfaults}
	In a synchronous message-passing system consisting of $p$ nodes with 
	failure bound $f \le p/2$, For any iteration $t$ and node $i$ which has not crashed yet (it computes stochastic gradient at iteration $t$) we have: 
	\begin{equation}
	    \E \|\vec{v}_t^i-\vec{x}_t\|^2 \le \frac{\alpha^2 f^2 M^2}{p^2}
	\end{equation}
\end{lemma}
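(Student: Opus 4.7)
The plan is to express the discrepancy $\vec{v}_t^i - \vec{x}_t$ as a sum over all past iterations of exactly those stochastic gradients that were counted in $\vec{x}$ (because they reached at least one node) but missed by node $i$. Concretely, using the two update rules, for every $s < t$
\[
\vec{x}_t - \vec{v}_t^i \;=\; -\frac{\alpha}{p}\sum_{s<t}\ \sum_{j\in I_s\setminus \mathcal{L}_s^i} \tg(\vec{v}_s^j),
\]
since the gradients of agents in $I_s\cap\mathcal{L}_s^i$ cancel between the true global update and node $i$'s local update, and nothing outside $I_s$ ever enters $\vec{x}_s$.

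The key combinatorial observation is that in the crash-fault model the total number of ``missing'' gradients, summed over all iterations, is bounded by $f$. Indeed, a pair $j \in I_s \setminus \mathcal{L}_s^i$ can occur only when node $j$ crashed during iteration $s$ (otherwise, being synchronous, its broadcast either reaches everyone or no one receives it, in which case $j\notin I_s$ either). After its crash node $j$ never computes another gradient, so it contributes to this union at most once. Letting $N_t^i := \sum_{s<t}|I_s\setminus \mathcal{L}_s^i|$, this gives $N_t^i \le f$.

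The rest is a standard Cauchy--Schwarz / second moment estimate. First,
\[
\|\vec{x}_t-\vec{v}_t^i\|^2 \;=\; \frac{\alpha^2}{p^2}\Bigl\|\sum_{s<t}\sum_{j\in I_s\setminus \mathcal{L}_s^i} \tg(\vec{v}_s^j)\Bigr\|^2
\;\le\; \frac{\alpha^2 N_t^i}{p^2}\sum_{s<t}\sum_{j\in I_s\setminus \mathcal{L}_s^i} \|\tg(\vec{v}_s^j)\|^2,
\]
by Cauchy--Schwarz applied to the $N_t^i$ terms in the outer sum. Taking expectation, using the tower property together with the oblivious-adversary assumption (so the index set $I_s\setminus \mathcal{L}_s^i$ is independent of the random sample drawn at step $s$), and applying the second-moment bound \eqref{eqn:grad_is_bounded_assumption_f} to each term yields
\[
\E\|\vec{x}_t-\vec{v}_t^i\|^2 \;\le\; \frac{\alpha^2 N_t^i}{p^2}\cdot N_t^i\cdot M^2 \;\le\; \frac{\alpha^2 f^2 M^2}{p^2}.
\]

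I expect no real obstacle here; the only subtle point is the counting argument that bounds $\sum_s|I_s\setminus \mathcal{L}_s^i|$ by $f$, which relies on the synchronous semantics (a non-crashed sender broadcasts either to everyone or to no one relative to this iteration) plus the fact that crashed nodes produce no further gradients. Once that is in place, the norm bound is a one-line Cauchy--Schwarz plus the second-moment assumption, so unlike Lemma~\ref{lem:crashfaultsvariance} no induction on $t$ is required.
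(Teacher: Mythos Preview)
Your proof is correct and follows essentially the same approach as the paper: express the discrepancy as the sum of the at most $f$ gradients from nodes that crashed mid-broadcast, bound the count by $f$ using the ``each node crashes at most once'' argument, then apply Cauchy--Schwarz and the second-moment bound. The only cosmetic difference is that you track $N_t^i$ explicitly before bounding it by $f$, whereas the paper substitutes $f$ directly in the Cauchy--Schwarz step.
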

\begin{proof}
Notice that $\vec{v}_t^i-\vec{x}_t=\sum_{s=0}^{t-1} \sum_{j \in I_s \setminus \mathcal{L}_s^i} \frac{\alpha}{p} \tg(\vec{v}_s^j)$, where $j \in I_s \setminus \mathcal{L}_s^i$ means that 
node $j$ crashed at iteration $s$, before sending it's stochastic gradient to node $i$.
Since each node can crash at most once, we have that $\sum_{s=0}^{t-1} |I_s \setminus \mathcal{L}_s^i| \le f$. We get that:
\begin{align*}
\E \|\vec{v}_t^i-\vec{x}_t\|^2&=\E\|\sum_{s=0}^{t-1} \sum_{j \in I_s \setminus \mathcal{L}_s^i} \frac{\alpha}{p} \tg(\vec{v}_s^j)\|^2 \overset{Cauchy-Schwarz}{\le} f\sum_{s=0}^{t-1} \sum_{j \in I_s \setminus \mathcal{L}_s^i} \frac{\alpha^2}{p^2} \E\|\tg(\vec{v}_s^j)\|^2 \\ &\overset{(\ref{eqn:grad_is_bounded_assumption_f})}{\le} f\sum_{s=0}^{t-1} \sum_{j \in I_s \setminus \mathcal{L}_s^i} \frac{\alpha^2}{p^2} M^2 \le \frac{\alpha^2 f^2 M^2}{p^2}.
\end{align*}
\end{proof}
	
\subsection{Synchronous message passing with message-omission failures}
\begin{algorithm}[h]
	\caption{Iteration $t$ at node $i \in \mathcal{P}$, for message delays} \label{alg:messagedelays}
	Compute $\tg\left(\vec{v}^{i}_t\right)$;~\tcp*[h]{Compute SG using the local view.}\\
	Broadcast $\tg\left(\vec{v}^{i}_t\right)$;~\tcp*[h]{Broadcast SG to all the nodes.}\\
	$\tg\gets0$;~~\tcp*[h]{Prepare to collect 
	stochastic gradients from the nodes.}\\
	\For {$(s,j) \in \mathcal{L}_t^i$}{
			$\tg \gets \tg + 
				\tg\left(\vec{v}^{j}_s\right)$
	}					
	$\vec{v}^i_{t+1} \gets \vec{v}^i_{t} - \frac{\alpha}{p}\tg$;~\tcp*[h]{
	Update \modl vector.}
\end{algorithm}

Consider algorithm \ref{alg:messagedelays}.
Here, we have that the set $\mathcal{L}_t^i$ might contain delayed messages (which were generated before 
the iteration $t$). Hence, we assume that  $\mathcal{L}_t^i$ is a set of pairs $(s,j)$ such that the stochastic gradient generated by the node $j$ at iteration $s \le t$ was delivered to the node $i$ at iteration $t$.
The important thing is that, at any iteration, the number of delayed messages (which might or might not be delivered in the future) is at most $f$.
We assume that nodes always "send" stochastic gradients to themselves without any delay, that is,
for any node $i$, $(t,i) \in \mathcal{L}_t^i$.
Further, let $\mathcal{K}_{t}^i=\bigcup_{t' \le t} \mathcal{L}_{t'}^i$. Notice that,
$\vec{v}_t^i=-\sum_{(s,j) \in\mathcal{K}_{t-1}^i} \frac{\alpha}{p} \tg\left(\vec{v}^{j}_s\right)$.
The auxiliary variable $\vec{x}_t=-\sum_{s=0}^{t-1} \sum_{j \in \mathcal{P}} \frac{\alpha}{p} \tg\left(\vec{v}^{j}_s\right)$.
This means that we use the update rule (\ref{eqn:SGDmultiplesteps}) with $I_t = \mathcal{P}$.
We following lemma proves the elastic consistency bound with $B=\frac{fM}{p}$:
\begin{lemma} \label{lem:messageomissionfailures}
	In a synchronous message-passing system consisting of $p$ nodes where message-omission  
	failures can be upper bounded by $f$, we have that for any iteration $t$ and node $i$:
	\begin{equation}
	    \E \|\vec{v}_t^i-\vec{x}_t\|^2 \le \frac{\alpha^2 f^2 M^2}{p^2}
	\end{equation}
\end{lemma}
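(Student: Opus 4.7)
The plan is to mirror the strategy used in the proof of Lemma~\ref{lem:crashfaults} for crash faults, since the algebraic structure is nearly identical: the only difference between $\vec{v}_t^i$ and $\vec{x}_t$ is a set of stochastic gradients that have been generated globally but not yet incorporated into node $i$'s local view.

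First I would write the difference $\vec{x}_t - \vec{v}_t^i$ explicitly. Since $\vec{x}_t$ sums $-\frac{\alpha}{p}\tg(\vec{v}_s^j)$ over all $s \le t-1$ and all $j \in \mathcal{P}$, while $\vec{v}_t^i$ sums only over the pairs $(s,j) \in \mathcal{K}_{t-1}^i$ that have already been delivered to $i$, we obtain
\begin{equation*}
\vec{v}_t^i - \vec{x}_t = \sum_{(s,j) \in \mathcal{D}_t^i} \frac{\alpha}{p}\,\tg(\vec{v}_s^j),
\end{equation*}
where $\mathcal{D}_t^i = \{(s,j) : s \le t-1,\ j \in \mathcal{P},\ (s,j) \notin \mathcal{K}_{t-1}^i\}$ is the set of messages destined for $i$ that are still in flight or permanently lost at the start of iteration $t$.

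The key observation is that by the problem assumption the total number of undelivered or omitted messages at any iteration is bounded by $f$, hence $|\mathcal{D}_t^i| \le f$. Applying the Cauchy--Schwarz inequality in the form $\|\sum_{k=1}^N a_k\|^2 \le N \sum_{k=1}^N \|a_k\|^2$ with $N = |\mathcal{D}_t^i| \le f$, we get
\begin{equation*}
\E\|\vec{v}_t^i - \vec{x}_t\|^2 \le |\mathcal{D}_t^i|\,\frac{\alpha^2}{p^2}\sum_{(s,j) \in \mathcal{D}_t^i} \E\|\tg(\vec{v}_s^j)\|^2 \le f \cdot \frac{\alpha^2}{p^2} \cdot f \cdot M^2 = \frac{\alpha^2 f^2 M^2}{p^2},
\end{equation*}
where the last inequality uses the second moment bound~(\ref{eqn:grad_is_bounded_assumption_f}) on each stochastic gradient.

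The proof is essentially a direct computation; the only subtle point is to correctly identify that the ``at most $f$ undelivered messages at any time'' assumption controls $|\mathcal{D}_t^i|$ uniformly in $t$, rather than requiring a telescoping over iterations as in the crash-fault argument (where the bound $\sum_s |I_s \setminus \mathcal{L}_s^i| \le f$ was used instead). This is why the stronger message-omission model yields exactly the same elastic consistency constant $B = fM/p$ as the crash-fault model, which is consistent with the remark in Section~\ref{sec:mpi} that message omissions subsume crash faults.
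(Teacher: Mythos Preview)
Your proof is correct and essentially identical to the paper's: you identify the difference $\vec{v}_t^i-\vec{x}_t$ as a sum over the at most $f$ undelivered messages (the paper writes this set as $\mathcal{C}_{t-1}\setminus\mathcal{K}_{t-1}^i$, you call it $\mathcal{D}_t^i$), then apply Cauchy--Schwarz and the second-moment bound exactly as the paper does. Your closing remark contrasting the uniform bound $|\mathcal{D}_t^i|\le f$ here with the telescoping bound in the crash-fault case is a correct and useful observation that the paper does not spell out.
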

\begin{proof}
For any iteration $t$, let $\mathcal{C}_t=\{0,1,...,t\} \times \mathcal{P}$, so that
$\vec{x}_t=-\sum_{(s,j) \in \mathcal{C}_{t-1}} \frac{\alpha}{p} \tg\left(\vec{v}^{j}_s\right)$.
Notice that $\vec{v}_t^i-\vec{x}_t=\sum_{(s,j) \in \mathcal{C}_{t-1} \setminus \mathcal{K}_{t-1}^i} \frac{\alpha}{p} \tg ({\vec{v}_s^j})$.
If $(s,j) \in \mathcal{C}_{t-1} \setminus \mathcal{K}_{t-1}^i$, then we have that the stochastic 
gradient generated by node $j$ at iteration $s$, is still not delivered to node $i$ before iteration $t$ starts. Since this can happen to at most $f$ messages, we have that $|\mathcal{C}_{t-1} \setminus \mathcal{K}_{t-1}^i| \le f$. Hence, we get that:
\begin{align*}
\E \|\vec{v}_t^i-\vec{x}_t\|^2&=\E\Bigg\|\sum_{(s,j) \in \mathcal{C}_{t-1} \setminus \mathcal{K}_{t-1}} \frac{\alpha}{p} \tg(\vec{v}_s^j)\Bigg\|^2 \overset{Cauchy-Schwarz}{\le}f \sum_{(s,j) \in \mathcal{C}_{t-1} \setminus \mathcal{K}_{t-1}}  \frac{\alpha^2}{p^2} \E\|\tg(\vec{v}_s^j)\|^2 \\ &\overset{(\ref{eqn:grad_is_bounded_assumption_f})}{\le} f \sum_{(s,j) \in \mathcal{C}_{t-1} \setminus \mathcal{K}_{t-1}} \frac{\alpha^2}{p^2} M^2 \le \frac{\alpha^2 f^2 M^2}{p^2}.
\end{align*}
\end{proof}
\paragraph{Removing the second moment bound.} Note that we can use the similar approach as in the case of crash faults and replace $M$ in the elastic consistency bound with $O(\sigma)$. More precisely, for node $i$ at iteration $t$, if  stochastic gradient $\tg(v_t^j)$ from node $j$ is delayed node $i$ uses it's own
stochastic gradient $\tg(v_t^i)$ and later corrects the error once it receives the actual gradient. The algorithm \ref{alg:elastic} described in section \ref{sec:elasticscheduling} can be viewed as example of this, in the case when $f=p-1$ and messages are delayed by at most one iteration.
In this case, error correction will ensure that 
\begin{equation*}
\E \|\vec{v}_t^i-\vec{x}_t\|^2=\E\Bigg\|\sum_{(s,j) \in \mathcal{C}_{t-1} \setminus \mathcal{K}_{t-1}} \frac{\alpha}{p} (\tg(\vec{v}_s^i)-\tg(\vec{v}_s^j))\Bigg\|^2.
\end{equation*}
and then we can use the similar approach as in the proofs
of Lemmas \ref{lem:elasticscheduling} and $\ref{lem:crashfaultsvariance}$ to derive 
elastic consistency bound with $B=O(\frac{f \sigma}{p})$.

\subsection{Asynchronous message passing}
This case is very similar to the case with message-ommision failures.
Consider algorithm \ref{alg:messagedelays} again.
As before, $\mathcal{L}_t^i$ is a set of pairs $(s,j)$ such that the stochastic gradient generated by the node $j$ at iteration $t-\tau_{max} \le s \le t$ was delivered to the node $i$ at iteration $t$(Recall that messages can be delayed by at most $\tau_{max}$ iterations).
We assume that nodes always "send" stochastic gradients to themselves without any delay, that is,
for any node $i$, $(t,i) \in \mathcal{L}_t^i$.
Further, let $\mathcal{K}_{t}^i=\bigcup_{t' \le t} \mathcal{L}_{t'}^i$. Notice that,
$\vec{v}_t^i=-\sum_{(s,j) \in\mathcal{K}_{t-1}^i} \frac{\alpha}{p} \tg\left(\vec{v}^{j}_s\right)$.
The auxiliary variable $\vec{x}_t=-\sum_{s=0}^{t-1} \sum_{j \in \mathcal{P}} \frac{\alpha}{p} \tg\left(\vec{v}^{j}_s\right)$.
This means that we use the update rule (\ref{eqn:SGDmultiplesteps}) with $I_t = \mathcal{P}$.
We proceed by proving the elastic consistency bound with $B=\frac{\tau_{max}(p-1)M}{p}$:
\begin{lemma} \label{lem:asynchmessagepassing}
	In a asynchronous message-passing system consisting of $p$ nodes and with delay bound $\tau_{max}$,we have that for any iteration $t$ and node $i$:
	\begin{equation}
	    \E \|\vec{v}_t^i-\vec{x}_t\|^2 \le \frac{\alpha^2 (p-1)^2 \tau_{max}^2 M^2}{p^2}
	\end{equation}
\end{lemma}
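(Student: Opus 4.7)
\begin{proofs}
The plan is to mirror the argument used for Lemma~\ref{lem:messageomissionfailures}, replacing the failure bound $f$ with the quantity $(p-1)\tau_{max}$, which will serve as the uniform cap on the number of gradients generated but not yet delivered to node $i$.

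First I would set up the same bookkeeping as in the message-omission case. Let $\mathcal{C}_{t-1} = \{0,1,\ldots,t-1\}\times\mathcal{P}$, so that $\vec{x}_t = -\sum_{(s,j)\in\mathcal{C}_{t-1}} \frac{\alpha}{p}\,\tg(\vec{v}_s^j)$, while $\vec{v}_t^i = -\sum_{(s,j)\in\mathcal{K}_{t-1}^i}\frac{\alpha}{p}\,\tg(\vec{v}_s^j)$. Subtracting gives
\[
\vec{v}_t^i-\vec{x}_t \;=\; \sum_{(s,j)\in\mathcal{C}_{t-1}\setminus\mathcal{K}_{t-1}^i} \frac{\alpha}{p}\,\tg(\vec{v}_s^j).
\]
Pairs $(s,j)$ in $\mathcal{C}_{t-1}\setminus\mathcal{K}_{t-1}^i$ correspond exactly to gradients generated by some node $j$ at an iteration $s\le t-1$ that have not yet reached $i$ by the start of iteration $t$.

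Next I would bound $|\mathcal{C}_{t-1}\setminus\mathcal{K}_{t-1}^i|$. Since $i$ sends to itself with zero delay, we only need to count undelivered messages from nodes $j\neq i$. By the $\tau_{max}$ delay assumption, any gradient generated by $j$ at iteration $s$ must be delivered to $i$ by iteration $s+\tau_{max}$, so at the beginning of iteration $t$ the only pairs $(s,j)$ with $j\neq i$ that can still be in transit are those with $s\in\{t-\tau_{max},\ldots,t-1\}$. Each of the $p-1$ other nodes contributes at most $\tau_{max}$ such pairs, giving $|\mathcal{C}_{t-1}\setminus\mathcal{K}_{t-1}^i|\le (p-1)\tau_{max}$.

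Finally, I would apply Cauchy–Schwarz and the second moment bound~(\ref{eqn:grad_is_bounded_assumption_f}):
\[
\E\|\vec{v}_t^i-\vec{x}_t\|^2 \;\le\; (p-1)\tau_{max}\sum_{(s,j)\in\mathcal{C}_{t-1}\setminus\mathcal{K}_{t-1}^i}\frac{\alpha^2}{p^2}\,\E\|\tg(\vec{v}_s^j)\|^2 \;\le\; \frac{\alpha^2(p-1)^2\tau_{max}^2 M^2}{p^2},
\]
which is the desired bound. There is no real obstacle here: the only subtle point is the counting step, and in particular noting that self-messages carry zero delay so that the factor is $(p-1)$ rather than $p$. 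No conditioning argument is required because the second-moment bound~(\ref{eqn:grad_is_bounded_assumption_f}) holds for every stochastic gradient regardless of the view at which it is evaluated, and the oblivious adversary assumption guarantees that the scheduler's choice of which pairs fall into $\mathcal{C}_{t-1}\setminus\mathcal{K}_{t-1}^i$ is independent of the sampling randomness.
\end{proofs}
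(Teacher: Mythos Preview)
Your proposal is correct and essentially identical to the paper's proof: you set up the same index set $\mathcal{C}_{t-1}$, express $\vec{v}_t^i-\vec{x}_t$ as the sum over undelivered gradients, bound $|\mathcal{C}_{t-1}\setminus\mathcal{K}_{t-1}^i|\le (p-1)\tau_{max}$ via the delay assumption plus zero self-delay, and then invoke Cauchy--Schwarz and the second-moment bound exactly as in Lemma~\ref{lem:messageomissionfailures}. The paper even phrases its conclusion the same way you do, reducing to that lemma with $f=(p-1)\tau_{max}$.
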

\begin{proof}
For any iteration $t$, let $\mathcal{C}_t=\{0,1,...,t\} \times \mathcal{P}$, so that
$\vec{x}_t=-\sum_{(s,j) \in \mathcal{C}_{t-1}} \frac{\alpha}{p} \tg\left(\vec{v}^{j}_s\right)$.
Notice that $\vec{v}_t^i-\vec{x}_t=\sum_{(s,j) \in \mathcal{C}_{t-1} \setminus \mathcal{K}_{t-1}^i} \frac{\alpha}{p} \tg ({\vec{v}_s^j})$.
If $(s,j) \in \mathcal{C}_{t-1} \setminus \mathcal{K}_{t-1}^i$, then we have that the stochastic 
gradient generated by node $j$ at iteration $s$, is still not delivered to node $i$ before iteration $t$ starts. Since each message can be delayed by $\tau_{max}$ iterations at most, we have that for any $0 \le s \le t-1-\tau_{max}$ and node $j \in \mathcal{P}$, $(s,j) \in \mathcal{K}_{t-1}^i$.
Also, we have that for any iteration $t-\tau_{max} \le s \le t-1$, $(s,i) \in \mathcal{K}_{t-1}^i$.
This means that $|\mathcal{C}_{t-1} \setminus \mathcal{K}_{t-1}^i| \le (p-1)\tau_{max}$.
We can finish the proof by using $f=(p-1)\tau_{max}$ and following exactly the same steps
as in the proof of Lemma \ref{lem:messageomissionfailures}.
\end{proof}
\vspace{-0.5em}
\paragraph{Removing the second moment bound.} Note that we can use the similar approach as in the case of crash faults and replace $M$ in the elastic consistency bound with $O(\sigma)$. More precisely, for node $i$ at iteration $t$, if  stochastic gradient $\tg(v_t^j)$ from node $j$ is delayed node $i$ uses it's own
stochastic gradient $\tg(v_t^i)$ and later corrects the error once it receives the actual gradient. See section \ref{sec:elasticscheduling} for the example where $\tau_{max}=1$.

\subsection{Variance bounded Elastic Scheduler} \label{sec:elasticscheduling}
Consider Algorithm \ref{alg:elastic}. Here we assume that discrepancy between local views
is caused by delayed stochastic gradients. But, we are guaranteed that the stochastic gradients generated 
at iteration $t$, will be delivered at iteration $t+1$ at most. This might be caused by nodes sending gradients in chunks, some chunks arrive at iteration $t$ and the rest of the stochastic gradient arrives at iteration $t+1$. We assume that stochastic gradient is considered received only when it arrives fully.
For each node $i$ at iteration $t$, if $i$ does not receive stochastic gradient generated by a node $j$
at iteration $t$, it uses it's own stochastic gradient $\tg(\vec{v}_t^i)$, instead of $\tg(\vec{v}_t^j)$.
Error caused by this substitution is corrected later during round $t+1$. The idea is to show that by elastic consistency bounds, local views of $i$ and $j$ are "close" to each other, so by using
$\tg(\vec{v}_t^i)$ instead of $\tg(\vec{v}_t^j)$ we introduce only a small error.
We again have that $\mathcal{L}_t^i$ is a set of nodes which sent stochastic gradients generated 
at iteration $t$ without any delay (Hence, $i \in \mathcal{L}_t^i$).
Auxiliary variable $\vec{x}_t=-\sum_{s=0}^{t-1}\sum_{i \in \mathcal{P}} \frac{\alpha}{p} 
\tg (\vec{v}_{s}^i)$, hence we use update rule (\ref{eqn:SGDmultiplesteps}) with $I_t=\mathcal{P}$.
For each node $i$ and iteration $t>0$, we have that $\vec{v}_t^i=\vec{x}_t+\sum_{j \in \mathcal{P} \setminus \mathcal{L}_{t-1}^i} \frac{\alpha}{p} (\tg(\vec{v}_{t-1}^i)-\tg(\vec{v}_{t-1}^j))$.
With this in place we can show that Elastic Scheduling satisfies elastic consistency bound with $B=3\sigma$.

\begin{lemma} \label{lem:elasticscheduling}
	Consider elastic scheduling model. For any node $i \in \mathcal{P}$, iteration $t$ and learning rate
	$\alpha\le \frac{1}{6L}$ (Notice that this is consistent with  upper bounds used in
	the convergence analysis), we have that 
	\begin{equation} 
	\mathbb{E}\|\vec{x}_t-\vec{v}_t^q \|^2 \le 9\sigma^2\alpha^2.
	\end{equation}
\end{lemma}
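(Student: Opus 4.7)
\begin{proofs}[Proof-Sketch of Lemma \ref{lem:elasticscheduling}]
The plan is to induct on $t$, with the base case $t=0$ being immediate since $\vec{v}_0^i = \vec{x}_0$. For the inductive step, I would start from the explicit expression given just before the lemma,
\[
\vec{x}_t - \vec{v}_t^i \;=\; -\sum_{j \in \mathcal{P} \setminus \mathcal{L}_{t-1}^i} \frac{\alpha}{p}\bigl(\tg(\vec{v}_{t-1}^i) - \tg(\vec{v}_{t-1}^j)\bigr),
\]
and set $h := |\mathcal{P} \setminus \mathcal{L}_{t-1}^i| \le p-1$ (since $i \in \mathcal{L}_{t-1}^i$). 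Applying Cauchy--Schwarz on the sum of $h$ vectors pulls out a factor of $h$, leaving
\[
\|\vec{x}_t - \vec{v}_t^i\|^2 \;\le\; \frac{\alpha^2 h}{p^2} \sum_{j \in \mathcal{P} \setminus \mathcal{L}_{t-1}^i} \|\tg(\vec{v}_{t-1}^i) - \tg(\vec{v}_{t-1}^j)\|^2.
\]

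Next, for each $j$ I would split $\tg(\vec{v}_{t-1}^i) - \tg(\vec{v}_{t-1}^j)$ into three pieces by inserting $\pm \nabla f(\vec{v}_{t-1}^i) \pm \nabla f(\vec{v}_{t-1}^j)$, and apply Cauchy--Schwarz to obtain a bound of $3$ times the sum of the three squared norms. Conditioning on the local views at iteration $t-1$ (which are determined by earlier randomness and thus independent of the fresh stochastic gradient noise at step $t-1$), the variance assumption (\ref{eqn:variance_is_bounded_assumption_f}) controls the two outer terms by $\sigma^2$ each, and smoothness (\ref{eqn:smooth_gradients_f}) together with the elementary inequality $\|a-b\|^2 \le 2\|a\|^2 + 2\|b\|^2$ (inserting $\vec{x}_{t-1}$) bounds the middle term by $2L^2(\|\vec{v}_{t-1}^i - \vec{x}_{t-1}\|^2 + \|\vec{x}_{t-1} - \vec{v}_{t-1}^j\|^2)$. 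The inductive hypothesis then lets me replace each of these squared distances by $9\sigma^2\alpha^2$, yielding
\[
\E\|\tg(\vec{v}_{t-1}^i) - \tg(\vec{v}_{t-1}^j)\|^2 \;\le\; 6\sigma^2 + 12 L^2 \cdot 9\sigma^2\alpha^2.
\]

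Finally, the hypothesis $\alpha \le 1/(6L)$ gives $12L^2 \cdot 9 \alpha^2 \le 3$, so the per-pair bound collapses to $9\sigma^2$. Substituting back yields
\[
\E\|\vec{x}_t - \vec{v}_t^i\|^2 \;\le\; \frac{\alpha^2 h^2}{p^2} \cdot 9\sigma^2 \;\le\; \frac{\alpha^2 (p-1)^2}{p^2} \cdot 9\sigma^2 \;\le\; 9\sigma^2\alpha^2,
\]
closing the induction. The main subtlety, analogous to Lemma~\ref{lem:crashfaultsvariance}, is the bookkeeping: one must be careful that the conditional independence needed to invoke the variance bound really holds (the stochastic gradient noise at iteration $t-1$ is fresh relative to the $\sigma$-algebra that determines $\vec{v}_{t-1}^i$ and $\vec{v}_{t-1}^j$), and that the inductive bound is applied at iteration $t-1$ for \emph{both} nodes $i$ and $j$ uniformly---which is fine since the lemma statement is uniform in the node index. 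Once conditional expectation is managed, the remaining calculation is exactly the arithmetic above.
\end{proofs}
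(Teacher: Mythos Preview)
Your proof is correct and follows essentially the same approach as the paper's: induction on $t$, the three-term splitting of $\tg(\vec{v}_{t-1}^i)-\tg(\vec{v}_{t-1}^j)$ via Cauchy--Schwarz, the variance bound on the outer pieces, smoothness plus the inductive hypothesis on the inner piece, and the final arithmetic $108L^2\alpha^2\le 3$. The only cosmetic difference is that you track the exact count $h=|\mathcal{P}\setminus\mathcal{L}_{t-1}^i|$ where the paper immediately uses the upper bound $p-1$; this changes nothing in the end.
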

\begin{proof}
We prove the lemma using induction on the number of iterations.
Base case holds trivially since we start with 0 vectors.
For the induction step, we assume that the lemma holds for $t$.
We have that 
\begin{align*}
\E\|\vec{v}_{t+1}^i&-\vec{x}_{t+1}\|^2=\E\|\sum_{j \in \mathcal{P} \setminus \mathcal{L}_{t}^i}
\frac{\alpha}{p}(\tg(\vec{v}_{t}^i)-\tg(\vec{v}_{t}^j))\|^2 \\ &\overset{Cauchy-Schwarz}{\le} (p-1)\frac{\alpha^2}{p^2} \sum_{j \in \mathcal{P} \setminus \mathcal{L}_{t}^i} \E\|\tg(\vec{v}_{t}^i)-\tg(\vec{v}_{t}^j) \|^2\\&=\frac{(p-1)\alpha^2}{p^2} \sum_{j \in \mathcal{P} \setminus \mathcal{L}_{t}^i} \E\|\tg(\vec{v}_{t}^i)-\nabla f(\vec{v}_{t}^i)+\nabla f(\vec{v}_{t}^i)-\nabla f(\vec{v}_{t}^j)+\nabla f(\vec{v}_{t}^j)-\tg(\vec{v}_{t}^j) \|^2 \\ &\overset{Cauchy-Schwarz}{\le}
\frac{(p-1)\alpha^2}{p^2} \sum_{j \in \mathcal{P} \setminus \mathcal{L}_{t}^i} \Big(3\E\|\tg(\vec{v}_{t}^i)-\nabla f(\vec{v}_{t}^i)\|^2+3\E\|\nabla f(\vec{v}_{t}^i)-\nabla f(\vec{v}_{t}^j)\|^2 
\\&\quad\quad\quad\quad\quad\quad\quad\quad\quad\quad\quad\quad\quad\quad\quad\quad 
+3\E\|\nabla f(\vec{v}_{t}^j)-\tg(\vec{v}_{t}^j) \|^2\Big)
\\&\overset{(\ref{eqn:variance_is_bounded_assumption_f})}{\le}
\frac{(p-1)\alpha^2}{p^2} \sum_{j \in \mathcal{P} \setminus \mathcal{L}_{t}^i} \Big(6\sigma^2
+3\E\|\nabla f(\vec{v}_{t}^i)-\nabla f(\vec{v}_{t}^j)\|^2\Big) \\&\overset{(\ref{eqn:smooth_gradients_f})}{\le} \frac{(p-1)\alpha^2}{p^2} \sum_{j \in \mathcal{P} \setminus \mathcal{L}_{t}^i} \Big(6\sigma^2
+3L^2\E\|\vec{v}_{t}^i-\vec{x}_t+\vec{x}_t-\vec{v}_{t}^j\|^2\Big) \\ &\overset{Cauchy-Schwarz}{\le}
\frac{(p-1)\alpha^2}{p^2} \sum_{j \in \mathcal{P} \setminus \mathcal{L}_{t}^i} \Big(6\sigma^2
+6L^2\E\|\vec{v}_{t}^i-\vec{x}_t\|^2+6L^2\E\|\vec{x}_t-\vec{v}_{t}^j\|^2\Big) \\ &\le 
\frac{(p-1)\alpha^2}{p^2} \sum_{j \in \mathcal{P} \setminus \mathcal{L}_{t}^i} \Big(6\sigma^2
+108L^2\alpha^2\sigma^2\Big) \le \frac{(p-1)^2\alpha^2}{p^2}\Big(6\sigma^2
+108L^2\alpha^2\sigma^2\Big)\\&\le \alpha^2(6\sigma^2+108L^2\frac{1}{36L^2}\sigma^2)=9\sigma^2\alpha^2.
\end{align*}
Where, in the last steps, we used $p-1 \le p$ and assumption that lemma holds for $t$.
This gives us the proof of the lemma.
\end{proof}
\begin{algorithm}[h]
	\caption{Iteration $t$ at node $i \in \mathcal{P}$, for Elastic Scheduling} \label{alg:elastic}
	Compute $\tg\left(\vec{v}^{i}_t\right)$;~\tcp*[h]{Compute SG using the local view.}\\
	Broadcast $\tg\left(\vec{v}^{i}_t\right)$;~\tcp*[h]{Broadcast SG to all the nodes.}\\
	$\tg\gets0$;~~\tcp*[h]{Prepare to collect 
	stochastic gradients from the nodes.}\\
	\For {$j \in \mathcal{L}_t^i$}{ 
			$\tg \gets \tg + 
				\tg\left(\vec{v}^{j}_t\right)$ ~~\tcp*[h]{Add gradients received at iteration $t$.}
	}					
    \For {$j \in \mathcal{P} \setminus \mathcal{L}_t^i$}{
			$\tg \gets  \tg + \tg(\vec{v}_t^i)$ ~~\tcp*[h]{Add $\tg(\vec{v}_t^i)$ as "substitute" }
	}
	\If {t > 0} {
	    \For {$j \in \mathcal{P} \setminus \mathcal{L}_{t-1}^i$}{
			$\tg \gets  \tg - \tg(\vec{v}_{t-1}^i)+\tg(\vec{v}_{t-1}^j)$ 
			~~\tcp*[h]{correct error from the previous iteration}
	    }
	}
	$\vec{v}^i_{t+1} \gets \vec{v}^i_{t} - \frac{\alpha}{p}\tg$;~\tcp*[h]{
	Update \modl vector.}
\end{algorithm}

\subsection{Shared-Memory Systems}\label{sec:sm}

We consider a shared-memory system with $p$ processors $\mathcal{P} = \{1, 2, \ldots, p\}$ that supports atomic \texttt{read} and \fadd~(\texttt{faa}). The \modl vector $\vec{x} \in \mathbb{R}^d$ is shared by the processes for concurrent lock-free read and write or update. The read/update at each of the indices $\vec{x}[i]$, $1\le i \le d$, of \modl, are atomic.  By design, each process reads as well as writes over an inconsistent snapshot of $\vec{x}$, see~\cite{alistarh2018convergence}. We order the iterations of SGD by the atomic \texttt{faa} over the first index of $\vec{x}$. Note that, iterations by the processes are collectively ordered. Let $q$ be the process, which computes stochastic gradient at iteration $t$, the inconsistent view (due to asynchrony) which is read by $q$ at iteration $t$ is denoted by $\vec{v}_{t}^q$.
See Algorithm \ref{alg:sm} for the formal description.
\begin{algorithm}[h]
	\caption{Iteration $t$, processor $q$.}\label{alg:sm}
	\lFor(\tcp*[h]{Lock-free read.}){$1\le i \le 
	d$}{$\vec{v}_{t}^q[i]\gets\texttt{read}(\vec{x}[i])$}
	Compute $\tg\left(v_t^q\right)$\;
	\lFor(\tcp*[h]{Lock-free update.}){$1\le i \le 
	d$}{$\texttt{faa}(\vec{x}[i],\alpha\tg\left(\vec{v}_{t}^q\right)[i])$}
\end{algorithm}
In this case, the auxiliary variable $\vec{x}_t$ corresponds to the sum of first $t$ stochastic gradients (according to the ordering described above) multiplied by $-\alpha$. Hence, we use the update rule (\ref{eqn:SGDsinglestep}): $\vec{x}_{t+1}=\vec{x}_t-\alpha \tilde G(\vec{v}_t^q)$.

At iteration $t$, let $\tau_{t}^i$ be the delay in the stochastic gradient update at an arbitrary index $1 \le i \le d$, which  essentially means that $\vec{v}_t^q[i]=\vec{x}_{t-\tau_{t}^i}[i]$. Let $\tau_t=\max\{\tau_{t}^1, \tau_{t}^2, ..., \tau_{t}^d \}$. It is standard to assume that  for any $t \ge 0$, $\tau_t$ is upper bounded by $\tau_{max}$ \cite{hogwild,lian2015asynchronous,desa2015hogwild,alistarh2018convergence}. Drawing from the literature of shared-memory distributed computing, considering an iteration as an operation, $\tau_{max}$ is essentially the maximum number of concurrent operations during the lifetime of any operation in the system, which is defined as \textit{interval contention} \cite{afek1999long}. With these specifications in place, Lemma \ref{lemma:smconsistency} shows that the shared-memory asynchronous SGD scheme satisfies elastic consistency with 
$B=\sqrt{d}\tau_{max}M$. 
\begin{lemma}
	\label{lemma:smconsistency}
	Given an asynchronous shared-memory system with maximum delay bound $\tau_{\max}$, we have 
	that for processor $q$ which generates stochastic gradient at iteration $t$.
	$\mathbb{E}\|\vec{x}_t-\vec{v}_t^q \|^2  \le d \tau_{max}^2 \alpha^2 M^2$
\end{lemma}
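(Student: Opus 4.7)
\begin{proofs}[Proof-Sketch of Lemma~\ref{lemma:smconsistency}]
The plan is to exploit the fact that both the read and the update in Algorithm~\ref{alg:sm} are performed \emph{coordinate by coordinate}, and so the discrepancy $\vec{x}_t-\vec{v}_t^q$ decouples across the $d$ dimensions. For each $i\in\{1,\dots,d\}$, the local view satisfies $\vec{v}_t^q[i]=\vec{x}_{t-\tau_t^i}[i]$ with $\tau_t^i\le\tau_{\max}$, so unrolling the single-step update (\ref{eqn:SGDsinglestep}) along that coordinate gives
\[
\vec{x}_t[i]-\vec{v}_t^q[i] \;=\; -\alpha\sum_{s=t-\tau_t^i}^{t-1}\tg(\vec{v}_s^{q_s})[i],
\]
where $q_s$ is the processor that owns iteration $s$ under the (oblivious) global ordering.

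Next I would apply Cauchy--Schwarz at each coordinate, upper bounding $\tau_t^i$ by $\tau_{\max}$ and (if needed) padding the sum with zeros so that all coordinates share the same index range $s=t-\tau_{\max},\dots,t-1$:
\[
\bigl(\vec{x}_t[i]-\vec{v}_t^q[i]\bigr)^2 \;\le\; \alpha^2\,\tau_{\max}\sum_{s=t-\tau_{\max}}^{t-1}\bigl(\tg(\vec{v}_s^{q_s})[i]\bigr)^2.
\]
Taking expectations and invoking the coordinatewise consequence of the second-moment bound~(\ref{eqn:grad_is_bounded_assumption_f}), namely $\mathbb{E}\bigl[(\tg(\cdot)[i])^2\bigr]\le \mathbb{E}\|\tg(\cdot)\|^2\le M^2$ for every $i$, yields $\mathbb{E}\bigl[(\vec{x}_t[i]-\vec{v}_t^q[i])^2\bigr]\le \alpha^2\tau_{\max}^2 M^2$. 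Summing this over the $d$ coordinates produces the claimed bound $d\,\tau_{\max}^2\alpha^2 M^2$.

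The main delicacy is that the indices $\tau_t^i$ and the identities $q_s$ are themselves random, chosen by the scheduler, so a priori one cannot freely move the expectation inside the sum. The resolution is the oblivious-adversary assumption stated in Section~2: the schedule, and hence the $\tau_t^i$ and $q_s$, is independent of the sampling randomness used to draw the stochastic gradients. Consequently one may condition on the schedule, apply the second-moment bound to each $\tg(\vec{v}_s^{q_s})$ term (since previous iterations' samples do not affect the distribution of the sample at iteration $s$), and then take an outer expectation over the schedule, leaving the uniform bound $M^2$ intact.
\end{proofs}
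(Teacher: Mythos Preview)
Your argument is correct and reaches the stated bound, but it proceeds by a different decomposition than the paper's. The paper does not stay at the coordinate level: it first passes to the $\ell_1$ norm, uses the triangle inequality $\|\vec{x}_t-\vec{v}_t^q\|_1\le\sum_{j=1}^{\tau_{\max}}\|\vec{x}_{t-j+1}-\vec{x}_{t-j}\|_1$, converts back to $\ell_2$ via $\|\cdot\|_1\le\sqrt{d}\|\cdot\|_2$ and $\|\cdot\|_2\le\|\cdot\|_1$, and then applies Cauchy--Schwarz to the sum of full step norms $\|\vec{x}_{t-j+1}-\vec{x}_{t-j}\|$, each of which is $\alpha\|\tg(\cdot)\|$ and hence has expected square at most $\alpha^2M^2$. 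The factor $d$ there comes from the $\ell_1$--$\ell_2$ norm conversion. You instead keep the calculation coordinatewise throughout and spend the factor $d$ at the very end by bounding $\E[(\tg[i])^2]\le M^2$ for each of the $d$ coordinates. Both routes are valid and yield the same constant.

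One remark worth making: your decomposition is actually the sharper of the two. Once you have, for every $i$, the pointwise bound $(\vec{x}_t[i]-\vec{v}_t^q[i])^2\le\alpha^2\tau_{\max}\sum_{s=t-\tau_{\max}}^{t-1}(\tg(\vec{v}_s^{q_s})[i])^2$, you may sum over $i$ \emph{before} taking expectations to obtain $\|\vec{x}_t-\vec{v}_t^q\|^2\le\alpha^2\tau_{\max}\sum_{s}\|\tg(\vec{v}_s^{q_s})\|^2$, and then a single application of~(\ref{eqn:grad_is_bounded_assumption_f}) per step gives $\E\|\vec{x}_t-\vec{v}_t^q\|^2\le\alpha^2\tau_{\max}^2M^2$, i.e.\ the bound without the dimension factor $d$. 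The paper's $\ell_1$ detour cannot recover this improvement. Your explicit treatment of the oblivious-adversary conditioning is also a useful clarification that the paper's proof leaves implicit.
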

\begin{proof}
	The $1$-norm of the difference between the inconsistent snapshots 
	$\vec{x}_t$ and $\vec{v}_t$ is bounded as follows:
	\begin{align*}
		&\norm{\vec{x}_t - \vec{v}_{t}^q}_1 \le 
		\sum_{j=1}^{\tau_{max}}\norm{\vec{x}_{t-j+1} - \vec{x}_{t-j}}_1\\
		&\le 
		\sqrt{d}\sum_{j=1}^{\tau_{max}}\norm{\vec{x}_{t-j+1} - 
			\vec{x}_{t-j}}~(\text{as} 
		~\norm{\vec{x}}_1\le\sqrt{d}\norm{\vec{x}},~\mathrm{for}~\vec{x}\in\mathbb{R}^d).%\label{eqn:normgap}
	\end{align*}
Then,
	\begin{align*}
		\| \vec{x}_t - \vec{v}_t^q \|^2 &\le 
	    \| \vec{x}_t - \vec{v}_t 
		\|_1^2~(\mathrm{using} 
		\norm{\vec{x}}\le\norm{\vec{x}}_1,~\mathrm{for}~\vec{x}\in\mathbb{R}^d)\\
		&\le 
		\Big(\sqrt{d}\sum_{j=1}^{\tau_{max}}\|\vec{x}_{t-j+1} - \vec{x}_{t-j}\|
			\Big)^2\\
		&\overset{Cauchy-Schwarz}{\le} 
		d\tau_{max}\sum_{j=1}^{\tau_{max}}\|\vec{x}_{t-j+1} - \vec{x}_{t-j}\|^2.
	\end{align*}

Fix $t$ and $1 \le j \le \tau_{max}$. Recall that $\vec{x}_{t-j+1} - \vec{x}_{t-j}=-\alpha \tg(\vec{v}_{t-j}^{r})$,
for some processor $r$, which computed stochastic gradient at iteration $t-j$.
Hence, 
\begin{equation}
\E\|\vec{x}_{t-j+1} - \vec{x}_{t-j}\|^2 = \alpha^2 \E \|\tg(\vec{v}_{t-j}^{r}) \|^2 \overset{(\ref{eqn:grad_is_bounded_assumption_f})}{\le} \alpha^2M^2.
\end{equation}
Thus, by combining the above two inequalities we get that:
\begin{equation*}
\E\| \vec{x}_t - \vec{v}_t^q \|^2 \le d\tau_{max}\sum_{j=1}^{\tau_{max}}\E\|\vec{x}_{t-j+1} - \vec{x}_{t-j}\|^2 \le d \tau_{max}^2 \alpha^2 M^2.
\end{equation*}

\end{proof}

\iffalse
\begin{remk}\label{remk:convrate}\normalfont
	Lemma \ref{lemma:smconsistency} together with Theorem \ref{cor:t-bound-non-conv} provide an abstraction to obtain the asymptotic convergence rate of asynchronous shared-memory SGD iterations for a smooth non-convex problem. This is the first convergence analysis for asynchronous SGD in this setting without any assumption about the sparsity of stochastic gradients.
%	For example, using a constant learning rate $\alpha=\frac{1}{\sqrt{1+T}}$ would give the asymptotic convergence rate of vanilla SGD after $T>d\tau_{\max}^2$. The proof comes from observing that with this learning rate there is $(T+1)$ in the denominator of the third term in equation (\ref{eq:th:nonconvexconv}), whereas in the numerator the term $d\tau_{\max}^2$ would get introduced via the elastic consistency constant $B$.
\end{remk}

\fi

\subsection{Communication-Efficient Methods}\label{sec:cr}
In this section, we consider a \textit{synchronous} message-passing system of $p$ nodes $\mathcal{P} = \{1, 2, \ldots, p\}$ executing SGD iterations. 
For simplicity, we assume that the system is \emph{synchronous and fault-free}, 
In each iteration nodes  broadcast a \textit{compressed} version of the stochastic gradient, they computed, in order to reduce communication costs. Communication-efficiency can be achieved in two ways: for a computed stochastic gradient, (a) quantization: broadcast a quantized vector that requires fewer bits~\cite{seide2014sgd1bit, QSGD}, or, (b) sparsification: broadcast a sparse vector~\cite{TopK,StichCJ18}. 

%Note that, our message-passing system encompasses both centralized and decentralized settings. In this sense, it provides a more generic setting than previous related works \cite{seide2014sgd1bit, alistarh2016qsgd,TopK,DBLP:conf/nips/StichCJ18,KarimireddyRSJ19}, who consider a centralized distributed system. 

Clearly, this strategy leads to losses in the \modl updates at each iteration. A popular approach to control this error is using \textit{error-feedback}: the accumulated residual error from the previous broadcasts is added to the stochastic gradient at the current iteration before applying quantization or sparsification. We will show that the residual error can be modelled in the context of elastic consistency, and that it stays bounded in the case of popular communication-efficient techniques, which implies their convergence. 

In the description below, we will be using the term \textit{lossy compression} collectively for both quantization and sparsification. A vector $\vec{x}$ which undergoes a lossy compression will be called a \textit{compressed} vector, it's compressed value will be denoted by $Q(\vec{x})$.

\begin{algorithm}[h]
	\caption{Iteration $t$ at a node $i \in \mathcal{P}$}\label{alg:cr}
	Compute $\tg\left(\vec{v}^{i}_t\right)$\;%~\tcp*[h]{Compute the 
	%stochastic 	gradient.}\\
	Compute $\vec{w}^{i}_{t} \gets 
	\vec{\epsilon}^{i}_{t}+\alpha\tg(\vec{v}^{i}_{t})$;~\tcp*[h]{Add the 
	accumulated error to the computed SG.}\\\label{algline:errfeed}			
	Compute and Broadcast $Q(\vec{w}^{i}_{t})$;~\tcp*[h]{Apply lossy 
	compression and broadcast.}\\\label{algline:brdcast}%~Broadcast $M^i$\;
	Compute 
	$\vec{\epsilon}^i_{t+1}\gets\vec{w}^{i}_{t}-Q(\vec{w}^{i}_{t})$;~\tcp*[h]{Update
	 the accumulated error.}\\\label{algline:erradd}			
	$\tg \gets0$\;%~\tcp*[h]{Prepare to collect stochastic gradients from 	
	%workers.}\\			
	\For{each 
	$j \in \mathcal{P}$ (including $i$)} {
		Receive $Q(\vec{w}^{j}_{t})$;~$\tg \gets \tg + Q(\vec{w}^{j}_{t})$\;
	}	
	$\vec{v}^i_{t+1} \gets \vec{v}^i_{t} - 
	\frac{\tg}{p}$;~\tcp*[h]{Update \modl 
	vector.}\\\label{line:cr-update}
\end{algorithm}
\

A typical iteration at a node $i$ is shown in Algorithm \ref{alg:cr}. The algorithm works as follows. Each node $i$ maintains a local model $\vec{v}_t^i$, and a local error accumulation $\epsilon_t^i$, starting from $\vec{v}_t^i = \vec{0}^d = \vec{\epsilon}_t^i$. 

At each iteration, a node $i$ computes a stochastic gradient with respect to its local view $\vec{v}_t^i$, adds to it the accumulated error from the previous iterations, applies a lossy compression function $Q$ to the result, and broadcasts to the \modl servers. The error accumulation is updated to reflect the lossy compression, see Line \ref{algline:erradd}.
The compression $Q$ provably satisfies the following:
\begin{equation}\label{eqn:lossy-commpression} \|Q(\vec{w})-w\|^2 \le 
\gamma\|\vec{w}\|^2,~\text{$\forall \vec{w} \in \R^d$, and  $0\le\gamma<1$}.
\end{equation}
In the above inequality parameter $c$, depends on the compression scheme that we use.

The error-feedback strategy particularly helps in asymptotically compensating 
for the biased stochastic gradient updates. However, if the compression scheme 
is unbiased, e.g. QSGD \cite{QSGD}, the convergence theory works even without 
the error-feedback. For such a method, line \ref{algline:errfeed} in Algorithm 
\ref{alg:cr} changes to $\vec{w}^i_{t} \gets \alpha\tg(\vec{v}^i_{t})$; all other 
steps remain unchanged. Our discussion in this sub-section focuses on methods 
with error-feedback.

The local view of \modl $\vec{v}_t^i$ is inconsistent in the sense that it is updated with compressed stochastic gradients. To model this in the context of elastic consistency, we define the auxiliary parameter $\vec{x}_t$, 
as sum of all stochastic gradients generated by the algorithm up to and excluding iteration $t$, multiplied by $-\frac{\alpha}{p}$.
Hence, we use the update rule (\ref{eqn:SGDmultiplesteps}) with $I_t=\{1,2,...,p\}=\mathcal{P}$:
\begin{equation}
\vec{x}_{t+1}=\vec{x}_t-\frac{\alpha}{p}\sum_{i \in \mathcal{P}} \tg(\vec{v}_t^i).
\end{equation}
The local view of each node $q \in P$ is updated as:
\begin{equation}
\vec{v}_{t+1}^q=\vec{v}_t^q-\frac{1}{p}\sum_{i \in \mathcal{P}} Q(\alpha\tg(\vec{v}_t^i)+\epsilon_t^i).
\end{equation}

% Furthermore, $\vec{x}_t^i - \vec{x}_{(t-1)}^i = \vec{v}_t^i - \vec{v}_{(t-1)}^i$. 
Using induction on the number of iterations, it is easy to show that for any $t \ge 0$ and node $q$:
\begin{equation} \label{eqn:xminusv}
\vec{v}_t^q-\vec{x}^t=\frac{1}{p}\sum_{i \in \mathcal P} \epsilon_t^i.
\end{equation}

Thus, using the above equation we can derive the elastic consistency bounds.
\begin{lemma}
For any node $q$ and iteration $t \ge 0$. We have that 
\begin{equation}
\E\|\vec{x}_t-\vec{v}_t^q\|^2 \le \frac{(2-\gamma)\gamma M^2\alpha^2}{(1-\gamma)^3}.
\end{equation}
\end{lemma}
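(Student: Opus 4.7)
The plan is to start from the identity (\ref{eqn:xminusv}), which already decouples the deviation $\vec{v}_t^q - \vec{x}_t$ from the identity of the querying node $q$: it equals $\frac{1}{p}\sum_{i \in \mathcal{P}} \epsilon_t^i$. Applying Cauchy--Schwarz gives
\[
\E\|\vec{v}_t^q - \vec{x}_t\|^2 \;=\; \tfrac{1}{p^2}\E\Bigl\|\sum_{i\in\mathcal{P}} \epsilon_t^i\Bigr\|^2 \;\le\; \tfrac{1}{p}\sum_{i\in\mathcal{P}} \E\|\epsilon_t^i\|^2,
\]
so it suffices to prove the per-node error bound $\E\|\epsilon_t^i\|^2 \le \frac{(2-\gamma)\gamma M^2 \alpha^2}{(1-\gamma)^3}$ uniformly in $t$ and $i$.

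To do this, I would derive a one-step recursion on $\E\|\epsilon_t^i\|^2$. The update rule gives $\epsilon_{t+1}^i = \vec{w}_t^i - Q(\vec{w}_t^i)$ with $\vec{w}_t^i = \epsilon_t^i + \alpha \tg(\vec{v}_t^i)$, so by the compression property (\ref{eqn:lossy-commpression}),
\[
\|\epsilon_{t+1}^i\|^2 \;\le\; \gamma\,\|\epsilon_t^i + \alpha\tg(\vec{v}_t^i)\|^2.
\]
Now apply Young's inequality $\|a+b\|^2 \le (1+\eta)\|a\|^2 + (1+\eta^{-1})\|b\|^2$ with the specific choice $\eta = 1-\gamma$, which yields $1+\eta = 2-\gamma$ and $1+\eta^{-1} = (2-\gamma)/(1-\gamma)$. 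Taking expectations and using the second-moment bound (\ref{eqn:grad_is_bounded_assumption_f}) to replace $\E\|\tg(\vec{v}_t^i)\|^2$ by $M^2$ produces
\[
\E\|\epsilon_{t+1}^i\|^2 \;\le\; \gamma(2-\gamma)\,\E\|\epsilon_t^i\|^2 \;+\; \frac{\gamma(2-\gamma)}{1-\gamma}\alpha^2 M^2.
\]

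The crucial observation — and the reason for picking $\eta = 1-\gamma$ — is the identity $1 - \gamma(2-\gamma) = (1-\gamma)^2$, which is strictly positive for any $\gamma \in [0,1)$; hence the recursion is contractive for all admissible compressors. Starting from $\epsilon_0^i = \vec{0}$ and summing the resulting geometric series yields
\[
\E\|\epsilon_t^i\|^2 \;\le\; \frac{\gamma(2-\gamma)\alpha^2 M^2/(1-\gamma)}{1 - \gamma(2-\gamma)} \;=\; \frac{\gamma(2-\gamma)\alpha^2 M^2}{(1-\gamma)^3},
\]
which, combined with the reduction from the first paragraph, gives the claimed bound. The only delicate step is identifying the right Young's parameter $\eta = 1-\gamma$; the other choices that come to mind (e.g.\ $\eta$ a constant, or $\eta$ minimizing the contraction rate) yield the correct order of magnitude but not the constant $(2-\gamma)/(1-\gamma)^3$ stated in the lemma. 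Everything else is routine: a triangle-type inequality on the sum of per-node errors, the compression contract, Young, the second-moment bound, and a geometric sum.
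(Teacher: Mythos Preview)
Your proof is correct and follows essentially the same approach as the paper: both use the identity~(\ref{eqn:xminusv}) together with Cauchy--Schwarz to reduce to a per-node error bound, then establish the recursion via the compression contract~(\ref{eqn:lossy-commpression}) and Young's inequality with the same parameter $\eta = 1-\gamma$, and finally close with the second-moment bound and a geometric-series/induction argument. The only cosmetic differences are that the paper tracks the aggregate $\sum_{i\in\mathcal{P}}\E\|\epsilon_t^i\|^2$ rather than the per-node quantity, and phrases the last step as induction rather than summing the geometric series explicitly.
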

\begin{proof}

we start with
\begin{align*}
\sum_{i \in \mathcal{P}} \|\epsilon_{t+1}^i\|^2 &= \|Q(\alpha\tg(\vec{v}_t^i)+\epsilon_t^i)- (\alpha\tg(\vec{v}_t^i)+\epsilon_t^i)\|^2 \overset{(\ref{eqn:lossy-commpression})}{\le} \gamma \sum_{i \in \mathcal{P}} \|\alpha\tg(\vec{v}_t^i)+\epsilon_t^i\|^2 \\&\overset{Young}{\le} (1+1-\gamma)\gamma 
\sum_{i \in \mathcal{P}} \|\epsilon_{t}^i\|^2+(1+\frac{1}{1-\gamma})\alpha^2\gamma\sum_{i \in \mathcal{P}}
\|\tg(\vec{v}_t^i)\|^2
\end{align*}
\end{proof}
Next, using induction on the number of iterations, we upper bound $\sum_{i \in \mathcal{P}} \E\|\epsilon_{t}^i\|^2$ by $\frac{(2-\gamma)\gamma M^2\alpha^2p}{(1-\gamma)^3}$.
\\The base case holds trivially. For the induction step we assume that 
the statement holds for $t$. We get that:
\begin{align*}
\sum_{i \in \mathcal{P}} &\E\|\epsilon_{t+1}^i\|^2 {\le} (1+1-\gamma)\gamma 
\sum_{i \in \mathcal{P}} \E\|\epsilon_{t}^i\|^2+(1+\frac{1}{1-\gamma})\alpha^2\gamma\sum_{i \in \mathcal{P}}
\E\|\tg(\vec{v}_t^i)\|^2 \\&\overset{(\ref{eqn:grad_is_bounded_assumption_f})}{\le}(1+1-\gamma)\gamma 
\sum_{i \in \mathcal{P}} \E\|\epsilon_{t}^i\|^2+(1+\frac{1}{1-\gamma})\alpha^2\gamma M^2p \\ &\le \gamma\alpha^2M^2p\Bigg(\frac{(2-\gamma)^2\gamma}{(1-\gamma)^3}+1+\frac{1}{1-\gamma} \Bigg)=\frac{(2-\gamma)\gamma M^2\alpha^2p}{(1-\gamma)^3}.
\end{align*}
This gives us that 
\begin{align*}
\E\|\vec{x}_t-\vec{v}_t^q\|^2\overset{(\ref{eqn:xminusv})}{=}\E\|\frac{1}{p}\sum_{i \in \mathcal P} \epsilon_t^i\|^2\overset{Cauchy-Schwarz}{\le} \frac{1}{p} \sum_{i \in \mathcal{P}} \E\|\epsilon_{t}^i\|^2 \le \frac{(2-\gamma)\gamma M^2\alpha^2}{(1-\gamma)^3}.
\end{align*}

In the following, we show that TopK and One-Bit quantization schemes satisfy elastic consistency bounds.

\paragraph{TopK quantization}
The \textit{TopK} algorithm~\cite{strom2015scalable} presents a sparsification 
scheme for the stochastic gradients. Essentially, we select the top 
$K$ of the indices of $\vec{w}$ sorted by their absolute value. Because $d-K$ 
indices of a vector, which are not the top ones by their absolute value, are 
discarded in this method, it clearly satisfies inequality 
(\ref{eqn:lossy-commpression}) for $\gamma=\frac{d-K}{d}$. 
Hence, TopK quantization satisfies elastic consistency bound with constant
$B=\sqrt{\frac{(1-K/d)(1+K/d)}{(K/d)^3}}M=\sqrt{\frac{d}{K}(\frac{d^2}{K^2}-1)}M$.

%The exact specification of these methods and their elastic consistency bounds are presented in the Appendix. 
%We state the result here:
%
%\begin{lemma}
%\label{lem:quantization}
%    One-bit SGD quantization satisfies elastic consistency with bound $B = 2M$. Top-K quantization satisfies elastic consistency with bound $B = M \frac{d - K }{ K}$, where $d$ is the dimension and $K$ is the number of selected components for transmission. 
%\end{lemma}

%In the case of One-bit SGD, these are the first known convergence guarantees. Notice that these bounds suggest that this convergence in TopK is dominated by error accumulation, which we observe in practice: Figure~\ref{fig:test}. 

\paragraph{One-Bit quantization} 
The one-bit SGD quantization was first described in~\cite{seide2014sgd1bit}. 
% In one bit SGD,
%a running total of an error vector is stored, and at each iteration, a 
%stochastic gradient
%is evaluated at $\vec{x}_t$, added to this running total, then quantized in 
%the following manner.
We use the notation $[\vec{x}]_i$ to denote the $i$'th component of $\vec{x}$.
Consider the vector $\vec{w}$ and let $S^+(\vec{w})$ be its index of positive 
components and $S^-(\vec{w})$ that of its negative components,
i.e., $S^+(\vec{w}) = \{i:\, [\vec{w}]_i\ge 0\}$ and $S^-(\vec{w}) = \{i:\, 
[\vec{w}]_i<0\}$. Then let $\bar{w}^+=\frac{1}{|S^+(\vec{w})|}
\sum_{i\in S^+(\vec{w})} [\vec{w}]_i$ and $\bar{w}^-=\frac{1}{|S^-(\vec{w})|}
\sum_{i\in S^-(\vec{w})} [\vec{w}]_i$.
Now define the one-bit quantization operation $Q(\cdot)$ as,
\begin{equation}
\label{eqn:onebit}
[Q(\vec{w})]_i = \left\{ \begin{array}{lr} \bar{w}^+ & \text{ for }i\in 
S^+(\vec{w}). \\
\bar{w}^- & \text{ for }i\in S^-(\vec{w}).\end{array}\right.
\end{equation}
It  easy to verify that one-bit quantization satisfies inequality 
(\ref{eqn:lossy-commpression}) for $\gamma=1-\frac{1}{d}$. Hence, One-bit quantization
satisfies elastic consistency bounds with $B=\sqrt{d(d^2-1)}M$.

\section{Additional Experiments}
\begin{figure}[ht]
    \centering
    \begin{minipage}[b]{0.45\linewidth}
        \centering
        \includegraphics[width=7cm]{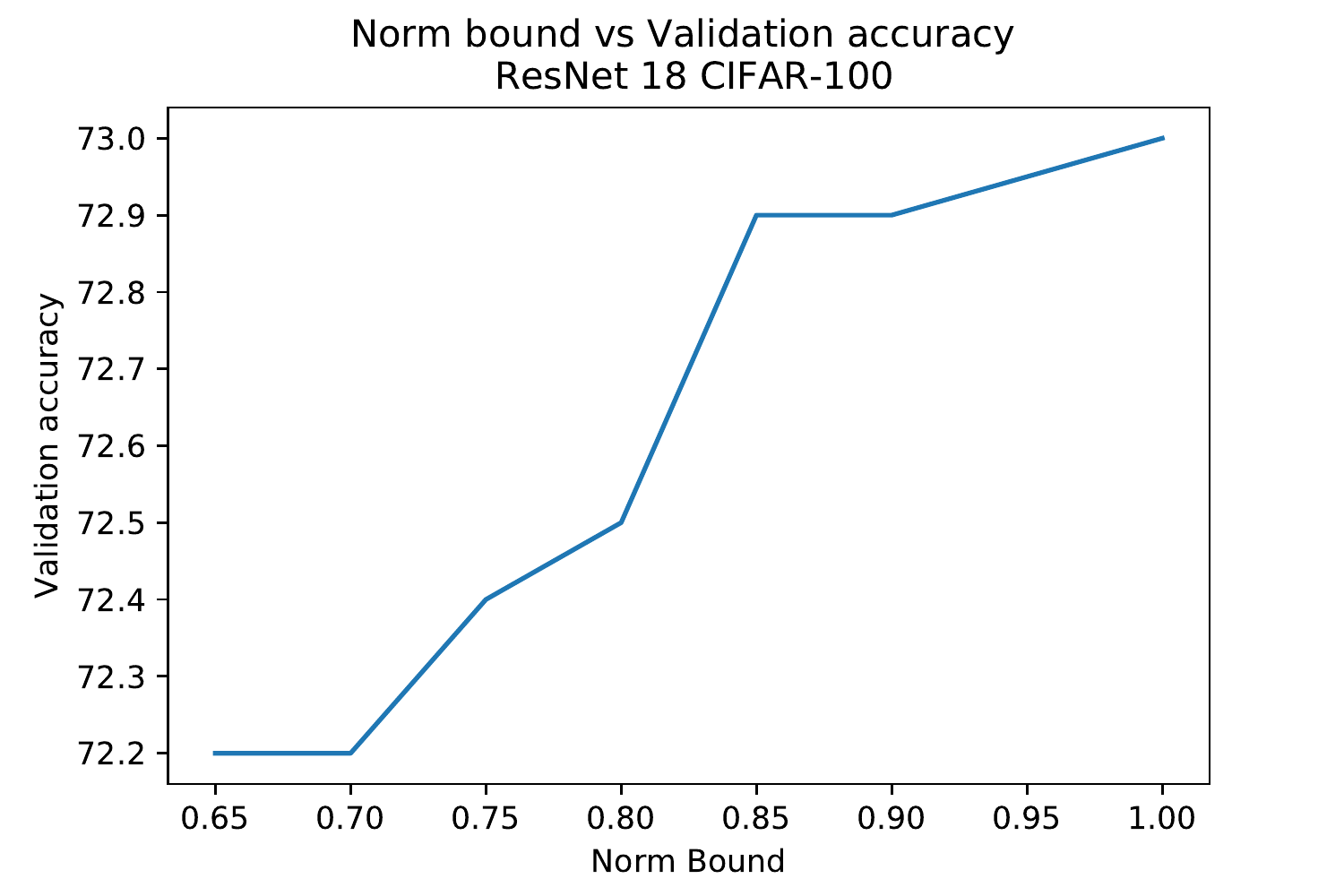}
    \end{minipage}
    \quad
    \begin{minipage}[b]{0.45\linewidth}
        \centering
        \includegraphics[width=7cm]{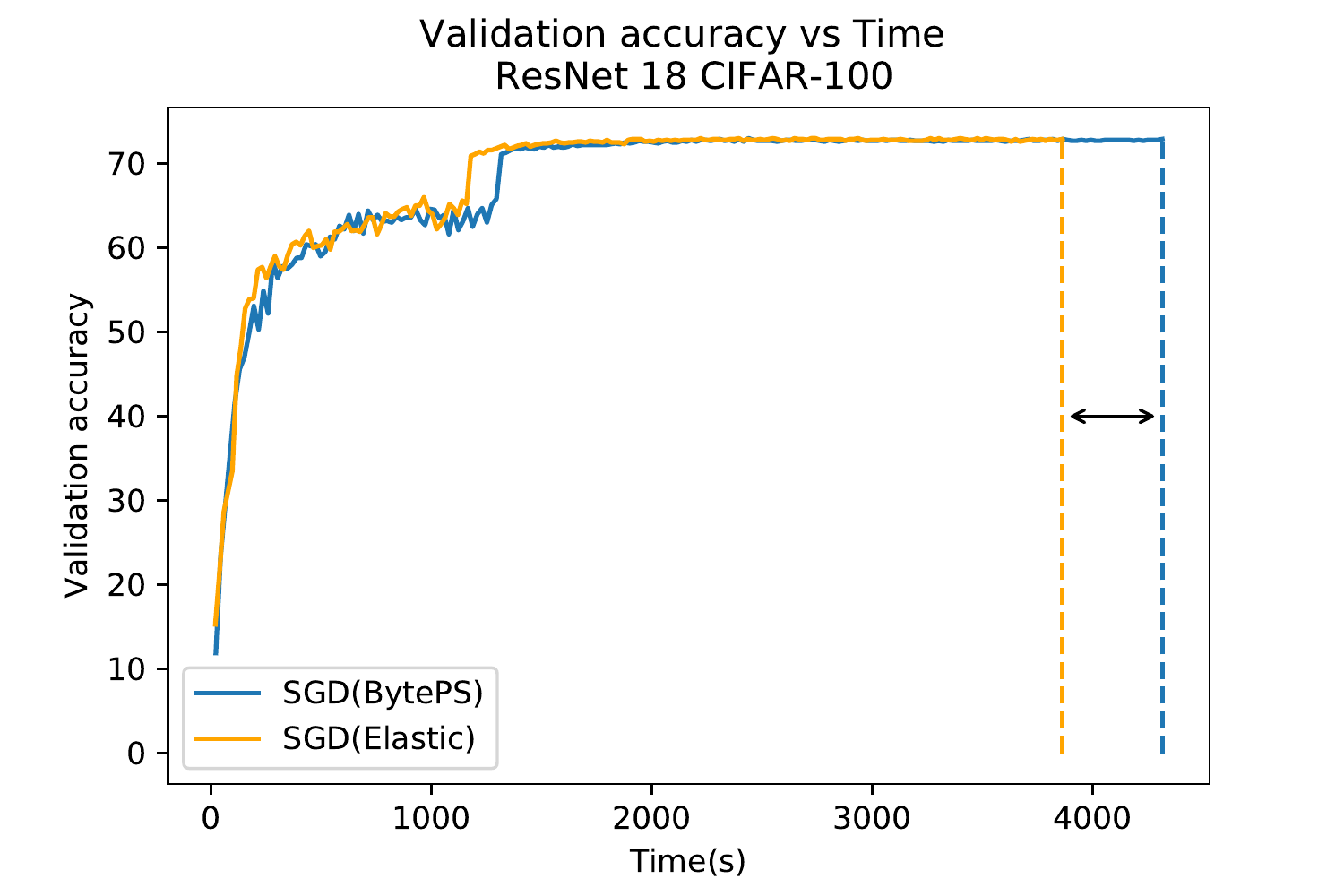}

    \end{minipage}
    \caption{Elastic bound-v-accuracy (\textbf{left}) and accuracy-v-time (\textbf{right}) for RN18 on CIFAR-100.}
    \label{fig:exp1}
\end{figure}

\begin{figure}[ht]
    \centering
    \begin{minipage}[b]{0.45\linewidth}
        \centering
        \includegraphics[width=7cm]{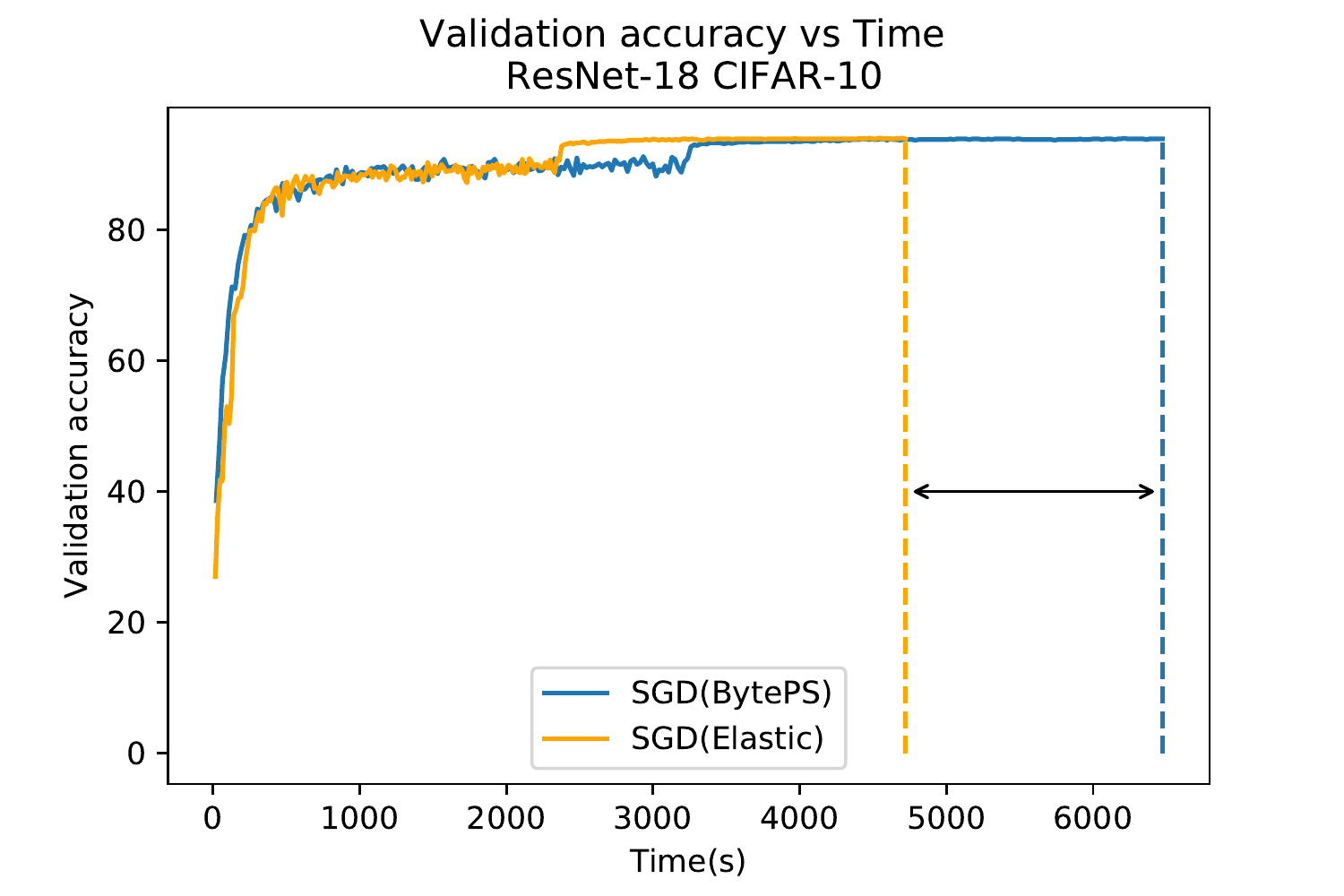}
    \end{minipage}
    \quad
    \begin{minipage}[b]{0.45\linewidth}
        \centering
        \includegraphics[width=7cm]{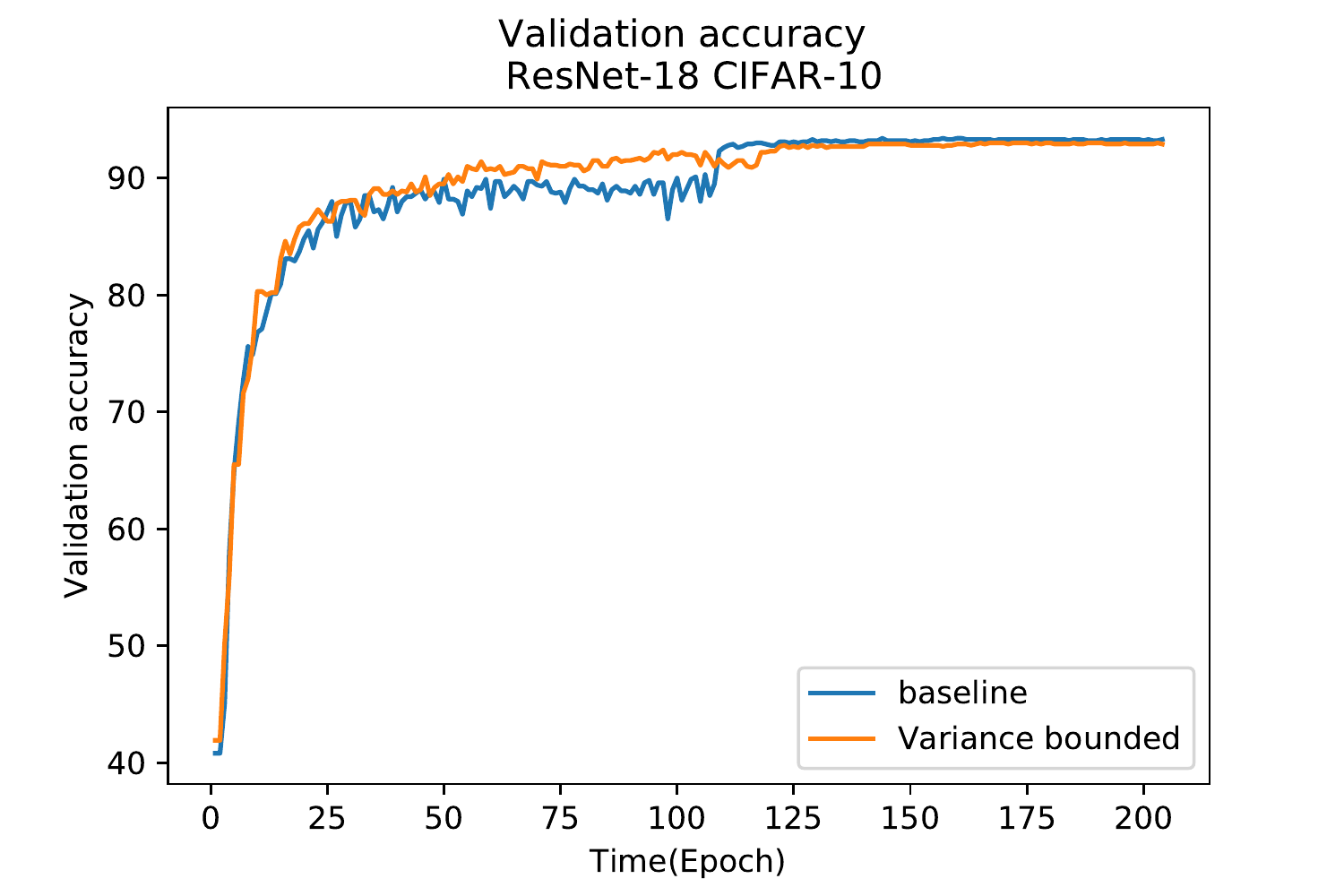}
    \end{minipage}
    \caption{Accuracy-v-time for norm-bounded scheduler (\textbf{left}) and Accuracy-v-epoch for variance-bounded (\textbf{right}) scheduler for RN18 on CIFAR-10.}
    \label{fig:exp2}
\end{figure}

We empirically validate our approach on data-parallel GPU training of deep neural networks.

\paragraph{Setup.} We performed experiments on 2 AWS EC2 P3.2xlarge instances each with a V100 GPU. For reproducibility, we controlled network latency via the \textit{tc} utility from the iproute2 network utilities collection. In all experiments, network latency was set to 5ms with 0.2ms jitter sampled from a normal distribution, which we found to be realistic values.

\paragraph{Implementation.}
We implemented the elastic scheduler on top of the Horovod \cite{sergeev2018horovod} and BytePS\cite{byteps} distributed frameworks. This allows us compare our approach to the BytePS baseline, which implements a so-called \emph{cross-barrier} strategy~\cite{byteps}, which can reorder/prioritize packets as long as perfect consistency is preserved.
As described, the idea of our implementation is tracking the norm ratio of the synchronized parts of the model and starting the next forward call of a layer in case if one of two conditions is satisfied: either the layer is synchronized, or the ratio reaches the $\beta$ threshold.

\paragraph{Accuracy and Speedup.}
Figure~\ref{fig:exp} and Figure~\ref{fig:exp1} show the validation accuracy for WideResnet28x8~\cite{zagoruyko2016wide} and ResNet18~\cite{he2016deep} on the CIFAR100 dataset \cite{krizhevsky2009learning}, respectively, for the \emph{norm-bounded} scheduler. Both models were trained with standard hyperparameters: minibatch size 128, base learning rate 0.1, decaying at 60th, 120th and 160th epochs, for a total of 200 epochs.

Figure~\ref{fig:exp2}(left) gives the results for ResNet18 on CIFAR-10 where we can see the full accuracy recovery for the norm-bounded scheduler with $\beta=0.1$ and $\sim 30\%$ speedup.

Finally, we examine the convergence of the \emph{variance-bounded} elastic scheduler for ResNet18 on CIFAR-10. Results are given on the Figure~\ref{fig:exp2}(right).
The CIFAR-10 experiments were conducted with 256 minibatch size, 0.1 base learning rate. Variance-bounded experiments were run without momentum, while all other experiments had momentum equal to 0.9.

\end{document}